\definecolor{indigo}{rgb}{0.2, 0.0, 0.45}
\definecolor{yellow}{rgb}{0.91, 0.84, 0.42}
\definecolor{blue}{rgb}{0.6, 0.73, 0.89}
\definecolor{red}{rgb}{0.8, 0.25, 0.33}
\definecolor{green}{rgb}{0.7, 0.75, 0.71}
\definecolor{orange}{rgb}{1.0, 0.6, 0.4}
\definecolor{cream}{rgb}{1.0, 0.99, 0.82}
\definecolor{Gray}{gray}{0.9}
\definecolor{green}{rgb}{0.6, 0.73, 0.89}
\newcommand{\cut}[1]{}
\theoremstyle{plain}
\newtheorem{theorem}{Theorem}[section]
\theoremstyle{definition}
\newtheorem{lemma}[theorem]{Lemma}
\newtheorem{corollary}[theorem]{Corollary}
\newtheorem{definition}[theorem]{Definition}
\theoremstyle{remark}
\DeclareMathOperator*{\argmin}{arg\,min}
\author{%
  William Toner\\
  University of Edinburgh\\
  \texttt{w.j.toner@sms.ed.ac.uk} \\
   \And
   Amos Storkey \\
   University of Edinburgh\\
   \texttt{a.storkey@ed.ac.uk} \\
}
\begin{document}
\pagestyle{plain}
\twocolumn[
\aistatstitle{Label Noise: Correcting the Forward-Correction}
\aistatsauthor{ William Toner
\And Amos Storkey}
\aistatsaddress{University of Edinburgh \\
  \texttt{w.j.toner@sms.ed.ac.uk}  \And  University of Edinburgh\\
   \texttt{a.storkey@ed.ac.uk}}]
\begin{abstract}
Training neural network classifiers on datasets with label noise poses a risk of overfitting them to the noisy labels. To address this issue, researchers have explored alternative loss functions that aim to be more robust. The `forward-correction' is a popular approach wherein the model outputs are noised before being evaluated against noisy data. When the true noise model is known, applying the forward-correction guarantees consistency of the learning algorithm. While providing some benefit, the correction is insufficient to prevent overfitting to finite noisy datasets. In this work, we propose an approach to tackling overfitting caused by label noise.
We observe that the presence of label noise implies a lower bound on the noisy generalised risk. Motivated by this observation, we propose imposing a lower bound on the training loss to mitigate overfitting. 
Our main contribution is providing theoretical insights that allow us to approximate the lower bound given only an estimate of the average noise rate. We empirically demonstrate that using this bound significantly enhances robustness in various settings, with virtually no additional computational cost.
\end{abstract}
\vspace{-2mm}


\section{Introduction}
Over the last decade, we have seen an enormous improvement in the efficacy of machine learning methods for classification. Correspondingly, there has been an increased need for large labelled datasets to train these models. However, obtaining cleanly labelled datasets at the scale and quantity needed for industrial machine learning can be prohibitively expensive. For this reason, practitioners commonly rely on approaches which yield large datasets but contain high label noise. Examples include web querying or crowd-sourcing systems. Even standard dataset collection methods are susceptible to noise introduced by fallible human labellers. This is especially true when data are hard to label or labelling requires a specialist background (e.g., medical imaging). Such issues have led to immense interest in designing machine learning methods which can learn within environments characterised by noisy labels. 

Most approaches for addressing the label noise problem consist of a mechanism for either removing or compensating for it. Unfortunately, many of these methods are elaborate or require pipelines involving multiple networks and stages \citep{dividemix, coteaching, mentornet, decoupling, meta_dynamic, meta_gradient,EvidenceMix}. This complexity damages their applicability in settings with user limitations on time, technical expertise or computational resources. 

A simpler style of approach designs methods to be inherently resilient in the face of corrupted labels. The most prominent family of such methods is \textit{robust loss functions}. Here the goal is to choose an objective function which allows training in the presence of noise without harming the generality of the learned classifier. An advantage of these methods is their simplicity, as a robust loss can be easily implemented with minimal computational overhead. Regularisation and consistency-based approaches modify a loss with model-dependent terms or data cross-terms, to restrict the network and thus avoid overfitting \citep{mixup, elr, gjs, bootstrap}. Other methods alter the cross-entropy objective to be less inclined to fit to noise \citep{sce, GCE_Loss, normalised_losses}. Losses of this type are usually empirically rather than theoretically motivated, meaning the reasons for their robustness are rarely fully understood. 
 
\emph{Loss correction methods} represent a more principled class of robust losses, using data to infer and subtract the impact of label noise from the training objective. Forward-corrections apply noise to the model predictions before evaluating these noised predictions against noisy data, whereas backward-corrections de-noise the noisy labels \citep{fprop}. Despite satisfying slightly weaker theoretical properties than the backward-correction \citep{unhinged}, forward-corrections generally perform slightly better, are easier to implement, and are more frequently used \citep{fprop, fprop_old, noiseAdapt, larsen, mnihHinton, anchor}. Nevertheless forward-corrections are still susceptible to overfitting on small noisy datasets.

In this paper, we tackle the challenge of overfitting in popular robust losses, taking a particular focus on forward-corrected loss functions due to their prevalence and importance. We introduce a principled solution: bounding the allowable loss during training by recognising that the presence of label noise means the generalised noisy risk is lower bounded. The critical contribution of this paper is explicitly deriving these bounds and showing that their implementation improves robustness. In addition, we provide a deeper understanding of existing robust loss functions, unifying correction losses with several other popular heuristic losses into a single family.

\paragraph{Key Idea:} When a distribution contains label noise, this implies there is a minimum achievable (noisy) risk. Current methods do not respect this bound, targeting a minimal training loss instead, causing overfitting. By bounding the loss below during training, one may prevent this. 

\subsection{Contributions} This work comprises two contributions:
\textbf{(a)} a smaller contribution is to \textbf{generalise the forward-correction} to include non-linear models, demonstrating that some popular heuristic robust loss functions (such as GCE and SCE) are, in fact, disguised forward-correction loss functions. This improves the generality of the main result where we: \textbf{(b)} show how overfitting can be avoided by ensuring the training loss remains above a certain threshold. We refer to augmenting a loss function in this manner as a \textbf{`bounded-loss'}. The crucial insight of this work is that, when labels are noisy, no model can achieve a loss below the average entropy of the noisy class posterior distribution. Under a separability assumption, the noisy entropy depends only on the noise model, and can thus be crudely estimated when only the average noise rate is known. This estimate, called the `noise-bound', is chosen as our threshold for the bounded loss. We demonstrate empirically that this significantly enhances robustness across a broad range of settings.

\subsection{Notation and Preliminaries}  
We adhere to standard notational and terminological conventions consistent with related work. Key notations are defined below, while a full summary together with a comprehensive review of \textbf{related work} may be found in the Appendix.

\paragraph*{Domains} $\mathcal{X} \subset \mathbb{R}^d$ represents the dataspace, and $\mathcal{Y} \coloneqq \{1,2,\ldots, c\}$ denotes the label space with $c$ the total number of classes. The probability simplex, denoted by $\Delta$, consists of $c$-dimensional vectors whose non-negative components sum to one. Vector quantities are presented in \textbf{bold}.

\paragraph*{Loss Function} A loss function $L: \Delta \times \mathcal{Y} \rightarrow \mathbb{R}$ measures the discrepancy between predicted and actual label distributions, resulting in a loss value. The loss across all classes for a prediction $\bm{q}$ is expressed as:
\begin{align*}
    \bm{L}(\bm{q}) = (L(\bm{q},1), L(\bm{q},2), \ldots, L(\bm{q},c)).
\end{align*}

\paragraph*{Expected Loss} The \emph{expected loss} for a forecast $\bm{q}$ relative to the true class distribution $\bm{p}$ is the average loss over labels sampled from $\bm{p}$, denoted by $H_{L}(\bm{p}, \bm{q})$ and calculated via:
\[
    H_{L}(\bm{p}, \bm{q}) = \bm{p}^T \bm{L}(\bm{q}).
\]

\paragraph*{Proper Losses} A loss $L$ is considered (strictly) proper if the expected loss is (uniquely) minimised by the true label distribution, i.e., $\bm{q} \coloneqq \bm{p}$.

\paragraph*{$L$-Risk} The (generalised) $L$-risk, representing the expected loss over the \textit{entire} data distribution $p(x,y)$, evaluates the overall efficacy of $\bm{q}(x)$:
\begin{align*}
    R_{L}(\bm{q}) \coloneqq \mathbb{E}_{x \sim p(x)}[H_{L}(\bm{p}(y | x), \bm{q}(x))].
\end{align*}

We call the risk computed with respect to an i.i.d dataset drawn from $p(x,y)$, the \emph{empirical} risk which we denote $\widehat{R}_L$.

\paragraph{Label Noise} Label noise encompasses any random process that modifies the labels from the data-label distribution $p(x,y)$. To differentiate between the original (clean) and altered (noisy) labels, we use a tilde notation, such as $\widetilde{p}(x,\widetilde{y})$. A detailed taxonomy of label noise types is provided in the Appendix.

\paragraph{Noisy Risk} Label noise introduces the concept of \emph{noisy} generalised risk, denoted by $R^{\eta}_{L}$. This risk is computed with respect to the noisy label distribution.

\paragraph*{Noise Rate} The \emph{noise rate} at any $x \in \mathcal{X}$, defined as $\eta(x) \coloneqq p(\widetilde{y} \neq y \mid x)$, reflects the likelihood of label alteration by noise at that point. The \emph{average noise rate} across the dataspace is $\eta \coloneqq p(\widetilde{y} \neq y)$.

\paragraph{Problem Statement} The objective of robust loss methods is to construct a loss function $L$ such that minimising the noisy empirical $L$-risk leads to effective generalisation to the clean data distribution.

\section{Generalised Forward-Corrections}
\subsection{Robust Loss Functions}\label{sec:robustloss}
Label noise robust loss functions can be categorised into two broad sets: regularisation-based robust losses and correction-based losses.
\paragraph*{Regularisation-Based Losses} A popular approach to tackling label noise by selecting losses less prone to fit the entire training set than the standard cross-entropy (CE). An archetypal example of such a loss is a mean absolute error (MAE) $(L_{MAE}(\bm{q}, y=k) = 1-q_k)$.  MAE will typically ignore the harder-to-fit samples; on noisy datasets, this often corresponds to those with corrupted labels. The downside is that MAE dramatically underfits on datasets with many classes \citep{normalised_losses}. Alternative losses mitigate this underfitting by interpolating between CE and MAE to avoid both of their pitfalls. Two well-known examples are the Generalised Cross-Entropy (GCE) and Symmetric Cross-Entropy (SCE) defined $L_{GCE}(\bm{q}, y=k) := \frac{1-q_k^a}{a} \label{Equation:gce}$ and $L_{SCE}(\bm{q}, y=k) = -log(q_k) +A(1-q_k)$ respectively \citep{GCE_Loss, sce}. By varying the parameters $a,A$, we can alter the losses' behaviour from being more like CE to MAE.

\paragraph*{Correction-based Losses} Correction-based loss functions arise as an alternative, motivated by the observation that under label noise, the empirical risk ceases to be an effective proxy for the generalised clean risk \citep{old_backward}. By altering the loss through incorporating the noise model, one may ensure the corrective property that
\begin{align*}
    \argmin_{\bm{q}} R^{\eta}_{L_{F}}(\bm{q}) = \argmin_{\bm{q}} R_{L}(\bm{q}).
\end{align*}
Thus ensuring that minimising the noisy generalised risk aligns with mnimising the generalised clean risk. 
A popular and effective method is the \textit{forward-correction} \citep{fprop}. Given a \emph{base loss} $L$, the forward-correction of $L$ is defined 
\begin{align}\label{def:f_corr}
    L_{F}(\bm{q}, k) \coloneqq L(\widehat{T}\bm{q}, k),
\end{align}
 where $\widehat{T}$ is a column stochastic matrix approximating the true transition matrix $T\coloneq p(\widetilde{y}\vert y)$ for class-conditional label noise.
 Conversely, we say that a loss function $L$ is a \emph{forward-corrected loss} if it is the forward-correction of some other loss function. 

\subsection{Non-Linear Noise Models}\label{sec:non-linear-noise-models}
In the class-conditional label noise framework \citep{angluin1988}, a label noise model is defined by a column stochastic matrix $T$, which represents the transition probabilities $p(\widetilde{y}\mid y)$. This traditional formulation assumes that the noisy label depends on an unobserved clean label, which might not reflect real-world scenarios where the noisy annotator never sees the clean label.

An alternative conceptualisation of class-conditional label noise considers the transition matrix as merely a tool to link noisy and clean class distributions. For example, if the true class posterior at $x$ is $\bm{p}(y\mid x)$, then according to a class-conditional label noise model, the distribution observed by the noisy labeller is given by $T\bm{p}(y\mid x)$. This approach does not presume dependence on a specific true label, but rather describes how noisy and clean label distributions are related.
 
However, this perspective raises questions about the necessity of assuming a linear relationship between these distributions. It is plausible that a labeller might make fewer errors on instances that are clearly representative of their class, and many more errors when the class distribution $\bm{p}(y\mid x)$ is more evenly distributed. This scenario suggests a non-linear noise model:
\begin{align*}
    \widetilde{\bm{p}}(y\mid x) \coloneq f(\bm{p}(y\mid x)),
\end{align*}
where $f:\Delta\rightarrow\Delta$ is some (possibly non-linear) transformation on this simplex - which for simplicity we will limit to being injective.
 
It is straightforward to generalise the forward-correction to allow for $\widehat{T}$ in Equation~\ref{def:f_corr} being a non-linear transformation $f$. 
\begin{definition}[Generalised Forward-Correction]\label{def:gen_corr}
 Let $L_f$ be a loss function and $f:\Delta\rightarrow \Delta$ be an injective function. We say $L_f$ is a `generalised forward-corrected loss' if there exists a loss function $L$ such that for all $\bm{q}\in \Delta$, $k\in \{1,2,\ldots,c\}$
 \begin{align*}
     L_f(\bm{q}, k) = L(f(\bm{q}), k).
 \end{align*}
 We refer to $L$ as the \textbf{base loss}. $f$ can be thought of as a label noise model. 
\end{definition}

The forward-correction is trivially an example of a generalised forward-correction loss obtained by setting $f\coloneq\widehat{T}$. We now demonstrate that the GCE and SCE losses previously discussed are generalised forward-correction losses, deriving expressions for the underlying (non-linear) noise models $f$. This derivation relies on the assumption that the base losses which generate these loss functions is proper - i.e we decompose both SCE and GCE into a proper loss corrected by a non-linear noise model. This decomposition is unique.

\begin{lemma}\label{lemma:semiproper_gce}
    The GCE, SCE and forward-corrected CE (denoted FCE) loss functions can be formulated as generalised forward-corrected losses with a proper base loss. The noise models $f_{GCE}, f_{SCE}, f_{FCE}$ satisfy
    \begin{align*}
        (f^{-1}_{GCE}(\bm{p}))_i &= \frac{p_i^{\frac{1}{1-a}}}{\sum_{i=1}^c p_i^{\frac{1}{1-a}}},\\
        (f^{-1}_{SCE}(\bm{p}))_i &= \frac{p_i}{\lambda-Ap_i},\\
        f_{FCE}(\bm{p}) &= \widehat{T}\bm{p},
    \end{align*}
     where $\widehat{T}$ is the invertible stochastic matrix used to define the correction, and $\lambda$ is a constant selected to ensure the correct normalisation.
\end{lemma}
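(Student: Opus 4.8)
The plan is to run everything off one structural identity. Suppose $L(\bm q,k)=\ell(f(\bm q),k)$ for a proper base loss $\ell$ and an injective $f$. Then the expected loss factors,
\begin{align*}
  H_L(\bm p,\bm q)=\sum_k p_k\,L(\bm q,k)=\sum_k p_k\,\ell(f(\bm q),k)=H_\ell(\bm p,f(\bm q)),
\end{align*}
and since $\ell$ is proper, $H_\ell(\bm p,\cdot)$ is minimised at its second argument equal to $\bm p$. Hence $H_L(\bm p,\cdot)$ is minimised exactly where $f(\bm q)=\bm p$, i.e.\ at $\bm q=f^{-1}(\bm p)$. This single observation both drives the construction and delivers the uniqueness claim: any valid decomposition must have $f^{-1}(\bm p)=\argmin_{\bm q\in\Delta}H_L(\bm p,\bm q)$, which pins down $f$, and then forces $\ell=L\circ f^{-1}$. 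So my first step is to record this as the engine of the proof.

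The second step is to run the construction in reverse for each named loss. I would \emph{define} $f^{-1}$ by the displayed argmin, set $\ell:=L\circ f^{-1}$, and verify three things: (i) the minimiser is unique, so $f^{-1}$ is a well-defined map; (ii) $f$ is injective; and (iii) the induced $\ell$ is proper. Uniqueness of the minimiser follows from convexity of $H_L(\bm p,\cdot)$ in $\bm q$ --- for GCE with $0<a<1$ the map $\bm q\mapsto-\sum_k p_k q_k^a$ is convex, and for SCE $\bm q\mapsto-\sum_k p_k\log q_k$ is convex while the $A$-term is linear. Properness of $\ell$ is then immediate from the engine identity, since $H_\ell(\bm p,\bm r)=H_L(\bm p,f^{-1}(\bm r))$ is minimised over $\bm r$ at $\bm r=\bm p$ once $f^{-1}$ is a bijection of $\Delta$.

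The explicit formulas are a routine Lagrange-multiplier computation of the interior optimum. For GCE, minimising $\tfrac1a\bigl(1-\sum_k p_k q_k^a\bigr)$ subject to $\sum_k q_k=1$ gives stationarity $p_k q_k^{a-1}=\mu$, hence $q_k\propto p_k^{1/(1-a)}$, which after normalising is exactly the claimed $f^{-1}_{GCE}$. For SCE, minimising $-\sum_k p_k\log q_k+A(1-\sum_k p_k q_k)$ gives $p_k/q_k+Ap_k=\mu$, hence $q_k=p_k/(\mu-Ap_k)$, matching $f^{-1}_{SCE}$ with $\lambda:=\mu$ the multiplier fixed by normalisation. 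The FCE case needs no inversion: CE is strictly proper and $L_{FCE}(\bm q,k)=L_{CE}(\widehat T\bm q,k)$ already has the required form with $f_{FCE}=\widehat T$, which is injective precisely because $\widehat T$ is invertible.

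I expect the genuine obstacle to lie not in the Lagrange algebra but in the implicit steps on the simplex and the well-definedness of $f$ as a simplex self-map. Concretely, the stationarity conditions presuppose an interior optimum, so I would argue the minimiser cannot sit on a face (for GCE/SCE the loss or its gradient behaves so that the optimum is pushed into the interior as $q_k\to0$); I would confirm the normalising multiplier $\mu$ exists and is unique by monotonicity in $\mu$ of $\sum_k p_k/(\mu-Ap_k)$, with $\mu>A\max_k p_k$ for SCE; and I would check that $\bm p\mapsto f^{-1}(\bm p)$ is a bijection of $\Delta$, so its inverse $f$ is a genuine injective noise model and properness of $\ell$ holds globally. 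These positivity and monotonicity checks are exactly where the admissible parameter ranges of $a$ and $A$ must be used carefully.
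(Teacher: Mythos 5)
Your proposal is correct and takes essentially the same route as the paper's proof: recover the noise model by identifying $f^{-1}(\bm{p})$ with $\argmin_{\bm{q}\in\Delta} H_{L}(\bm{p},\bm{q})$, compute this argmin via Lagrange multipliers for GCE and SCE (obtaining the identical stationarity conditions $q_i^{1-a}=ap_i/\lambda$ and $q_i=p_i/(\lambda-Ap_i)$), and handle FCE directly from the properness of CE, giving $\bm{q}^*=\widehat{T}^{-1}\bm{p}$. Your additional verifications --- that the optimum is interior, that the SCE multiplier exists and is unique by monotonicity, and that the induced base loss $\ell=L\circ f^{-1}$ is genuinely proper once $f^{-1}$ is a well-defined injection --- are steps the paper leaves implicit, but they refine rather than change the argument.
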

For an interpretation and plots of the noise models; $f_{GCE}, f_{SCE}$, see Appendix~\ref{sec:additional_theory}.

Lemma~\ref{lemma:semiproper_gce} demonstrates that GCE and SCE can be conceptualised as non-linear forward-corrected losses; the noise model is represented by the function $f$. We stress that these are by no means the only robust losses which adhere to Definition~\ref{def:gen_corr}. However, they provide useful examples when empirically demonstrating the results of Section~\ref{sec:riskbounds}.

The generalisation established by Definition~\ref{def:gen_corr} offers three advantages. i) It enhances our understanding of losses like GCE, demonstrating that they implicitly incorporate a noise model. ii) Partially unifies correction losses with other robust loss functions. iii) Ensures that theoretical results derived for generalised forward-correction losses are widely applicable, encompassing both traditional, linear correction losses and many other robust loss functions.

\section{Loss Bounding}
 Our analysis in this section reveals that merely correcting for the noise model (as in Definition~\ref{def:gen_corr}) is inadequate for achieving robustness. We must also adjust our loss function to incorporate a lower bound to account for the randomness introduced by label noise.

 \paragraph*{Key Observation} 
When a data distribution contains label noise, there is a lower bound on the optimal noisy risk a model can achieve. An analogy to this is that no forecaster can predict the outcome of a biased coin flip 100\% of the time - e.g. if a coin comes up heads 60\% of the time we cannot expect a forecaster to predict more accurately than 60\% over a large number of flips. Similarly, \uline{even an optimal model, which minimises the noisy risk, will \emph{still} incur a non-zero loss on a randomly sampled noisy dataset.}

\subsection{Overfitting to Label Noise} 
If one trains a classifier on a noisy dataset using the cross-entropy loss function, the classifier learns to fit all labels in the training set  - including those which have been corrupted by label noise \citep{memorisation} - damaging generality.
Ideally, when training a model on a noisy dataset we wish to fit the clean labels without overfitting the noisy ones.
A major obstacle to achieving this desire is the difficulty in determining whether a specific label is clean or corrupted. 
However, while it is difficult to determine whether a model has overfit to a \emph{specific} label it is often obvious when a model has overfit to a dataset. For example, if a dataset is known to contain label noise, obtaining a training loss of zero heavily implies overfitting has occurred.

\paragraph{Forward-Corrections} When learning a classifier in the presence of noisy labels it is therefore inappropriate to target a zero training loss. Utilising the forward-correction partly addresses this issue. The forward-correction works by noising our model predictions $\bm{q}\mapsto \widehat{T}\bm{q}$ before applying the loss. This guarantees that a zero loss is no longer possible, since our noised model never predicts any label with full confidence. Despite possessing this desirable property, the forward-correction does not go far enough in that the lower bound it imposes is still too high to prevent overfitting. An illustrative example for a dataset polluted by $40\%$ symmetric label noise is presented in Table~\ref{ch6:tab:bound_comparison}. 
The table gives the lowest attainable training loss for a model trained on this noisy dataset versus the `optimal training loss' i.e the loss which would be obtained by an optimal model, possessing full knowledge of how the dataset was generated but has not been permitted to observe the dataset labels. Obtaining a training loss lower than this optimal value would suggest overfitting has occurred. We see that while FCE imposes a bound (unlike CE), this bound is still too low. 

Further discussion of this example is given in Appendix~\ref{ch6:example:fce}.

\begin{table}[ht]
\centering
\begin{tabular}{|c|c|c|}
\hline
\multicolumn{1}{|c|}{\textbf{Loss }} & \multicolumn{1}{c|}{\textbf{Lowest Attainable}} & \multicolumn{1}{c|}{\textbf{Optimal }}\\  
\multicolumn{1}{|c|}{\textbf{Function}}                       & \textbf{Training Loss} & \textbf{Training Loss} \\ \hline
CE                                           & 0 & 0.673 \\ \hline
FCE                                          & 0.511 & 0.673 \\ \hline
FCE$+$B                                        & 0.673 & 0.673 \\ \hline
\end{tabular}
\caption{Bounds imposed on the attainable training loss by different cross-entropy variants versus the \emph{optimal} training loss: This table compares the minimum training loss achievable by a model on a large noisy dataset using different loss functions, distinguishing between scenarios where the model is allowed or not allowed to observe the dataset labels. We assume a separable binary-label dataset corrupted by 40\% symmetric label noise. When the dataset labels are not observable, the loss on the dataset is minimised (in expectation) by an optimal model - a minimiser of the (noisy) generalised risk. This model will obtain a loss of $0.673$ on this dataset with high probability. In contrast, using CE or FCE loss functions can result in training losses below this optimal figure, making overfitting likely While FCE introduces an inherent loss bound, improving robustness, it may still permit overfitting. Our bounded variant, FCE$+$B, is designed to better mitigate overfitting.}
\label{ch6:tab:bound_comparison}
\end{table}

\subsection{Bounded Loss}
We propose, therefore, that the principled way to handle label noise is to limit the minimum allowable risk on the training set. Specifically, we define a lower bound `$B$' and train - preventing the training loss from going below this value. Explicitly, we augment our loss as follows:

\begin{definition}[Bounded Loss]\label{def:budget_loss}
    Let $L$ be a loss function. Let $\mathcal{D}$ be a batch of $N$ data-label pairs $(x_i, y_i)$. Given a lower bound, $B\in\mathbb{R}$, we define the $B$-\textbf{bounded} loss $L_{\stackrel{B}{\rule{0.21cm}{0.4pt}}}$ obtained from $L$ as follows:
\begin{align}
    \boxed{
    \vspace{5pt}
        \scalebox{1.2}{ 
            $L_{\stackrel{B}{\rule{0.21cm}{0.4pt}}}(\bm{q}(x), \mathcal{D}) \coloneqq \left| \frac{1}{N}\sum_{i=1}^N L(\bm{q}(x_i), y_i) - B\right|$
        }
    \vspace{5pt}
    } \label{ch6:Equation:budget}
\end{align}

\end{definition}
Essentially when the average loss on a batch of samples is above our bound $B$, training proceeds as normal. However, if the training loss on a batch dips below $B$, the learning rate effectively becomes negative resulting in `untraining' which proceeds until the average loss is back above $B$. Bounding the loss in this way has been previously explored by \citet{flooding}. We extend upon this work by grounding loss bounding within the context of loss corrections and by providing a theoretically justified method for selecting the loss bound.

\section{Risk Bounds}\label{sec:riskbounds}
In the last section, we remarked that despite theoretical motivation, correction losses are still prone to overfitting. We proposed training against a lower bound to prevent this, noting that the minimal achievable generalised noisy risk is non-zero. In this section, we explicitly derive lower bounds on the generalised noisy $L$-risk for generalised forward-corrected losses. We conclude by presenting a formula for choosing a bound $B$ to use in Definition~\ref{def:budget_loss}. We call this the \textit{noise-bound}. 

\paragraph*{Assumptions} Throughout this section we suppose that all loss functions are generalised forward-corrected losses, which we denote $L_f$, furthermore we suppose that their base loss functions $L$, are proper. We assume also that the loss function has no inherent bias toward any particular class; i.e the loss is unaffected by a random permutation of the label set. Examples of such losses includes, CE, MSE, FCE, GCE, SCE and many others.

\subsection{Entropy As Lower Bound}\label{sec:entropy_bounding}
In Lemma~\ref{lemma:general} we establish a general lower bound on noisy risk in terms of the average entropy of the noisy label distribution. Precisely stating Lemma~\ref{lemma:general} requires us to define the `entropy function' of a proper loss.

\begin{definition}[Entropy Function]
Given a proper loss function \(L\), define its entropy function \citep{properBregman} as the expected loss incurred when the forecast equals the true distribution over classes: \(\mathcal{H}:\Delta \rightarrow \mathbb{R}\) by:
\[
\mathcal{H}(\bm{p}) \coloneqq H_{L}(\bm{p}, \bm{p}) = \bm{p}^T \bm{L}(\bm{p}),
\]
where \(\bm{p}\) is a probability distribution over the classes and \(H_{L}\) denotes the expected loss.
\end{definition}

The entropy function for a (strictly) proper loss function is (strictly) concave and, by the definition of properness, satisfies $\mathcal{H}(\bm{p}) \leq H(\bm{p},\bm{q})$ for all $\bm{p},\bm{q}\in\Delta$. This leads immediately to the following lemma.

\begin{lemma}\label{lemma:general}
    Let $L_f$ be a generalised forward-corrected loss whose `base-loss' $L$ is strictly proper (Recall the definition of `base-loss' from Definition~\ref{def:gen_corr}). The noisy risk of any probability estimator $\bm{q}$ is lower bounded:
        \begin{align}\label{Equation:general_entropy_bound}
        R^{\eta}_{L_f}(\bm{q}) \geq \mathbb{E}_{x\sim p(x)}[\mathcal{H}(\widetilde{\bm{p}}(\tilde{y}\vert x))],
    \end{align}
    where $\mathcal{H}$ is the entropy function of the base-loss. This bound is tight when $f$ is equal to the true noise model. Equality is attained by setting $\bm{q}(x)=f^{-1}\left(\widetilde{\bm{p}}(\widetilde{y}\vert x)\right)$.\footnote{Note that if $f$ is the true noise model then $\widetilde{\bm{p}}(\widetilde{y}\mid x) \in f(\Delta)$ and the inverse is unique by injectivity.}
\end{lemma}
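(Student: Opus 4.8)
The plan is to reduce the entire statement to the defining inequality of a proper loss by exploiting the compositional structure of the generalised forward-correction. First I would unwind the noisy risk: by definition $R^{\eta}_{L_f}(\bm{q}) = \mathbb{E}_{x\sim p(x)}[H_{L_f}(\widetilde{\bm{p}}(\widetilde{y}\mid x), \bm{q}(x))]$, where the inner term is the expected loss $\widetilde{\bm{p}}^T \bm{L_f}(\bm{q})$ evaluated against the noisy posterior. Since $L_f(\bm{q},k) = L(f(\bm{q}),k)$ for every label $k$ by Definition~\ref{def:gen_corr}, the vector of losses satisfies $\bm{L_f}(\bm{q}) = \bm{L}(f(\bm{q}))$, and hence the inner expected loss collapses to $H_{L}(\widetilde{\bm{p}}(\widetilde{y}\mid x), f(\bm{q}(x)))$. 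In other words, applying the correction to the loss is the same as reparametrising the forecast through $f$ before the base loss is applied; this is the observation that does all the work.

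With this rewriting in hand the bound is immediate. Because $L$ is strictly proper, its expected loss $H_L(\bm{p},\cdot)$ is minimised by $\bm{p}$ itself, so for any forecast $\bm{r}\in\Delta$ one has $H_L(\bm{p},\bm{r}) \geq H_L(\bm{p},\bm{p}) = \mathcal{H}(\bm{p})$. Applying this with $\bm{p} = \widetilde{\bm{p}}(\widetilde{y}\mid x)$ and $\bm{r} = f(\bm{q}(x))$ gives, pointwise in $x$, the inequality $H_{L_f}(\widetilde{\bm{p}}(\widetilde{y}\mid x), \bm{q}(x)) \geq \mathcal{H}(\widetilde{\bm{p}}(\widetilde{y}\mid x))$. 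Taking expectation over $x\sim p(x)$ and using monotonicity of the expectation then yields the claimed bound, Equation~\ref{Equation:general_entropy_bound}.

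For tightness I would show the pointwise inequality can be saturated. When $f$ equals the true noise model the noisy posterior is by construction $\widetilde{\bm{p}}(\widetilde{y}\mid x) = f(\bm{p}(y\mid x))$, so it lies in the image $f(\Delta)$; injectivity of $f$ guarantees that $f^{-1}(\widetilde{\bm{p}}(\widetilde{y}\mid x))$ is well-defined and unique. Choosing $\bm{q}(x) = f^{-1}(\widetilde{\bm{p}}(\widetilde{y}\mid x))$ makes $f(\bm{q}(x)) = \widetilde{\bm{p}}(\widetilde{y}\mid x)$, whence the inner term becomes $H_L(\widetilde{\bm{p}},\widetilde{\bm{p}}) = \mathcal{H}(\widetilde{\bm{p}})$, and the pointwise inequality holds with equality for every $x$; hence so does its expectation.

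The argument is essentially a one-line consequence of properness once the reparametrisation $\bm{L_f} = \bm{L}\circ f$ is spotted, so I do not anticipate a substantive obstacle. The only points requiring care are the bookkeeping that makes \emph{correcting the loss} and \emph{correcting the forecast} genuinely interchangeable, and verifying that the equality-case forecast $f^{-1}(\widetilde{\bm{p}})$ is a legitimate element of $\Delta$ — which is precisely what injectivity together with the assumption $f:\Delta\to\Delta$ provides.
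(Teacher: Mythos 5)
Your proposal is correct and follows essentially the same route as the paper's own proof: both rewrite the pointwise noisy risk via the identity $\bm{L}_f(\bm{q}) = \bm{L}(f(\bm{q}))$, invoke properness of the base loss to bound it below by $\mathcal{H}(\widetilde{\bm{p}}(\widetilde{y}\mid x))$, take expectation over $x\sim p(x)$, and attain equality by choosing $\bm{q}(x)=f^{-1}(\widetilde{\bm{p}}(\widetilde{y}\mid x))$, which lies in $\Delta$ since $\widetilde{\bm{p}}\in f(\Delta)$ when $f$ is the true noise model. In fact your handling of the equality case is slightly cleaner than the paper's, which contains a typographical slip (writing $\bm{q}(x)=f(\widetilde{\bm{p}})$ where $f^{-1}$ is meant).
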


 Lemma~\ref{lemma:general} establishes that, when using a generalised forward-corrected loss (with proper base loss) the average entropy of the noisy distribution provides a lower bound on the noisy risk. In an ideal world one would calculate the entropy expression in Equation~\ref{Equation:general_entropy_bound} wherein it could be used as the lower bound `$B$' in Equation~\ref{ch6:Equation:budget}. However, in most settings Equation~\ref{Equation:general_entropy_bound} will be nearly impossible to precisely calculate due to lack of knowledge about the data distribution and noise model. In such settings we are required therefore to make some major simplifying assumptions in order to make an approximation. 

 \paragraph*{Separability} The key simplifying assumption which we employ is assuming that the clean label distribution is (approximately) separable. i.e. there is no randomness in the label distribution. While an idealised assumption, it is a reasonable approximation for many real-world image classification tasks where clear images of a single subject dominate the dataset. In domains with high inherent randomness such as medical diagnostics, this assumption is less suitable. In the following we leverage this assumption to construct a few bounds. 

\subsection{Estimating The Entropy}\label{sec:estimating_entropy}
Given a datapoint $x$, the entropy of the noisy class posterior $\widetilde{\bm{p}}(\widetilde{y}\mid x)$ will depend both of the noise rate and the \emph{type} of label noise. For example, symmetric label noise will result in a different entropy than pairwise label noise, even given the same noise rate. The following Lemma gives a range on the possible entropies of $\widetilde{\bm{p}}(\widetilde{y}\mid x)$ given that the noise rate at $x$ is equal to $\eta(x)$. For brevity we introduce the following notation:
\begin{align}
    \bm{u}_{\text{pair}}(\eta,c) &\coloneq (1-\eta(x), \eta(x), 0,  \ldots, 0 )\\
     \bm{u}_{\text{sym}}(\eta,c) &\coloneq \Big(1-\eta(x), \frac{\eta(x)}{c-1},\frac{\eta(x)}{c-1}, \ldots, \frac{\eta(x)}{c-1} \Big) \label{notation:u_sym}
\end{align}
\hdashrule{8.62cm}{0.4pt}{3mm 2pt}

 \begin{lemma}\label{lemma:entropy_interval}
 Let $L_f$ be a generalised forward-corrected loss function whose base-loss $L$ has entropy function $\mathcal{H}$.
  Suppose that label noise is applied to a separable data-label distribution and let $x\sim p(x)$.
 Given that the noise rate at $x$ is $\eta(x)$, the entropy of the noisy label distribution at $x$, $\mathcal{H}(\widetilde{\bm{p}}(\widetilde{y}\mid x))$ must lie in the following interval:
 \begin{align*}
          \left[\mathcal{H}(\bm{u}_{\text{pair}}(\eta(x),c)),  \mathcal{H}(\bm{u}_{\text{sym}}(\eta(x),c))\right].
 \end{align*}
In particular, for a fixed noise rate $\eta(x)$, the highest entropy occurs under symmetric label noise at $x$, while the lowest entropy is observed with pairwise label noise.
  \end{lemma}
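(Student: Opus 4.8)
The plan is to reduce the claim to a constrained optimisation of the entropy function $\mathcal{H}$ over a sub-simplex of $\Delta$, and then exploit the two structural properties of $\mathcal{H}$ that the hypotheses supply: concavity (from strict properness of the base loss) and invariance under permutations of the label set (from the no-class-bias assumption). First I would use separability to pin down exactly which noisy distributions are compatible with a noise rate $\eta(x)$. Since the clean posterior is a vertex of $\Delta$, relabel classes so that the clean label at $x$ is class $1$. By definition of the noise rate, $p(\widetilde{y}\neq y\mid x)=\eta(x)$ forces $\widetilde{p}_1=1-\eta(x)$, while the remaining mass $\eta(x)$ may be distributed arbitrarily over the $c-1$ incorrect classes. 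Hence the set of admissible $\widetilde{\bm{p}}(\widetilde{y}\mid x)$ is exactly
\[
    S \coloneq \Big\{ \bm{p}\in\Delta : p_1 = 1-\eta(x),\ \textstyle\sum_{i=2}^c p_i = \eta(x)\Big\},
\]
a scaled $(c-2)$-dimensional simplex whose vertices are the permutations of $\bm{u}_{\text{pair}}(\eta(x),c)$ fixing the first coordinate, and whose barycentre is $\bm{u}_{\text{sym}}(\eta(x),c)$. The lemma is then the statement that $\mathcal{H}$ attains its maximum over $S$ at $\bm{u}_{\text{sym}}$ and its minimum at $\bm{u}_{\text{pair}}$.

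For the upper bound I would symmetrise. Let $G$ be the group of permutations of the coordinates $\{2,\ldots,c\}$; every $\sigma\in G$ maps $S$ to itself, and by the no-class-bias assumption $\mathcal{H}(\sigma\bm{p})=\mathcal{H}(\bm{p})$. For any $\bm{p}\in S$ the average $\frac{1}{|G|}\sum_{\sigma\in G}\sigma\bm{p}$ leaves the first coordinate at $1-\eta(x)$ and equalises coordinates $2,\ldots,c$ to $\eta(x)/(c-1)$, so it equals $\bm{u}_{\text{sym}}(\eta(x),c)$ independently of $\bm{p}$. Jensen's inequality for the concave $\mathcal{H}$ then gives
\[
    \mathcal{H}\big(\bm{u}_{\text{sym}}(\eta(x),c)\big) \;\geq\; \frac{1}{|G|}\sum_{\sigma\in G}\mathcal{H}(\sigma\bm{p}) \;=\; \mathcal{H}(\bm{p}),
\]
establishing the right endpoint.

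For the lower bound I would invoke the elementary fact that a concave function on a polytope attains its minimum at an extreme point: writing $\bm{p}\in S$ as a convex combination of the vertices of $S$ and applying concavity shows $\mathcal{H}(\bm{p})\geq\min_v\mathcal{H}(v)$ over the vertices $v$ of $S$. Each such vertex is a permutation of $\bm{u}_{\text{pair}}(\eta(x),c)$, so by permutation-invariance they all share the common value $\mathcal{H}(\bm{u}_{\text{pair}}(\eta(x),c))$, which is therefore the minimum. Combining the two bounds places $\mathcal{H}(\widetilde{\bm{p}}(\widetilde{y}\mid x))$ in the stated interval, with the pairwise configuration minimal and the symmetric configuration maximal.

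The only genuinely delicate step is the first one: correctly arguing that separability makes the true-class coordinate exactly $1-\eta(x)$ and leaves the off-class mass completely unconstrained, so that the feasible set is precisely $S$ (without separability the true-class coordinate would itself be a free variable and the clean sub-simplex structure would collapse). Once $S$ and the concavity and permutation-symmetry of $\mathcal{H}$ are in hand, both endpoints follow from standard convex-analytic facts — the maximum of a concave symmetric function sits at the centre of symmetry, and the minimum of a concave function sits at a vertex — and it is precisely the permutation-invariance that guarantees all the relevant vertices, and hence the minimising value, coincide.
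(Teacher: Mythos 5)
Your proposal is correct and follows essentially the same route as the paper's proof: separability reduces the problem to optimising the (concave, permutation-invariant) entropy $\mathcal{H}$ over the sub-simplex with true-class coordinate fixed at $1-\eta(x)$, the maximum at $\bm{u}_{\text{sym}}$ follows by symmetrisation/Jensen, and the minimum at $\bm{u}_{\text{pair}}$ follows because a concave function on a polytope is minimised at a vertex and all vertices share one value by symmetry. The only cosmetic difference is that you average an arbitrary point over the full permutation group of the off-class coordinates, whereas the paper averages the cyclic shifts of a maximiser; both are the same symmetrisation idea.
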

\vspace{-2mm}
\hdashrule{8.62cm}{0.4pt}{3mm 2pt}
\begin{corollary}\label{cor:worst_case_entropy}
       Given an average noise rate $\eta\coloneq \mathbb{E}_{x\sim p(x)}[\eta(x)]$, the greatest possible value of $\mathbb{E}_{x\sim p(x)}[\mathcal{H}(\widetilde{\bm{p}}(\widetilde{y}\mid x)]$ occurs when $\eta(x)$ is constant:
       \begin{align*}
           \sup_{p(\widetilde{y}\mid x,y)}\left( \mathbb{E}_{x\sim p(x)}[\mathcal{H}(\widetilde{\bm{p}}(\widetilde{y}\mid x))]\right) = \mathcal{H}\left(\bm{u}_{\text{sym}}(\eta,c)\right),
       \end{align*}
    where the supremum is taken over all noise models such that $\mathbb{E}_{x\sim p(x)}[\eta(x)] = \eta$.
\end{corollary}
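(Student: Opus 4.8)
The plan is to reduce the claim to a single application of Jensen's inequality, using Lemma~\ref{lemma:entropy_interval} to strip away the dependence on the \emph{type} of label noise and retain only the dependence on the scalar noise rate $\eta(x)$. First I would fix an arbitrary admissible noise model, i.e. one with $\mathbb{E}_{x\sim p(x)}[\eta(x)]=\eta$, and bound the integrand pointwise. Since the clean data-label distribution is separable, Lemma~\ref{lemma:entropy_interval} applies at every $x$ and gives
\[
\mathcal{H}(\widetilde{\bm{p}}(\widetilde{y}\mid x)) \le \mathcal{H}(\bm{u}_{\text{sym}}(\eta(x),c)),
\]
where the right-hand side depends on $x$ only through $\eta(x)$. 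Writing $g(t):=\mathcal{H}(\bm{u}_{\text{sym}}(t,c))$ for $t\in[0,1]$ and taking expectations yields $\mathbb{E}_x[\mathcal{H}(\widetilde{\bm{p}}(\widetilde{y}\mid x))]\le \mathbb{E}_x[g(\eta(x))]$.

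Second, I would show $g$ is concave on $[0,1]$. The key observation is that $t\mapsto \bm{u}_{\text{sym}}(t,c)$ is \emph{affine}, since $\bm{u}_{\text{sym}}(t,c)=\bm{e}_1 + t\,\bm{v}$ with $\bm{e}_1=(1,0,\ldots,0)$ and $\bm{v}=(-1,\tfrac{1}{c-1},\ldots,\tfrac{1}{c-1})$. Because the base loss $L$ is strictly proper, its entropy function $\mathcal{H}$ is concave, and a concave function composed with an affine map is concave; hence $g$ is concave. Jensen's inequality then gives
\[
\mathbb{E}_x[g(\eta(x))] \le g\big(\mathbb{E}_x[\eta(x)]\big)=g(\eta)=\mathcal{H}(\bm{u}_{\text{sym}}(\eta,c)).
\]
Chaining the two displays shows $\mathcal{H}(\bm{u}_{\text{sym}}(\eta,c))$ upper bounds the objective for every admissible noise model, hence bounds the supremum.

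Finally I would verify that this bound is attained, upgrading the inequality to the claimed equality. Consider the noise model that applies symmetric noise at the constant rate $\eta(x)\equiv\eta$. Under separability the clean posterior at each $x$ is a vertex $\bm{e}_k$ of the simplex, so the noisy posterior $\widetilde{\bm{p}}(\widetilde{y}\mid x)$ is a coordinate permutation of $\bm{u}_{\text{sym}}(\eta,c)$; by the assumed invariance of the loss under relabelling of classes, $\mathcal{H}$ is unchanged under such permutations, so $\mathcal{H}(\widetilde{\bm{p}}(\widetilde{y}\mid x))=\mathcal{H}(\bm{u}_{\text{sym}}(\eta,c))$ for all $x$, and the expectation equals $\mathcal{H}(\bm{u}_{\text{sym}}(\eta,c))$. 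Since this model has average rate $\eta$, it is feasible, so the supremum is achieved.

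The only step requiring genuine care — and thus the one I would treat as the main obstacle — is this attainment/equality argument: checking both that the constant-rate symmetric model lies in the feasible set $\{\mathbb{E}_x[\eta(x)]=\eta\}$ and that it produces at every $x$ a noisy posterior which is merely a relabelling of $\bm{u}_{\text{sym}}(\eta,c)$. This is precisely where the separability and permutation-invariance assumptions are indispensable. By contrast, the concavity step is immediate once one notices $\bm{u}_{\text{sym}}$ is affine, and the pointwise upper bound is inherited verbatim from Lemma~\ref{lemma:entropy_interval}.
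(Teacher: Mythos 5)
Your proof is correct and follows essentially the same route as the paper's: a pointwise application of Lemma~\ref{lemma:entropy_interval} to reduce to symmetric noise, followed by Jensen's inequality using the concavity of $\mathcal{H}$ (the paper applies Jensen directly to the vector-valued random variable $\bm{u}_{\text{sym}}(\eta(x),c)$, which is equivalent to your scalar formulation via the affine map $t\mapsto\bm{u}_{\text{sym}}(t,c)$). Your explicit verification that the bound is attained by constant-rate symmetric noise, using separability and permutation-invariance, is a welcome addition that the paper's proof leaves implicit.
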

\vspace{-2mm}
\hdashrule{8.62cm}{0.4pt}{3mm 2pt}
\paragraph*{Worst-Case Entropy} Corollary~\ref{cor:worst_case_entropy} establishes a worst-cast entropy given a specified average noise rate $\eta$. Simply put, the Corollary tells us \emph{`given that the average noise rate does not exceed $\eta$, the noise model with the highest entropy is uniform symmetric label noise'.}

\subsection{Main Proposal: Noise-Bounded Loss}
\paragraph*{Discussion} The goal of this section is to derive lower bounds on the noisy risk which can be used as `$B$' in our $B$-bounded loss (Definition~\ref{def:budget_loss}). Ideally, with precise knowledge of the noise model, the lower bound would be set equal to the average entropy of the noisy label distribution. However, the label noise model is typically unknown, and we might only have access to an approximate noise rate.

 \paragraph*{What bound do we choose when we only know the noise rate?} 
Corollary~\ref{cor:worst_case_entropy} establishes that, given a known noise rate $\eta$, a `worst-case' entropy occurs when the label noise is symmetric and uniform. This means that if we set our bound `$B$' under the assumption of symmetric-uniform label noise at rate $\eta$, $B$ can never be lower than the true noisy entropy - making overfitting unlikely. We call this the `noise-bound'. 

\begin{definition}[Noise-Bound]\label{def:noise-bound}
Let $L_f$ be a generalised forward-corrected loss whose base loss $L$ has entropy function $\mathcal{H}$. Using the notation $\bm{u}_{\text{sym}}(\eta, c)$ from Equation~\ref{notation:u_sym}, we define the \textbf{noise-bound} as:
\begin{align}\label{eqn:noise_bound}
     \boxed{B(\eta, c) := \mathcal{H}(\bm{u}_{\text{sym}}(\eta, c)) = \bm{u}_{\text{sym}}(\eta,c)\cdot \bm{L}(\bm{u}_{\text{sym}}(\eta,c)).}
\end{align}
\end{definition}

\paragraph*{Examples} For CE and FCE the noise-bound corresponds to the Shannon Entropy of the distribution $\bm{u}_{\text{sym}}(\eta,c) = (1-\eta, \frac{\eta}{c-1}, \ldots, \frac{\eta}{c-1})$. For SCE and GCE we remark that 
\begin{align*}
    B(\eta, c) &= \bm{u}_{\text{sym}}(\eta,c)\cdot \bm{L}(\bm{u}_{\text{sym}}(\eta,c))\\ &= \bm{u}_{\text{sym}}(\eta,c)\cdot \bm{L}_f(f^{-1}(\bm{u}_{\text{sym}}(\eta,c)))
\end{align*}
allowing us to compute $B(\eta, c)$ by substituting expressions for $f^{-1}$ derived in Lemma~\ref{lemma:semiproper_gce}. These are given explicitly in Appendix~\ref{sec:explicit_bounds}.

This leads us to the main proposal of this paper. When our dataset has label noise we propose using the bounded loss (Equation~\ref{ch6:Equation:budget}) with $B$ set to $B(\eta,c)$, the `noise-bound' from Equation~\ref{eqn:noise_bound}. We call this the \textbf{noise-bounded loss}: 

\begin{definition}[Noise-Bounded Loss]\label{def:noise_bounded_loss}
    Let $L$ be a loss function. Let $\mathcal{D}$ be a batch of $N$ data-label pairs $(x_i, y_i)$. Given a noise rate $\eta$, we define the \textbf{noise-bounded} loss $L_{\stackrel{B(\eta, c)}{\rule{0.41cm}{0.4pt}}}$ obtained from $L$ as follows:
\begin{align}
    \boxed{L_{\stackrel{B(\eta, c)}{\rule{0.41cm}{0.4pt}}}(\bm{q}(x), \mathcal{D}) \coloneqq \left|  B(\eta, c) - \frac{1}{N}\sum_{i=1}^NL(\bm{q}(x_i), y_i)\right|} \label{ch6:Equation:noise_budget} 
\end{align}
where $B(\eta, c)$ is as given in Equation~\ref{eqn:noise_bound}.
\end{definition}

\paragraph*{Example FCE:} The noise-bounded variant of FCE (which we denote FCE$+$B) is given in Equation~\ref{eq:three_ce_variants}.

\begin{figure*}[!ht]
\centering
\begin{equation}
  \underbrace{\frac{1}{N}\sum_{i=1}^N L(\bm{q}(x_i), y_i)}_{\text{\large CE}} \xrightarrow{\text{forward-correct}}  \underbrace{\frac{1}{N}\sum_{i=1}^N L(\widehat{T}(\bm{q}(x_i)), y_i)}_{\text{\large FCE}} \xrightarrow{\text{noise-bound}}   \underbrace{\left|  B(\eta, c) - \frac{1}{N}\sum_{i=1}^NL(\widehat{T}\bm{q}(x_i), y_i)\right|}_{\text{\large FCE+}\text{\large  B}}
\end{equation}
\caption{CE, FCE and, our proposed FCE$+$B loss functions. The the forward-correction ensures consistency while the application of a bound (FCE$+$B) mitigates overfitting.}
\label{eq:three_ce_variants}
\end{figure*}

\paragraph*{Bound Optimality} 
By construction, the noise-bound is only equal to the average entropy of the noisy label distribution if the label noise is symmetric and uniform. In all other cases the noise-bound will be higher than strictly necessary. Ideally, we'd like the gap between the noise-bound and the true noisy entropy to be small in a typical setting: If the gap is large, the noise-bounded loss will cease training long before overfitting occurs and possibly when there is still signal to be learned. A small gap occurs when all distributions of the form $(1-\eta, \eta_2, \ldots, \eta_c)$ (where $\eta\coloneq\sum_i \eta_i$) have roughly the same entropy - i.e. the entropy is `insensitive' to the \emph{structure} of the noise model, depending mostly on the noise \emph{rate}. The level of insensitivity depends on the entropy function itself. Shannon-Entropy is relatively sensitive to the structure of the noise model. In contrast, losses like GCE and SCE induce insensitive entropy functions. This topic is discussed further in Appendix~\ref{sec:sensitivity_of_bounds}.

\paragraph*{Empirical and Generalised Risk} Our analysis in Section~\ref{sec:entropy_bounding} demonstrates that an estimator cannot achieve a noisy risk below the mean noisy label entropy. However, in a small finite dataset of i.i.d. samples, it is possible for an estimator to achieve a noisy \emph{empirical} risk that is lower than the noisy entropy, and this can occur with non-zero probability, even without access to the actual dataset labels. However, as the size of the dataset increases, the likelihood of an estimator achieving a loss significantly lower than the noisy entropy rapidly diminishes. According to the Central Limit Theorem, the probability of obtaining a loss more than \(\delta\) below the noisy label entropy diminishes at the rate of \(O\left(\frac{1}{\sqrt{N}}\right)\), where \(N\) is the dataset size. In practical terms this means obtaining a training loss below the the noise-bound is \emph{almost impossible} unless one has overfit to the noisy labels. 

\begin{algorithm}
\caption{Training with Noise-Bounded Loss}
\label{alg:noise_aware_training}
\begin{algorithmic}[1]
\State \textbf{Input:} Noisy dataset $\mathcal{D} = \{ (x_i, \widetilde{y}_i) \}_{i=1}^N$, estimated noise rate $\eta$, number of classes $c$, epochs $T$
\State \textbf{Output:} Trained model parameters $\Theta$

\Function{ComputeNoiseBound}{$\eta$, $c$}
    \State $\bm{u}_{\text{sym}}(\eta, c) \gets \left(1-\eta, \frac{\eta}{c-1}, \ldots, \frac{\eta}{c-1}\right)$
    \State \textbf{return} $\bm{u}_{\text{sym}}(\eta, c) \cdot L(\bm{u}_{\text{sym}}(\eta, c))$  
\EndFunction

\Procedure{TrainModel}{$\mathcal{D}$, $\eta$, $c$, $T$}
    \State $B(\eta, c) \gets \Call{ComputeNoiseBound}{\eta, c}$
    \For{$epoch = 1$ to $T$}
        \For{each $(x_i, y_i)$ in $\mathcal{D}$}
            \State $\bm{q}(x_i) \gets \text{ModelPrediction}(x_i; \Theta)$
            \State $loss \gets \left| B(\eta, c) - \frac{1}{N}\sum_{j=1}^N L(\bm{q}(x_j), y_j)\right|$
            \State $\Theta \gets \text{UpdateModel}(\Theta, loss)$ 
        \EndFor
    \EndFor
    \State \textbf{return} $\Theta$
\EndProcedure

\end{algorithmic}
\end{algorithm}

\section{Experiments}\label{sec:experiments}

\subsection{Loss Functions}
In this section, we empirically investigate the effectiveness of the noise-bounded loss (Equation~\ref{ch6:Equation:noise_budget}) for improving robustness to label noise. We consider several loss functions: CE, SCE, forward-corrected CE (FCE), and GCE. Additionally, we explore a variant of CE that includes a prior on the model probabilities (CEP). Our experiments all follow a similar structure. We use a dataset containing intrinsic or synthetic label noise in the training set. We train neural network models using each loss on this noisy training set and evaluate their performance on a clean test set. We then compare results from models trained without noise-bounds to those trained with noise-bounds, denoted by a `$+$B' suffix in the loss name (e.g., CE$+$B indicates the use of a noise-bounded cross-entropy loss).

\paragraph*{Baseline Loss Functions} Our results are benchmarked against other standard robust loss functions, including mean squared error (MSE) \citep{L2}, mean absolute error (MAE), NCE-MAE \citep{normalised_losses}, ELR \citep{elr}, Curriculum loss (CL) \citep{curiculum}, Bootstrapping loss (Boot.) \citep{bootstrap}, Spherical loss (Sph.), Mix-up \citep{mixup}, and a version of GCE that incorporates the additional tricks outlined by \citet{GCE_Loss}. To differentiate this version of GCE from our simplified GCE, we refer to it as `Truncated loss' (Trunc.) due to its use of truncation.

\subsection{Datasets}
We evaluate each loss on various datasets with different label noise types. We consider versions of EMNIST, FashionMNIST, CIFAR10, CIFAR100 corrupted by symmetric label noise at rates of 0.2 and 0.4 and MNIST with rates of 0.4 and 0.6. Additionally, we explore more sophisticated noise types. In the case of `Asym-CIFAR100,' we introduce asymmetric noise by randomly transitioning labels within the 20 superclasses of CIFAR100. For example, within the superclass `fish' (comprised of aquarium-fish, flatfish, ray, shark, trout), 
we change training labels to other members of the set with a probability of $\eta \in \{0.2, 0.4\}$ (e.g., flatfish $\rightarrow$ trout). For `Non-Uniform EMNIST,' we investigate the impact of using non-uniform noise. We train a linear classifier on EMNIST and, with a probability of 0.6, modify the label of a data point in our training set to match the output of this classifier. Since the performance of the classifier varies across data-space, this creates label noise with an $x$-dependence. Further experiments on the TinyImageNet and Animals-10N datasets, which contain real, intrinsic open-set noise, are given in Appendix~\ref{sec:further_experiments}. 

\paragraph*{Hyperparameters} For the Animals and TinyImageNet experiments, we use a ResNet-34 to parameterise our model. For the other datasets, we use a ResNet-18. For each experiment, the number of epochs is kept consistent across losses. The bounds we employ in each experiment are obtained by substituting the relevant number of classes $c$ and the noise rate $\eta$ into Equation~\ref{eqn:noise_bound}. An exception is the case of Non-uniform EMNIST, where we use a class number of $c=2$ to reflect that label is a mixture of the clean label and classifier labels. The results where we vary the bound `$B$' are obtained by doing a small grid search on either side of our noise-bound value. Additional precise experimental details may be found in Appendix~\ref{sec:further_experiments}

\subsection{Results}
\begin{figure*}[h!]
\centering
\includegraphics[width=16.2cm]{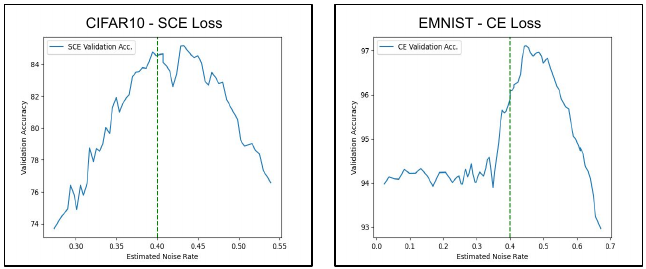}
\caption{\textbf{Performance as a function of the estimated noise rate used to compute noise-bound:} We plot the final (clean) validation accuracy of a model against the estimated noise rate used to compute the noise-bound (Eqn.~\ref{eqn:noise_bound}) on the noisy CIFAR10/EMNIST datasets using SCE/CE losses respectively. \textbf{The noise-bound, as computed with the \emph{true} noise rate is highlighted by the green dotted line}; both graphs show a bump with a peak near this line demonstrating that underestimating the noise rate causes overfitting while overestimating causes underfitting. Most crucially, the prominent `bump' reinforces that robustness can be greatly improved by training using a well-selected bound. }
\label{fig:entropy_graph2s}
\end{figure*}

The results of our experiments are presented in two tables. Table \ref{table:MNIST} includes the simpler datasets of MNIST, FashionMNIST, EMNIST, and CIFAR10, while table \ref{table:CIFAR} displays CIFAR100, Asym-CIFAR100, and Non-uniform-EMNIST. The results for TinyImageNet and Animals are presented in a third table in Appendix~\ref{sec:further_experiments}. 
Each table follows a similar structure, with losses listed in rows and datasets in columns. The baselines are grouped together at the top. Our main losses are organised into pairs, such as CE, CE$+$B.
The rows that use the noise-bound (e.g., GCE$+$B) are highlighted in blue to enhance readability. If using our noise-bound leads to higher mean accuracy compared to training without the bound, this is indicated by a \boxed{box}. The best overall model for each dataset is highlighted in \textbf{bold}. \uline{In $\bm{82\%}$ of cases utilising the noise-bound improves performance relative to the unbounded loss variant. }

\paragraph*{Exceptions} With few exceptions, our bound leads to improved performance compared to the standard, unbounded version of each loss. For the Asym-CIFAR100 and Non-Uniform-EMNIST datasets, our CE$+$B loss performs worse than regular CE. This outcome was expected since our derived bounds are optimal for symmetric noise and may be suboptimal for non-symmetric noise - This discrepancy is especially pronounced for losses based on Shannon-Entropy like CE. In contrast, the other generalised forward-corrected losses, as we had anticipated, exhibit greater resilience to the precise noise structure and consistently outperform the baseline across different types of noise.

\paragraph*{Impact of Estimated Noise Rate} Figure~\ref{fig:entropy_graph2s} illustrates how the clean test accuracy varies with the estimated noise rate, $\widehat{\eta}$, for training on noisy CIFAR10 (using SCE loss) and EMNIST (using CE loss) datasets. Both datasets were corrupted with symmetric noise at a rate of $\eta=0.4$. We trained models using noise-bounds calculated from $\widehat{\eta}$, substituting it into Equation~\ref{eqn:noise_bound} to compute $B(\widehat{\eta}, c=10)$, and plotted the clean test performance against $\widehat{\eta}$. Initially, at $\widehat{\eta}\approx 0$, where no effective bound constrains the model, overfitting is common. As $\widehat{\eta}$ increases, the bound restricts overfitting, enhancing test accuracy. Optimal performance is observed near the true noise rate at $\widehat{\eta}\approx 0.4$, marked by a vertical green dotted line. Beyond this point, performance declines as the model underfits the noisy data. The prominent peak in performance near this green line empirically validates our theoretical approach. Interestingly, slight overestimations of the noise rate often yield marginal improvements in performance, likely originating from our simplifying assumption, which modelled the underlying distributions as separable. Appendix~\ref{sec:further_experiments} explores this in more detail, treating the bound `$B$' as a hyperparameter.

\begin{table*}[!htb]
    \centering
    \begin{adjustbox}{max width=1.00\textwidth}
    \begin{tabular}{l|cc|cc|cc|cc|cc}
    \textbf{} & \multicolumn{2}{c|}{MNIST} & \multicolumn{2}{c|}{FashionMNIST} & \multicolumn{4}{c|}{EMNIST} & \multicolumn{2}{c}{CIFAR10} \\
\multirow{2}{*}{Losses} & \multicolumn{1}{c}{\multirow{2}{*}{0.4}} & \multicolumn{1}{c|}{\multirow{2}{*}{0.6}} & \multicolumn{1}{c}{\multirow{2}{*}{0.2}} & \multicolumn{1}{c|}{\multirow{2}{*}{0.4}} & \multicolumn{2}{c}{0.2} & \multicolumn{2}{c|}{0.4} & \multicolumn{1}{c}{\multirow{2}{*}{0.2}} & \multicolumn{1}{c}{\multirow{2}{*}{0.4}} \\
 & \multicolumn{1}{c|}{} & \multicolumn{1}{c|}{} & \multicolumn{1}{c|}{} & \multicolumn{1}{c|}{} & \multicolumn{1}{c}{Top 1} & \multicolumn{1}{c|}{Top 5} & \multicolumn{1}{c}{Top 1} & \multicolumn{1}{c|}{Top 5} & \multicolumn{1}{c|}{} & \multicolumn{1}{c}{}  \\ \hline
MSE  & $93.3_{\pm 0.47 }$& $85.8_{\pm 0.95 }$& $84.8_{\pm 0.22} $& $80.6_{\pm 0.84 }$& $82.9_{\pm 0.29}$ & $98.1_{\pm 0.04}$ & $80.2_{\pm 0.19}$ & $97.1_{\pm 0.07}$ & $78.7_{\pm 1.51}$ &   $56.4_{\pm 0.11}$ \\
MAE & $97.9_{\pm 0.08} $& $96.4_{\pm 0.08} $& $83.2_{\pm 0.10} $& $82.2_{\pm 0.37} $& $49.8_{\pm 2.83}$ & $52.2_{\pm 0.10}$ & $50.4_{\pm 1.14}$ & $51.4_{\pm 0.96}$ & $88.6_{\pm 1.34}$ &   $78.9_{\pm 5.95}$\\
NCE & $97.8_{\pm 0.06} $ & $96.0_{\pm 0.25} $ & $87.7_{\pm 0.26} $ & $86.3_{\pm 0.14} $ & $84.5_{\pm 0.25}$ & $97.9_{\pm 0.05}$ & $82.6_{\pm 0.81}$ & $96.7_{\pm 0.03}$ & $\textcolor{black}{\bm{89.3}_{\pm 0.40}}$ & $\textcolor{black}{\bm{86.0}_{\pm 0.81}}$ \\
MixUp & $95.8_{\pm 1.24} $& $86.8_{\pm 0.85} $& $86.9_{\pm 0.10} $& $82.3_{\pm 0.54} $& $84.3_{\pm 0.08}$ & $98.1_{\pm 0.04}$ & $81.6_{\pm 0.48}$ & $97.1_{\pm 0.08}$ & $86.0_{\pm 0.46}$ &  $77.9_{\pm 0.49}$ \\
Sph. & $95.0_{\pm 0.41}$ & $88.1_{\pm 0.82}$ & $87.2_{\pm 0.04}$ & $84.1_{\pm 0.75}$ & $84.6_{\pm 0.12}$ & $98.3_{\pm 0.05}$ & $83.2_{\pm 0.29}$ & $98.1_{\pm 0.58}$ & $86.6_{\pm 0.01}$  & $72.1_{\pm 0.80}$\\
Boot. & $86.6_{\pm 0.56} $& $71.2_{\pm 1.17} $& $82.0_{\pm 0.61}$ & $73.4_{\pm 1.06} $ & $80.5_{\pm 0.24}$ & $96.7_{\pm 0.06}$ & $77.3_{\pm 0.98}$ & $95.0_{\pm 0.25}$ & $77.0_{\pm 1.57}$ &  $58.2_{\pm 2.99}$\\
Trunc. & $97.1_{\pm 0.12} $& $94.2_{\pm 0.39} $& $87.8_{\pm 0.29} $& $85.3_{\pm 0.77} $& $84.1_{\pm 0.53}$ & $97.4_{\pm 1.03}$ & $83.1_{\pm 0.55}$ & $97.2_{\pm 1.00}$ & $88.3_{\pm 0.56}$ &   $84.2_{\pm 0.69}$ \\ 
CL  & $82.7_{\pm 0.57}$ & $67.5_{\pm 1.83} $& $81.2_{\pm 0.34}$ & $73.1_{\pm 0.66} $ & $79.6_{\pm 0.17}$ & $96.4_{\pm 0.05}$ & $75.1_{\pm 0.67}$ & $94.2_{\pm 0.24}$ & $76.0_{\pm 2.16}$ &   $59.4_{\pm 4.20}$\\ 
ELR & $\bm{98.1}_{\pm 0.04}$ & $\bm{97.8}_{\pm 0.07} $& $85.3_{\pm 0.23}$ & $83.4_{\pm 0.02} $ & $81.8_{\pm 0.26}$ & $97.5_{\pm 0.21}$ & $76.6_{\pm 0.10}$ & $96.5_{\pm 0.11}$ & $88.1_{\pm 0.82}$ &   $85.7_{\pm 0.06}$\\ \hline
FCE. & $95.4_{\pm 0.25} $ & $92.3_{\pm 0.13} $ & $83.6_{\pm 0.11} $ & $79.9_{\pm 0.78} $ & $83.1_{\pm 0.12}$ & $98.4_{\pm 0.20}$ & $80.6_{\pm 0.12}$ & $98.0_{\pm 0.03}$ & $84.7_{\pm 0.40}$ &  $75.1_{\pm 0.04}$ \\
\rowcolor{green}
FCE$+$B & $\boxed{95.7_{\pm{0.18}}}$ & $\boxed{92.7_{\pm{0.74}}}$ & $\boxed{84.8_{\pm{0.26}}}$ & $\boxed{81.7_{\pm{0.27}}}$  & $\boxed{83.4_{\pm{0.09}}}$ & $\boxed{\bm{98.5}_{\pm{0.03}}}$ & $\boxed{81.6_{\pm{0.51}}}$ & $\boxed{\bm{98.1}_{\pm{0.15}}}$ & $\boxed{86.7_{\pm_{0.21}}}$ & $\boxed{82.2_{\pm_{0.06}}}$\\ 
\hline
GCE  & $94.4_{\pm 0.36}$ & $83.8_{\pm 1.14} $ &  $86.4_{\pm 0.24}$ & $81.6_{\pm 0.37}$ & $84.3_{\pm 0.13}$ & $98.4_{\pm 0.08}$ & $82.7_{\pm 0.07}$ &   $97.9_{\pm 0.02}$ & $81.1_{\pm 0.72}$ & $60.0_{\pm 1.31} $ \\
\rowcolor{green}
GCE$+$B & $\boxed{96.6_{\pm 0.22}}$ & $\boxed{94.0_{\pm 0.13}} $ & $\boxed{86.5_{\pm 0.56}}$ & $\boxed{85.5_{\pm 0.13}}$ & $84.1_{\pm 0.29}$ & $\boxed{98.4_{\pm 0.04}}$ & $\boxed{82.8_{\pm 0.28}}$ &  $\boxed{98.0_{\pm 0.06}}$ & $\boxed{86.1_{\pm 0.22}}$ & $\boxed{79.0_{\pm 1.17}}$ \\
\hline
SCE & $89.5_{\pm 5.29} $& $70.2_{\pm 0.69} $& $82.7_{\pm 0.64} $& $74.4_{\pm 0.37} $& $82.1_{\pm 0.33}$ & $96.8_{\pm 0.10}$ & $79.6_{\pm 0.61}$ & $95.4_{\pm 0.15}$ & $78.2_{\pm 0.42}$ &   $59.0_{\pm 4.43}$\\
\rowcolor{green}
SCE$+$B & $\boxed{97.0_{\pm 0.16}}$ & $\boxed{93.4_{\pm 0.29}} $& $\boxed{87.5_{\pm 0.22}}$ & $\boxed{85.2_{\pm 0.98}} $ & $\boxed{83.5_{\pm 0.29}}$ & $\boxed{97.3_{\pm 0.14}}$ & $\boxed{81.8_{\pm 0.52}}$ & $\boxed{96.4_{\pm 0.20}}$ & $\boxed{88.9_{\pm 0.44}}$ &   $\boxed{84.7_{\pm 0.37}}$\\
\hline
CE & $80.8_{\pm 2.31}$ & $67.3_{\pm 0.80} $& $80.9_{\pm 1.11} $& $72.1_{\pm 2.16} $& $79.9_{\pm 0.28}$ & $96.4_{\pm 0.08}$ & $75.6_{\pm 0.20}$ & $94.2_{\pm 0.24}$ & $76.9_{\pm 1.22}$  &   $59.9_{\pm 2.15}$ \\
\rowcolor{green}
CE$+$B & $\boxed{96.2_{\pm 0.32}}$ & $\boxed{93.0_{\pm 0.09}}$& $\boxed{87.9_{\pm 0.10}}$ & $\boxed{84.7_{\pm 0.37}} $ & $\boxed{80.8_{\pm 0.08}}$ & $\boxed{97.0_{\pm 0.04}}$ & $\boxed{78.9_{\pm 0.12}}$ & $\boxed{96.1_{\pm 0.26}}$ & $\boxed{84.5_{\pm 0.73}}$ &   $\boxed{76.0_{\pm 1.13}}$\\ 
\hline
CEP & $97.5_{\pm 0.08 }$ & $92.1_{\pm 0.44} $ & $87.8_{\pm 0.12 }$ & $84.8_{\pm 0.23}$ & $85.5_{\pm 0.10}$ & $98.1_{\pm 0.07}$ & $84.3_{\pm 0.22}$ & $97.6_{\pm 0.14}$ & $84.2_{\pm 0.51}$ &  $58.2_{\pm 2.94}$\\
\rowcolor{green}
CEP$+$B & $95.6_{\pm 0.32}$ & $85.5_{\pm 0.77} $& $\boxed{\bm{88.1}_{\pm 0.31}}$ & $84.2_{\pm 0.33} $ &  $\boxed{\bm{85.8}_{\pm 0.12}}$ & $\boxed{98.3_{\pm 0.02}}$ &  $\boxed{\bm{84.8}_{\pm 0.10}}$ & $\boxed{98.0_{\pm 0.04}}$ & $\boxed{88.5_{\pm 0.32}}$ &   $\boxed{85.1_{\pm 0.20}}$\\ 
\end{tabular}
\end{adjustbox}
    \caption{Test accuracies obtained by using different losses on the noisy MNIST/ FashionMNIST/EMNIST/CIFAR10 datasets. Losses implementing the noise-bound shaded in blue. When using this bound provides benefit, the corresponding value is \boxed{boxed}. Overall top values in \textbf{bold}.}\label{table:MNIST}
    \centering
\end{table*}

\begin{table*}[!htb]
    \centering
    \begin{adjustbox}{max width=0.98\textwidth}
    \begin{tabular}{l|cccc|cccc|cc}
    \textbf{}  & \multicolumn{4}{c|}{CIFAR100} & \multicolumn{4}{c|}{ASYM-CIFAR100} & \multicolumn{2}{c}{Non-Uniform-EMNIST} \\
\multirow{2}{*}{Losses} & \multicolumn{2}{c|}{0.2} & \multicolumn{2}{c|}{0.4} & \multicolumn{2}{c|}{0.2} & \multicolumn{2}{c|}{0.4} & \multicolumn{2}{c}{0.6} \\
  \multicolumn{1}{c|}{} & Top1 & \multicolumn{1}{c|}{Top5} & Top1 & Top5 & Top1 & \multicolumn{1}{c|}{Top5} & Top1 & Top5 & \multicolumn{1}{c}{Top 1} & \multicolumn{1}{c}{Top 5} \\ \hline
 MSE & $57.2_{\pm 0.93}$ & $78.6_{\pm 0.25}$ & $40.6_{\pm 0.38}$ & $63.0_{\pm 0.24}$ & $56.3_{\pm 0.11}$  & $82.6_{\pm 0.22}$  & $40.7_{\pm 0.12}$ & $74.4_{\pm 0.25}$ & $44.7_{\pm 2.66}$ & $86.7_{\pm 3.10}$ \\
MAE & $10.0_{\pm 0.11}$  & $13.8_{\pm 0.28}$ & $7.6_{\pm 1.89}$  & $11.6_{\pm 1.25}$ & $7.1_{\pm 6.02}$  & $11.1_{\pm 6.6}$  & $11.1_{\pm 5.43}$  & $25.1_{\pm 5.76}$ &  $9.8_{\pm 1.74}$ & $23.1_{\pm 1.80}$\\
NCE  & $38.7_{\pm 3.13}$ & $51.8_{\pm 3.77}$ & $19.1_{\pm 0.20}$ & $28.8_{\pm 0.15}$ & $16.3_{\pm 1.24}$ & $25.4_{\pm 1.80}$  & $21.8_{\pm 1.24}$ & $37.2_{\pm 1.80}$  & $18.0_{\pm 1.17}$ &  $38.8_{\pm 1.93}$ \\
MixUp  & $59.6_{\pm 0.31}$ & $81.5_{\pm 0.39}$ & $51.3_{\pm 8.63}$ & $75.8_{\pm 8.09}$ & $61.2_{\pm 0.88}$ & $86.0_{\pm 1.12}$  & $47.2_{\pm 0.60}$ & $81.3_{\pm 0.23}$ &  $\bm{52.4}_{\pm 0.80}$ & $\bm{95.5}_{\pm 0.08}$ \\
Sph.   & $57.7_{\pm 0.18}$ & $82.9_{\pm 0.54}$ & $48.8_{\pm 0.51}$ & $74.3_{\pm 0.73}$ & $54.2_{\pm 0.32}$ & $81.2_{\pm 0.29}$  & $39.2_{\pm 0.31}$ & $72.1_{\pm 0.15}$ & $41.9_{\pm 0.10}$ &  $94.4_{\pm 0.04}$ \\
Boot.  & $54.0_{\pm 0.37}$ & $76.4_{\pm 0.39}$ & $37.7_{\pm 0.89}$ & $60.9_{\pm 1.52}$ & $56.0_{\pm 0.34}$ & $83.8_{\pm 0.03}$  & $43.2_{\pm 0.35}$ & $78.3_{\pm 0.20}$ &  $49.1_{\pm 0.29}$ & $95.3_{\pm 0.42}$ \\
Trunc. & $58.1_{\pm 0.36}$ & $82.7_{\pm 0.37}$ & $50.9_{\pm 1.17}$ & $77.2_{\pm 0.59}$ & $56.3_{\pm 0.62}$ & $82.3_{\pm 0.61}$  & $45.2_{\pm 0.81}$ & $75.6_{\pm 0.29}$ &  $23.7_{\pm 0.98}$ & $40.1_{\pm 1.24}$ \\
CL & $53.0_{\pm 0.21}$ & $76.3_{\pm 0.19}$ & $36.3_{\pm 0.77}$ & $60.1_{\pm 0.66}$ & $55.3_{\pm 0.48}$ & $83.5_{\pm 0.28}$  & $42.4_{\pm 0.45}$ & $78.1_{\pm 0.14}$ &  $48.2_{\pm 0.45}$ & $95.0_{\pm 0.04}$ \\
ELR & $10.4_{\pm 0.24}$ & $31.7_{\pm 0.44}$ & $10.0_{\pm 0.64}$ & $30.1_{\pm 0.88}$ & $10.8_{\pm 0.21}$ & $32.7_{\pm 0.53}$  & $10.3_{\pm 0.39}$ & $30.8_{\pm 0.35}$ &  $40.3_{\pm 0.39}$ & $93.0_{\pm 0.24}$ \\\hline
FCE & $56.9_{\pm 0.58}$ & $79.2_{\pm 0.14}$ & $43.7_{\pm 0.15}$ & $66.2_{\pm{0.19}}$ & $55.3_{\pm 0.54}$ & $83.5_{\pm 0.24}$  & $41.4_{\pm 0.55}$ & $77.3_{\pm 0.75}$ &  $39.0_{\pm 0.05}$ & $67.8_{\pm 0.47}$ \\
\rowcolor{green}
FCE$+$B & $56.1_{\pm 2.22}$ & $\boxed{81.8_{\pm 1.37}}$ & $\boxed{50.2_{\pm 0.02}}$ & $\boxed{77.2_{\pm 0.19}}$ & $54.2_{\pm 0.44}$ & $83.3_{\pm 0.43}$  & $\boxed{43.8_{\pm 0.02}}$ & $\boxed{77.5_{\pm 0.13}}$ &  $\boxed{40.0_{\pm 0.35}}$ & $\boxed{73.2_{\pm 0.08}}$ \\
\hline
GCE  & $60.0_{\pm 0.13}$ & $82.6_{\pm 0.63}$ & $44.9_{\pm 0.07}$ & $67.2_{\pm 0.34}$ & $53.8_{\pm 0.55}$ & $81.6_{\pm 0.14}$  & $39.4_{\pm 0.44}$ & $74.0_{\pm 0.36}$ &  $44.8_{\pm 0.62}$ & $91.2_{\pm 0.70}$ \\
\rowcolor{green}
 GCE$+$B & $59.4_{\pm 0.02}$ & $\boxed{83.5_{\pm 0.24}}$  & $\boxed{50.3_{\pm 0.11}}$ & $\boxed{75.3_{\pm 0.64}}$ & $\boxed{55.4_{\pm 0.55}}$ & $\boxed{83.0_{\pm 0.35}}$  & $\boxed{46.5_{\pm 1.44}}$ & $\boxed{77.7_{\pm 0.35}}$ &  $\boxed{47.1_{\pm 0.20}}$ & $\boxed{93.5_{\pm 0.43}}$ \\
\hline
SCE  & $55.9_{\pm 0.53}$ & $76.5_{\pm 0.15}$ & $38.7_{\pm 0.60}$ & $60.9_{\pm 0.41}$ & $57.5_{\pm 0.19}$ & $83.7_{\pm 0.17}$  & $43.3_{\pm 0.87}$ & $77.5_{\pm 0.75}$ &  $47.2_{\pm 0.33}$ & $92.5_{\pm 0.01}$\\
\rowcolor{green}
SCE$+$B & $55.5_{\pm 0.90}$ & $\boxed{77.4_{\pm 0.84}}$ & $\boxed{47.1_{\pm 1.32}}$ & $\boxed{69.2_{\pm 1.18}}$ & $\boxed{57.9_{\pm 0.83}}$ & $83.7_{\pm 0.41}$  & $\boxed{50.0_{\pm 1.62}}$ & $\boxed{80.4_{\pm 0.65}}$ &  $\boxed{47.9_{\pm 0.80}}$ & $\boxed{93.8_{\pm 0.05}}$ \\
\hline
CE & $52.3_{\pm 1.35}$ & $75.6_{\pm 0.93}$ & $35.3_{\pm 1.14}$ & $59.3_{\pm 0.81}$ & $54.9_{\pm 0.12}$ & $83.3_{\pm 0.25}$ & $42.4_{\pm 0.16}$ & $78.9_{\pm 0.56}$ &  $48.6_{\pm 0.11}$ & $95.3_{\pm 0.10}$ \\
\rowcolor{green}
CE$+$B & $\boxed{50.9_{\pm 1.01}}$ & $\boxed{76.5_{\pm 0.86}}$ & $\boxed{39.9_{\pm 1.02}}$ & $\boxed{65.8_{\pm 1.19}}$ & $52.9_{\pm 1.86}$ & $83.2_{\pm 0.88}$  & $34.7_{\pm 2.51}$ & $73.4_{\pm 1.50}$  &  $45.5_{\pm 5.11}$ & $93.0_{\pm 0.16}$ \\
 \hline
CEP  & $58.8_{\pm 0.87}$ & $78.6_{\pm 0.38}$ & $43.5_{\pm 0.24}$ & $65.1_{\pm 1.27}$ & $59.4_{\pm 0.08}$ & $82.2_{\pm 0.03}$  & $46.5_{\pm 0.17}$ & $76.4_{\pm 0.25}$ &  $48.2_{\pm 0.05}$ & $95.4_{\pm 0.07}$ \\
\rowcolor{green}
CEP$+$B &  $\boxed{\bm{62.3}_{\pm 0.87}}$ & $\boxed{\bm{85.1}_{\pm 0.46}}$ & $\boxed{\bm{54.3}_{\pm 0.86}}$ & $\boxed{\bm{79.2}_{\pm 0.93}}$ & $\boxed{\bm{63.0}_{\pm 0.92}}$ &  $\boxed{\bm{87.5}_{\pm 0.32}}$  & $\boxed{\bm{53.0}_{\pm 0.28}}$ & $\boxed{\bm{82.8}_{\pm 0.13}}$ &  $45.0_{\pm 0.48}$ & $95.0_{\pm 0.08}$\\
    \end{tabular}
    \end{adjustbox}
    \caption{Test accuracies for different losses on the noisy CIFAR100/Asym-CIFAR100/Non-Uniform EMNIST datasets. Losses implementing the noise-bound shaded in blue. When using this bound provides benefit, the corresponding value is \boxed{boxed}. Overall top values in \textbf{bold}. }
    \label{table:CIFAR}
\end{table*}

\section{Conclusion, Limitations and Further Work}
\paragraph*{} In this work, we have looked at mitigating the impact of label noise in forward-corrected losses by training subject to a bound, motivated by our observation that label noise implies a minimum achievable risk. 

\paragraph*{Summary} We began by defining a family of loss functions which we called `generalised forward-corrected losses' since they contain correction losses as a strict subset. We showed how some popular existing robust losses can be formulated as generalised forward-corrected loss functions. We explained how label noise implies the existence of a lower bound on the achievable risk. We proposed training a model and preventing the training loss going below a given threshold - we called this a `bounded loss'. We derived this lower bound for generalised forward-corrected losses showing it is the average entropy of the noisy label distribution (with respect to the entropy function of the base loss). We showed that uniform symmetric label noise is a `worst-case' noise meaning that it has the highest entropy for a given noise rate $\eta$. When the label noise rate is known and but the noise model is otherwise unknown, we proposed using this worst-case entropy as a bound for our bounded-loss. Finally, we empirically showed that training using this `noise-bound' improves the performance.   

\subsection{Limitations and Future Work} 
While our method has shown success in the settings we examined, its applicability does have limitations. The reliability of the derived bounds depends on the dataset being well-approximated as separable. For datasets with high inherent randomness, such as those in the medical domain, the proposed bounds may not be sufficient. Future work could aim to extend our results to environments with more prevalent noise. Additionally, our approach relies on approximating the average noise rate, which limits its applicability to scenarios where the noise rate is approximately known. Nonetheless, this represents a substantial improvement over existing methods that require modeling the entire noise structure.

Although our proposed method generally offers benefits, there are still observable differences in performance between different loss functions. Understanding why these differences occur is a crucial direction for future research. Another promising area of future work involves extending these ideas to backward-corrections \citep{fprop}. The results in Section~\ref{sec:entropy_bounding} and the loss functions used in our experiments focus on forward-corrections. Backward-corrections, which are more prone to overfitting due to their tendency to generate large gradients, present an interesting challenge. It would be worthwhile to explore how loss bounding affects the robustness of these loss functions.

\bibliographystyle{apalike}
\bibliography{Bounding/bounding}


\newpage
\appendix
\onecolumn

\section{Notation and Terminology}\label{sec:notation}
\begin{longtable}{>{\raggedright\arraybackslash}p{2cm} p{12cm}}
\caption{Notation Table: Table summarising the notation used in this study.}\label{ch1:tab:notation} \\
\hline
\textbf{Symbol} & \textbf{Description} \\
\hline
\endfirsthead

\hline
\textbf{Symbol} & \textbf{Description} \\
\hline
\endhead

\hline
\endfoot

\hline
\endlastfoot

$c$ & Number of classes/labels. \\
$\mathcal{X}$ & The data domain, a subset of $\mathbb{R}^d$. \\
$\mathcal{Y}$ & The label space, defined as ${1, 2, 3, \ldots, c}$. \\
$\Delta$ & Probability simplex: The set of vectors $(p_1, p_2, \ldots, p_c)$ where each $p_i \geq 0$ and $\sum p_i = 1$. \\
$\bm{q}$ & A probability vector representing a forecast. \\
$\bm{p}$ & A probability vector representing ground-truth probabilities. \\
$\bm{q}:\mathcal{X}\rightarrow \Delta$ & A probability estimator model producing a forecast at each point in $\mathcal{X}$.\\
$\bm{p}(y\mid x)$ & The vector representing the class posterior probabilities at $x$, expressed as $\bm{p}(y\mid x) = (p(y=1\mid x), p(y=2\mid x), \ldots, p(y=c\mid x))$. \\
$f$ & A classifier function mapping each point in $\mathcal{X}$ to a label in $\mathcal{Y}$. \\
$L$ & The loss function used to evaluate the accuracy of predictions against actual labels. \\
$\bm{L}$ & The vector-valued function of the loss function $L$, where $\bm{L}(\bm{q}) = (L(\bm{q},1), \ldots, L(\bm{q},c))$. \\
$R_{L}(\bm{q})$ & The $L$-risk of an estimator $\bm{q}$. \\
$R_{L}(\bm{q})(x)$ & The \emph{pointwise} $L$-risk of an estimator $\bm{q}$ at $x$. \\
$R^{\eta}_{L}(\bm{q})$ & The noisy $L$-risk of an estimator $\bm{q}$. \\
$\mathcal{H}$ or $\mathcal{H}_{L}$ & The entropy function corresponding to the loss function $L$. \\
$H_{L}(\bm{p},\bm{q})$ & The expected $L$-loss for a forecast $\bm{q}$ given the true label distribution $\bm{p}$. \\
$\eta$ & The noise rate of the label noise model. \\
$y, \widetilde{y}$ & The actual label and the noisy label, respectively. \\
$p(x,y)$ & The joint distribution of data and labels. \\
$\widetilde{p}(x,y)$ & The joint distribution of data and labels after corruption by label noise. \\
$p(\widetilde{y}\mid y, x)$ & The noise model generating noisy labels $\widetilde{y}$ from clean labels $y$ given $x$. \\
$T$ & The label noise transition matrix describing the probabilities of transforming a true label into a noisy label. \\
$\bm{e}_k$ & The standard basis vector in $\mathbb{R}^c$ where only the $k$\textsuperscript{th} element is 1, and all others are 0.
\end{longtable}

\subsection{Terminology}
The majority of terminology which we adopt is the same as that found in any other contemporary machine learning paper on label noise. Label noise is categorised into two primary types: closed-set and open-set. We specifically address \emph{closed-set label noise}, wherein the original label set and the noisy label set are identical. This is in contrast to open-set noise where the true label may not be included in the established label set \citep{wei2021open}. For instance, in a web-scraped dataset of animal images, a photograph of Tiger Woods' might be erroneously labeled as Tiger', even though the correct label, `golfer', is absent from the set of labels When a label noise model has no dependence on the datapoint $p(\widetilde{y}\mid y,x)=p(\widetilde{y}\mid y)$ we say that it is `uniform' or `class-conditional', otherwise we sat that the label noise model is non-uniform. A uniform label noise model is called `symmetric' where the transition probability between any distinct classes is the same and `asymmetric' otherwise. Pairwise label noise is a subset of asymmetric label noise where mislabelling occurs between specific class pairs. 

\section{Related Work}\label{sec:related_work}

\textbf{Corruption identification methods:}  Methods in this class first identify different types of corrupted samples and then re-weight, refine or remove these from the dataset. Many such methods rely on the heuristic that noisy samples have higher losses, especially earlier in training. This is based upon the well-known observation that complex models generally learn to classify easier data points before over-fitting on noise \citep{memorisation}. \citet{sefie} use the entropy of the historical prediction distribution to identify refurbishable samples. \citet{bmm} deploy a beta mixture model in the loss space and use the posterior probabilities that a sample is corrupted in the parameters of a bootstrapping loss. \citet{curiculum} define a loss which ignores samples that incur a higher loss value. Other approaches include a two-network model \citep{dividemix} in which a Gaussian mixture model selects clean samples based on their loss values. These samples are then taken and used to train the other network. A number of other two-network models work on similar lines. Co-teaching \citep{coteaching} trains two networks, one on the outputs of the other with the lowest loss values. Decoupling \citep{decoupling} has the two networks update on the basis of disagreement with each other. Mentor-Net \citep{mentornet} harnesses a teacher network for training a student network by re-weighting probably correct samples.

\cite{metaLabelCorrect} is a meta-learning approach in which a label correction network corrects labels and feeds them to a classifier to train. This is done so that the performance of the classifier is optimised on a held-out validation set. Other meta-learning approaches include \cite{meta_gradient} in which samples are re-weighted so that the learned classifier generalises better to a held-out clean meta-set. Similarly, in \cite{metaLabels} (soft), labels are treated as learnable parameters and learned to maximise the performance on the meta-set. Other corruption identification methods expect that noisily labelled data lie heavily out of class distribution under an appropriate metric. In FINE \citep{eigenfine}, noisy samples are detected and removed using an eigendecomposition in the latent space. Alternatively, a KNN \cite{knn} in the latent space can identify and select samples based on their coherence to their neighbours' classes.



\subsection{Robust Loss Functions}
An important set of methods for learning in the presence of noisy labels are robust loss methods. These methods work by substituting the cross-entropy objective for a loss which is less prone to inducing overfitting in the presence of label noise.  An advantage here is the simplicity of these methods, as they do not require multiple networks or complex noise detection pipelines. This makes them suitable for plug-and-play use in any setting. These approaches may, crudely, be broken down into two classes; regularisation-based loss and correction-based losses. 

\textbf{Correction-Based Losses:} Correction-based loss functions are motivated by the observation that label noise causes a distortion of the risk objective which can prevent consistency/Bayes-consistency. Methods in this class achieve robustness by altering the loss function to correct this distortion. We further subdivide these into correction-based losses and noise-tolerant losses. The former corrects the loss to compensate for the noising procedure \citep{larsen, mnihHinton, gold, old_backward}. This procedure involves using noisy \citep{fprop} or clean data \citep{trustedData} to infer the noise transition matrix. The estimated noise model is then used to noise the model outputs, in the case of the `forward-correction', or denoise the labels, in the case of the `backward-correction' \citep{old_backward, fprop}. A downside of these methods is the difficulty in estimating a noising matrix for a large number of classes and difficulty handling non-uniform noise models which impacts generality \citep{noiseAdapt, fprop_old}. Noise-tolerant loss functions \citep{robust_theory, robust_theory_earlier_binary, unhinged, normalised_losses} use loss functions which ensure Bayes-consistency despite the presence of label noise, without the need to apply a correction to the loss function. 

\textbf{Regularisation-Based Robust Losses:} Regularisation-based robust loss functions regularise the loss so that it less susceptible to overfitting to noise than the commonly used cross-entropy \citep{janocha2017loss}. Regularisation-based loss functions are varied. Many methods apply regularisation to ensure consistency between the predicted labels of nearby datapoints \citep{mixup, englesson2021consistency, gjs, neighbour} or consistency of model predictions over time \citep{elr, cheng2024dynamic}. \citet{janocha2017loss} observe that $L^p$-losses typically used for regression show good robustness in a classification setting. This is particularly true for the MAE loss (Mean Absolute Error), which exhibits good robustness albeit with a tendency to under-fit and train slowly \citep{normalised_losses}. This observation motivates a set of methods which combine or interpolate between CE and MAE to obtain the best of both.
\citet{sce} propose a solution to this by adding a `reverse cross-entropy' (RCE) term to the usual cross-entropy (CE) term. \citet{taylorce} curtail the Taylor expansion of cross-entropy making it perform more like MAE. Generalised Cross-Entropy \citep{GCE_Loss} construct a family of losses which interpolate between CE and MAE using a Box-Cox transformation in order to get the best of both. Other methods soften of mix labels to avoid overfitting \citep{bootstrap, labelSmoothing, soft_labels}.
\citet{flooding}, in-line with our work, bound the loss to prevent overfitting. However, their method is only briefly discussed in relation to label noise and provides no mechanism for selecting a loss bound. We ground this work firmly in the context of label noise and provide theoretical results for producing bounds on the loss.


\section{Proofs}\label{sec:bounding_proofs}
The following standard result regarding proper losses due to Savage \citep{savagen, gneiting2007strictly} is indispensable in subsequent proofs.
\begin{theorem}
    [Savage's Theorem]\label{thm:savages}
A differentiable loss function $L$ is (strictly) proper if and only if there exists a (strictly) concave function $\mathcal{J}: \mathbb{R}^c \rightarrow \mathbb{R}$ such that for each $\bm{q} \in \Delta$ and $k \in \mathcal{Y}$,
\begin{align*}
    L(\bm{q}, k) = \nabla \mathcal{J}(\bm{q})(\bm{e}_k - \bm{q}) + \mathcal{J}(\bm{q}).
\end{align*}
Moreover, $\mathcal{J}$ is precisely the entropy function for $L$. Thus, in particular, a loss is (strictly) proper if and only if its associated entropy function is (strictly) concave.
\end{theorem}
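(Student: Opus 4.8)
The plan is to prove the two implications separately, in both cases taking the candidate function $\mathcal{J}$ to be the entropy function $\mathcal{H}(\bm{p}) = \bm{p}^T\bm{L}(\bm{p})$ itself; doing so simultaneously establishes the ``moreover'' claim that $\mathcal{J}$ is forced to coincide with the entropy function. The converse direction (concavity $\Rightarrow$ properness) is a short direct computation, whereas the forward direction (properness $\Rightarrow$ the Savage representation) carries the real work, since there one must produce $\mathcal{J}$ and verify both its concavity and the gradient identity.

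For the converse, I would assume $\mathcal{J}$ is concave with $L(\bm{q},k) = \nabla\mathcal{J}(\bm{q})(\bm{e}_k - \bm{q}) + \mathcal{J}(\bm{q})$, and expand the expected loss. Using $\sum_k p_k = 1$ and $\sum_k p_k \bm{e}_k = \bm{p}$ one gets $H_L(\bm{p},\bm{q}) = \nabla\mathcal{J}(\bm{q})(\bm{p}-\bm{q}) + \mathcal{J}(\bm{q})$. The supergradient inequality for the concave $\mathcal{J}$, namely $\mathcal{J}(\bm{p}) \le \mathcal{J}(\bm{q}) + \nabla\mathcal{J}(\bm{q})(\bm{p}-\bm{q})$, then yields $H_L(\bm{p},\bm{q}) \ge \mathcal{J}(\bm{p}) = H_L(\bm{p},\bm{p})$, the last equality coming from setting $\bm{q}=\bm{p}$. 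Hence $\bm{q}=\bm{p}$ minimises the expected loss, so $L$ is proper and $\mathcal{J}=\mathcal{H}$; strict concavity makes the inequality strict for $\bm{q}\neq\bm{p}$, giving strict properness.

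For the forward direction I would first establish concavity of $\mathcal{H}$. Since properness means $\mathcal{H}(\bm{p}) = \min_{\bm{q}\in\Delta} \bm{p}^T\bm{L}(\bm{q})$, the entropy function is a pointwise infimum of the affine-in-$\bm{p}$ maps $\bm{p}\mapsto\bm{p}^T\bm{L}(\bm{q})$, and such an infimum is concave. To recover the gradient formula I would invoke the first-order optimality condition for the constrained minimisation: differentiating $\bm{p}^T\bm{L}(\bm{q})$ in $\bm{q}$ at the minimiser $\bm{q}=\bm{p}$, with the constraint $\sum_k q_k = 1$ handled by a multiplier, gives $\bm{p}^T\partial_{q_j}\bm{L}(\bm{p}) = \lambda$ for some $\lambda=\lambda(\bm{p})$ independent of $j$. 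Combined with the product rule $\partial_{p_j}\mathcal{H}(\bm{p}) = L(\bm{p},j) + \bm{p}^T\partial_{p_j}\bm{L}(\bm{p})$, this gives $\nabla\mathcal{H}(\bm{p}) = \bm{L}(\bm{p}) + \lambda\bm{1}$. Substituting into $\nabla\mathcal{H}(\bm{q})(\bm{e}_k - \bm{q}) + \mathcal{H}(\bm{q})$ and using $(\bm{e}_k - \bm{q})^T\bm{1} = 0$ (which annihilates the $\lambda\bm{1}$ term) together with $\bm{q}^T\bm{L}(\bm{q}) = \mathcal{H}(\bm{q})$ collapses the expression exactly to $L(\bm{q},k)$, as required.

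The main obstacle is the careful treatment of the simplex constraint in the forward direction. Because $\bm{p}$ is constrained to sum to one, $\nabla\mathcal{H}$ is only determined up to addition of a multiple of $\bm{1}$, which is precisely the origin of the multiplier $\lambda$; the representation is nonetheless well posed because $\bm{e}_k - \bm{q}$ is tangent to the simplex and so annihilates this ambiguity. A secondary point needing care is the equivalence of the strict cases: I would argue that strict properness — uniqueness of the minimiser $\bm{q}=\bm{p}$ for every $\bm{p}$ — forces each supporting affine map to touch $\mathcal{H}$ at a single point, which is equivalent to strict concavity, thereby closing the ``if and only if''. Throughout I assume differentiability of $L$, and hence of $\mathcal{H}$, as in the statement, so that the supergradient coincides with the genuine gradient.
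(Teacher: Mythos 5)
Your proposal is correct. Note, however, that the paper never proves this statement: it is quoted as a standard result and attributed to Savage and to Gneiting--Raftery, so there is no in-paper proof to compare against. Your argument is essentially the classical one from that literature: the converse direction via the gradient (supergradient) inequality for concave functions, after collapsing the expected loss to $\nabla\mathcal{J}(\bm{q})(\bm{p}-\bm{q})+\mathcal{J}(\bm{q})$; the forward direction by recognising $\mathcal{H}(\bm{p})=\min_{\bm{q}\in\Delta}\bm{p}^T\bm{L}(\bm{q})$ as a pointwise minimum of affine maps (hence concave), then recovering the representation through the stationarity condition $\bm{p}^T\partial_{q_j}\bm{L}(\bm{p})=\lambda$ and the observation that $\bm{e}_k-\bm{q}$ annihilates the $\lambda\bm{1}$ ambiguity. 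Both computations check out, and your handling of the strict case (one-point contact of supporting affine maps $\Leftrightarrow$ strict concavity) is sound. Two small technicalities you gloss over, which the standard references also treat lightly: the Lagrange-multiplier step as written requires $\bm{p}$ in the relative interior of $\Delta$ (on the boundary the non-negativity constraints can be active, and the identity $\nabla\mathcal{H}=\bm{L}+\lambda\bm{1}$ needs a limiting argument), and the theorem asks for $\mathcal{J}$ defined on $\mathbb{R}^c$, so strictly one should note that $\min_{\bm{q}\in\Delta}\bm{p}^T\bm{L}(\bm{q})$ already furnishes a concave extension of $\mathcal{H}$ off the simplex. Neither point undermines the argument.
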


\begin{definition}[Generalised Forward-Correction]
 Let $L_f$ be a loss function and $f:\Delta\rightarrow \Delta$ be an injective function. We say $L_f$ is a `generalised forward-corrected loss' if there exists a loss function $L$ such that for all $\bm{q}\in \Delta$, $k\in \{1,2,\ldots,c\}$
 \begin{align*}
     L_f(\bm{q}, k) = L(f(\bm{q}), k)
 \end{align*}
 We refer to $L$ as the \textbf{base loss}. $f$ can be thought of as a label noise model. 
\end{definition}

\begin{lemma}
    The GCE, SCE and FCE losses can be formulated as generalised forward-correction losses with a proper base loss. The noise models $f_{GCE}, f_{SCE}, f_{FCE}$ satisfy
    \begin{align*}
        (f^{-1}_{GCE}(\bm{p}))_i &= \frac{p_i^{\frac{1}{1-a}}}{\sum_{i=1}^c p_i^{\frac{1}{1-a}}},\\
        (f^{-1}_{SCE}(\bm{p}))_i &= \frac{p_i}{\lambda-Ap_i},\\
        f_{FCE}(\bm{p}) &= T^{-1}\bm{p},
    \end{align*}
     where $T$ is the invertible stochastic matrix used to define the correction, and $\lambda$ is a constant selected to ensure the correct normalisation.
\end{lemma}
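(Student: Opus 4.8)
The plan is to establish each of the three decompositions by exploiting Savage's Theorem (Theorem~\ref{thm:savages}), which characterises proper losses via a concave entropy function $\mathcal{J}$. The core strategy is the same in all three cases: we are given a robust loss $L_f(\bm{q}, k)$ and we must exhibit a \emph{proper} base loss $L$ and an injective $f$ such that $L_f(\bm{q}, k) = L(f(\bm{q}), k)$. Equivalently, writing $\bm{p} = f(\bm{q})$, we need $L(\bm{p}, k) = L_f(f^{-1}(\bm{p}), k)$ to be a proper loss in $\bm{p}$. So the workflow is: postulate that the base loss is proper, use the structural form of $L_f$ to read off what $f^{-1}$ must be, and then verify that the resulting $L$ is genuinely proper. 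Because the decomposition into a proper loss plus a noise model is claimed to be unique, it suffices to find \emph{some} proper $L$ and injective $f$ that work.

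For the GCE case, I would start from $L_{GCE}(\bm{q}, k) = \tfrac{1-q_k^a}{a}$ and hypothesise that the base loss is a known proper loss whose per-class form is a monotone function of a single coordinate. The natural guess is that $L$ is (a rescaling of) the log-loss or a power loss, and that $f^{-1}$ acts coordinatewise up to normalisation. Concretely, one seeks $f^{-1}$ of the form $(f^{-1}(\bm{p}))_i \propto p_i^{s}$ for some exponent $s$, then fixes $s$ by demanding that the composite loss, viewed as a function of $\bm{p}$, satisfies the properness condition from Savage's Theorem (its expected-loss function must be concave and minimised on the diagonal). Matching exponents should force $s = \tfrac{1}{1-a}$, giving the stated normalised power map. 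The SCE case proceeds analogously: from $L_{SCE}(\bm{q}, k) = -\log(q_k) + A(1-q_k)$ I would look for $f^{-1}$ with $(f^{-1}(\bm{p}))_i = \tfrac{p_i}{\lambda - A p_i}$, where $\lambda$ is pinned down by the simplex constraint $\sum_i (f^{-1}(\bm{p}))_i = 1$, and then check properness of the induced base loss.

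The FCE case is the easiest and essentially definitional: the forward-correction is $L_F(\bm{q}, k) = L(\widehat{T}\bm{q}, k)$ with $\widehat{T}$ invertible stochastic, so taking $f = \widehat{T}$ (equivalently $f^{-1} = T^{-1}$ as written in the appendix statement) and letting $L$ be the underlying proper base loss gives the decomposition immediately; injectivity of $f$ follows from invertibility of $\widehat{T}$. For the uniqueness assertion, I would argue that once we insist the base loss be proper, Savage's Theorem pins down $L$ from its entropy function, and the requirement that $f(\bm{q})$ recover the argument at which $L_f$ attains its proper-loss structure determines $f$ uniquely on the relevant image.

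The main obstacle will be the \emph{verification of properness} of the candidate base losses for GCE and SCE, rather than guessing the form of $f^{-1}$. Reading off $f^{-1}$ is a pattern-matching exercise, but confirming that the composite $L(\bm{p}, k) = L_f(f^{-1}(\bm{p}), k)$ really is proper requires showing its associated expected-loss function is concave and uniquely minimised on the diagonal --- ideally by exhibiting the concave potential $\mathcal{J}$ of Savage's Theorem explicitly and checking the gradient identity $L(\bm{p}, k) = \nabla\mathcal{J}(\bm{p})(\bm{e}_k - \bm{p}) + \mathcal{J}(\bm{p})$. For SCE the normalisation constant $\lambda$ depending implicitly on $\bm{p}$ complicates differentiation, so care is needed to ensure the simplex constraint is handled correctly when verifying the properness conditions.
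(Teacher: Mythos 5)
Your plan is sound and would get there: the FCE case and the uniqueness remark match the paper, and your ansatz for $f^{-1}$ in the GCE and SCE cases is the right one. However, the paper organises the GCE/SCE argument in the reverse order, and that inversion dissolves the step you flag as the main obstacle. Rather than guessing $f^{-1}$ and then verifying properness of the composite, the paper notes that if $L_f(\bm{q},k)=L(f(\bm{q}),k)$ with $L$ proper, then for each $\bm{p}\in\Delta$ the expected loss $\bm{q}\mapsto\sum_{i} p_i L_f(\bm{q},i)=\sum_i p_i L(f(\bm{q}),i)$ is minimised over the simplex exactly at the $\bm{q}$ satisfying $f(\bm{q})=\bm{p}$; hence $f^{-1}(\bm{p})$ \emph{must equal} $\argmin_{\bm{q}\in\Delta}\sum_i p_i L_f(\bm{q},i)$, and one simply computes this argmin. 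A Lagrange multiplier on the constraint $\sum_i q_i=1$ gives, for GCE, the stationarity condition $q_i^{1-a}\propto p_i$ and hence the normalised power map with exponent $\frac{1}{1-a}$ directly (no exponent matching is needed); for SCE it gives $p_i(A+1/q_i)=\lambda$, i.e.\ $q_i = p_i/(\lambda-Ap_i)$, where $\lambda$ is the multiplier itself, fixed only at the end by normalisation, so it is never differentiated through --- this removes the complication you anticipate; for FCE, properness of CE immediately yields $\bm{q}^*=T^{-1}\bm{p}$. With $f^{-1}$ so defined, properness of the base loss $L\coloneq L_f\circ f^{-1}$ is automatic: $H_L(\bm{p},\bm{r})=\sum_i p_i L_f(f^{-1}(\bm{r}),i)$ is minimised at $\bm{r}=\bm{p}$ precisely because $f^{-1}(\bm{p})$ is the global minimiser just computed, and injectivity of the resulting $f$ (checked case by case) completes the claim. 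Your guess-and-verify route is logically equivalent --- verifying that $L_f\circ f^{-1}$ is proper is, after the change of variables $\bm{q}=f^{-1}(\bm{r})$, exactly the same first-order computation --- but it makes the work look harder than it is: no explicit Savage potential $\mathcal{J}$, gradient identity, or concavity check is required, and indeed the paper's proof of this lemma never invokes Theorem~\ref{thm:savages} at all.
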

\textbf{Proof Idea:} Suppose that $L$ is a proper loss, let $f:\Delta\rightarrow\Delta$ be injective noise-model, and consider the minimiser of the expected loss defined $L_f(\bm{q},k )\coloneq L(f(\bm{q}),k)$ at $\bm{p}\in\Delta$;
\begin{align*}
    \argmin_{\bm{q}\in\Delta} H(\bm{p},\bm{q}) &= \argmin_{\bm{q}\in\Delta}\sum_{i=1}^c p_iL_f(\bm{q},i)\\
    &= \argmin_{\bm{q}\in\Delta}\sum_{i=1}^c p_iL(f(\bm{q}),i).
\end{align*}
Since $L$ is proper then we know this is minimised by $\bm{q}$ such that $f(\bm{q}) = \bm{p}$. In other words \emph{we can uncover the noise model $f$ by finding the minimiser of the expected loss}. This is how we find $f$ for each of the loss functions. The core idea of the following proof is to write out the expected loss for each loss function and, for each $\bm{p}\in\Delta$, to find $\argmin_{\bm{q}\in\Delta} H(\bm{p},\bm{q})$. Assuming that this $\argmin$ consists of a single point then this induces a map $f(\bm{p}) \coloneq \argmin_{\bm{q}\in\Delta} H(\bm{p},\bm{q})$ which, for the reasons given, can be identified with the noise model. 

\begin{proof}

We begin by introducing the following notation: Let $L$ be an elementwise loss and let $\bm{p}, \bm{q}$ be two distributions, we denote the expected loss of $\bm{q}$ with respect to $\bm{p}$ to be $H_{L}(\bm{q}, \bm{p}) := \sum_{i=1}^c p_iL(\bm{q}, i)$.

Let us begin by considering GCE. The expected loss may be written $L_{GCE}(\bm{q}, \bm{p}) := \sum_{i=1}^c p_iL_{GCE}(\bm{q}, i) := \sum_{i=1}^c p_i\frac{1-q_i^a}{a}$. We find the minima by constructing the Langrangian $A(\bm{q},\lambda) := \sum_{i=1}^c p_i\frac{1-q_i^a}{a} +\lambda (\sum_{i=1}^c q_i - 1)$. By taking partials and equating to zero, we obtain $q_i^{1-a} = \frac{ap_i}{\lambda}, \forall i$. Using the fact that $\sum_{i=1}^c q_i = 1$ one may find the value of $\lambda$. Specifically, $\lambda = a(\sum_{i=1}^c p_i^{\frac{1}{1-a}})^{1-a}$. Thus overall one has $q^*_i = (\frac{ap_i}{\lambda})^{\frac{1}{1-a}} = \frac{p_i^{\frac{1}{1-a}}}{\sum_{i=1}^c p_i^{\frac{1}{1-a}}}$. Let us repeat this for the SCE loss. The expected loss may be written $L_{SCE}(\bm{q}, \bm{p}):= \sum_{i=1}^c p_iL_{SCE}(\bm{q}, i) := \sum_{i=1}^c p_i(A(1-q_i)-log(q_i))$. As before, we construct the relevant Lagrangian and find the stationary points: $B(\bm{q}, \lambda) := \sum_{i=1}^c p_i(A(1-q_i)-log(q_i)) + \lambda (\sum_{i=1}^c q_i - 1)$. Taking partials and equating to zero we obtain $p_i(A+\frac{1}{q_i}) = \lambda \implies q_i^* = \frac{p_i}{\lambda-Ap_i}$. Here the value of the normalisation constant $\lambda$ cannot be found in closed form for high values of $c$ and must be computed numerically. Finally, we consider the forward-corrected CE loss. We assume that the loss is corrected by some invertible stochastic matrix $T$. $L_{F}(\bm{q}, \bm{p}):= \sum_{i=1}^c p_iL_{F}(\bm{q}, i) := \sum_{i=1}^c -p_ilog((T\bm{q})_i)$. We remark that since CE is proper that this is minimised on the simplex by $\bm{p} = T\bm{q}^*\iff \bm{q}^* = T^{-1}\bm{p}$. For each loss, the function $f$ obtained is injective as desired.
\end{proof}

\subsection{Entropy Bounds}

\begin{lemma}
    Let $L_f$ be a generalised-correction-loss whose `base-loss' $L$ is strictly proper (Recall the definition of `base-loss' from Definition~\ref{def:gen_corr}). The noisy risk of any probability estimator $\bm{q}$ is lower bounded:
        \begin{align}
        R^{\eta}_{L_f}(\bm{q}) \geq \mathbb{E}_{x\sim p(x)}[\mathcal{H}(\widetilde{\bm{p}}(\tilde{y}\vert x))],
    \end{align}
    where $\mathcal{H}$ is the entropy function of the base-loss. This bound is tight when $f$ is equal to the true noise model. Equality is attained by setting $\bm{q}(x)=f^{-1}\left(\widetilde{\bm{p}}(\widetilde{y}\vert x)\right)$.\footnote{Note that if $f$ is the true noise model then $\widetilde{\bm{p}}(\widetilde{y}\mid x) \in f(\Delta)$ and the inverse is unique by injectivity.}
\end{lemma}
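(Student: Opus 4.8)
The plan is to reduce the global statement to a pointwise inequality and then apply properness of the base loss directly. First I would unwind the definition of the noisy risk as $R^{\eta}_{L_f}(\bm{q}) = \mathbb{E}_{x\sim p(x)}[H_{L_f}(\widetilde{\bm{p}}(\widetilde{y}\mid x), \bm{q}(x))]$, i.e.\ the same expression as the ordinary $L$-risk but with the clean posterior replaced by the noisy posterior $\widetilde{\bm{p}}(\widetilde{y}\mid x)$ in the first slot of the expected loss. Since expectation over $x$ is monotone, it suffices to establish the bound inside the expectation, for each fixed $x$.

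The key algebraic step is to observe how the generalised forward-correction interacts with the expected loss. By Definition~\ref{def:gen_corr} we have $L_f(\bm{q},k)=L(f(\bm{q}),k)$, hence $\bm{L}_f(\bm{q})=\bm{L}(f(\bm{q}))$ componentwise, and therefore for any target distribution $\bm{p}$,
\begin{align*}
H_{L_f}(\bm{p},\bm{q}) = \bm{p}^T\bm{L}_f(\bm{q}) = \bm{p}^T\bm{L}(f(\bm{q})) = H_{L}(\bm{p}, f(\bm{q})).
\end{align*}
Applying this with $\bm{p}=\widetilde{\bm{p}}(\widetilde{y}\mid x)$ shows that the noisy expected loss of $\bm{q}$ under the corrected loss equals the base-loss expected loss of the \emph{noised} forecast $f(\bm{q}(x))$ against the noisy posterior. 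Now I would invoke properness of $L$: since $f(\bm{q}(x))\in f(\Delta)\subseteq\Delta$, the inequality $\mathcal{H}(\bm{p})\le H_L(\bm{p},\bm{r})$ (which holds for every $\bm{r}\in\Delta$ by definition of properness and the entropy function) gives $H_{L}(\widetilde{\bm{p}}(\widetilde{y}\mid x), f(\bm{q}(x))) \ge \mathcal{H}(\widetilde{\bm{p}}(\widetilde{y}\mid x))$. Chaining the equalities and this inequality yields the pointwise bound, and taking $\mathbb{E}_{x\sim p(x)}$ of both sides delivers Equation~\ref{Equation:general_entropy_bound}.

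For the tightness claim I would argue that the single inequality used becomes an equality precisely when $f(\bm{q}(x))=\widetilde{\bm{p}}(\widetilde{y}\mid x)$, because then the base-loss expected loss is evaluated at the true distribution, giving $H_L(\widetilde{\bm{p}},\widetilde{\bm{p}})=\mathcal{H}(\widetilde{\bm{p}})$. When $f$ equals the true noise model, $\widetilde{\bm{p}}(\widetilde{y}\mid x)=f(\bm{p}(y\mid x))$ lies in $f(\Delta)$, so injectivity of $f$ (Definition~\ref{def:gen_corr}) makes $f^{-1}(\widetilde{\bm{p}}(\widetilde{y}\mid x))$ well-defined; choosing $\bm{q}(x)=f^{-1}(\widetilde{\bm{p}}(\widetilde{y}\mid x))$ forces $f(\bm{q}(x))=\widetilde{\bm{p}}(\widetilde{y}\mid x)$ and hence equality at every $x$, so the expectations agree.

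I expect no substantive obstacle: the argument is essentially a one-line consequence of properness once the composition identity $H_{L_f}(\bm{p},\bm{q})=H_L(\bm{p},f(\bm{q}))$ is written down. The only points requiring care are bookkeeping ones — confirming that the noisy risk really places $\widetilde{\bm{p}}$ in the target slot of the expected loss, and checking that the properness inequality is being applied at $f(\bm{q}(x))$, which is legitimate because $f$ maps into $\Delta$. The tightness direction similarly hinges only on injectivity guaranteeing a unique preimage, as noted in the footnote to the statement.
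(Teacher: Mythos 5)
Your proposal is correct and follows essentially the same route as the paper's proof: reduce to the pointwise noisy risk, rewrite $L_f$ via the composition identity $H_{L_f}(\bm{p},\bm{q})=H_L(\bm{p},f(\bm{q}))$, apply properness of the base loss to lower bound by $\mathcal{H}(\widetilde{\bm{p}}(\widetilde{y}\mid x))$, and take expectations, with the identical tightness argument via $f(\bm{q}(x))=\widetilde{\bm{p}}(\widetilde{y}\mid x)$ and injectivity. In fact your statement of the equality condition, $\bm{q}(x)=f^{-1}(\widetilde{\bm{p}}(\widetilde{y}\mid x))$, is stated more carefully than the paper's proof, which contains a typographical slip ($\bm{q}(x)=f(\widetilde{\bm{p}})$ instead of $f^{-1}$).
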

\begin{proof}
    Recollect that $L_f$ is a generalised forward-correction loss with (strictly) proper base loss $L$: $L_f(\bm{q},k)=L(f(\bm{q}, k))$. Let $x$ be some arbitrary point in the support of $p(x)$ and let $\bm{q}(x)$ be some probability estimator. The pointwise noisy risk of $\bm{q}$ at $x$ may be written as
    \begin{align*}
        R^{\eta}_{L_f}(\bm{q})(x) &\coloneq \sum_{i=1}^c \widetilde{p}(\widetilde{y}=i\vert x)L_f(\bm{q}(x), i) \\
        &= \sum_{i=1}^c \widetilde{p}(\widetilde{y}=i\vert x)L(f(\bm{q}(x)), i) \\
        &\geq \sum_{i=1}^c \widetilde{p}(\widetilde{y}=i\vert x)L(\widetilde{\bm{p}}(\widetilde{y}\vert x), i)\\
        &\eqcolon \mathcal{H}(\widetilde{\bm{p}}(\widetilde{y}\vert x))
    \end{align*}
    The inequality follows from the definition of the properness of $L$. Inequality~\ref{Equation:general_entropy_bound} follows by taking expectation with respect to $p(x)$ on both sides. Equality is attained setting by $f(\bm{q}(x)) = \widetilde{\bm{p}}(\widetilde{y}\vert x))$ for each $x$, (equivalently $f^{-1}(\bm{q}(x)) = \widetilde{\bm{p}}(\widetilde{y}\vert x)$) which is possible when $f$ is the true noise model as then $\widetilde{\bm{p}} \in f(\Delta)$. The injectivity of $f$ (as specified in the definition of $f-$proper) means this occurs uniquely at $\bm{q}(x) = f(\widetilde{\bm{p}}(\widetilde{y}\vert x))$ as desired.
\end{proof}

\begin{lemma}[Class-Conditional Label Noise]
    When the classes are balanced and label noise is asymmetric and given by transition matrix $T$, the noisy risk of a probability estimator $\bm{q}$ may be lower bounded as follows,
\begin{align}
     R^{\eta}_{L_f}(\bm{q}) \geq \frac{1}{c}\sum_{i=1}^c \mathcal{H}(\bm{T}_{\cdot,i} ),
 \end{align}
 where $\bm{T}_{\cdot, i}$ denotes the $i$\textsuperscript{th} column of the matrix $T$.
 \end{lemma}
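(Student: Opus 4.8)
The plan is to specialise the general entropy bound from the previous lemma (Lemma~\ref{lemma:general}) to the setting of balanced classes under class-conditional (asymmetric) noise given by a transition matrix $T$. The previous lemma already gives
\begin{align*}
    R^{\eta}_{L_f}(\bm{q}) \geq \mathbb{E}_{x\sim p(x)}[\mathcal{H}(\widetilde{\bm{p}}(\widetilde{y}\mid x))],
\end{align*}
so the task is purely to show that under balanced classes the right-hand side simplifies to $\frac{1}{c}\sum_{i=1}^c \mathcal{H}(\bm{T}_{\cdot,i})$. The first step is to unpack what class-conditional noise means: the noise model depends only on the clean label, so $p(\widetilde{y}\mid y, x) = p(\widetilde{y}\mid y)$, and the conditional distribution of the noisy label given a \emph{fixed} clean label $y=i$ is exactly the $i$\textsuperscript{th} column $\bm{T}_{\cdot,i}$ of the transition matrix. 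This is the key structural observation that lets columns of $T$ appear in the bound.

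The main conceptual step is to recognise that the bound is cleanest when we do not integrate over the smeared posterior $\widetilde{\bm{p}}(\widetilde{y}\mid x)$ directly, but instead decompose the expectation over $x$ by conditioning on the clean label. Concretely, I would rewrite $\mathbb{E}_{x\sim p(x)}[\mathcal{H}(\widetilde{\bm{p}}(\widetilde{y}\mid x))]$ using the law of total expectation over the clean label $y$, writing $p(x) = \sum_{i=1}^c p(y=i)\,p(x\mid y=i)$. Under the separability assumption implicit in this section (each $x$ has a deterministic clean label), the noisy posterior at a point $x$ whose clean label is $i$ is precisely $\bm{T}_{\cdot,i}$, so $\mathcal{H}(\widetilde{\bm{p}}(\widetilde{y}\mid x)) = \mathcal{H}(\bm{T}_{\cdot,i})$ for all such $x$. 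Hence the expectation collapses to $\sum_{i=1}^c p(y=i)\,\mathcal{H}(\bm{T}_{\cdot,i})$.

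The final step invokes the balanced-class hypothesis: $p(y=i) = \frac{1}{c}$ for every $i$, which immediately yields
\begin{align*}
    \mathbb{E}_{x\sim p(x)}[\mathcal{H}(\widetilde{\bm{p}}(\widetilde{y}\mid x))] = \frac{1}{c}\sum_{i=1}^c \mathcal{H}(\bm{T}_{\cdot,i}),
\end{align*}
and combining this with the inequality inherited from Lemma~\ref{lemma:general} completes the proof. I expect the only genuine subtlety — the ``hard part'' — to be making the separability bookkeeping precise: one must be careful that $\mathcal{H}$ is being applied to the correct argument, namely the noisy posterior $\widetilde{\bm{p}}(\widetilde{y}\mid x)$, and to justify that under separability this posterior equals the clean-label-indexed column of $T$ rather than some mixture of columns. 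Everything else is routine substitution and the elementary averaging over a uniform class prior.
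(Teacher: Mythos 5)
Your proposal is correct and follows essentially the same route as the paper's proof: both specialise Lemma~\ref{lemma:general}, use separability to collapse the noisy posterior $\widetilde{\bm{p}}(\widetilde{y}\mid x) = T\bm{p}(y\mid x)$ to a single column $\bm{T}_{\cdot,i}$ of the transition matrix, and then invoke balanced classes to turn the expectation over $x$ into a uniform average $\frac{1}{c}\sum_{i=1}^c \mathcal{H}(\bm{T}_{\cdot,i})$. If anything, your explicit flagging of the separability assumption is slightly more careful than the paper, which invokes it only implicitly via the phrase ``all points are anchor points.''
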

 \begin{proof}
     The right-hand side of Inequality~\ref{Equation:general_entropy_bound} can be written
     \begin{align*}
         \mathbb{E}_{x\sim p(x)}\left[\mathcal{H}(\widetilde{\bm{p}}(\tilde{y}\vert x))\right] &= \mathbb{E}_{x\sim p(x)}\left[\sum_{i=1}^c \widetilde{p}(\widetilde{y}=i\vert x)L(\widetilde{\bm{p}}(\widetilde{y}\vert x), i)\right]\\
          &= \mathbb{E}_{x\sim p(x)}\left[T\bm{p}(y\vert x)\cdot \bm{L}(T\bm{p}(y\vert x))\right]\\
          &= \frac{1}{c}\sum_{k=1}^c (T\bm{e}_k)\cdot \bm{L}(T\bm{e}_k),
     \end{align*}
     where the final equality comes from using the fact that classes are balanced and all points are anchor points. 
     This is equal to 
     \begin{align*}
          \frac{1}{c}\sum_{k=1}^c T_{\cdot, k}\cdot \bm{L}(T_{\cdot, k}) = \frac{1}{c}\sum_{k=1}^c \mathcal{H}(\bm{T}_{\cdot,k} ), 
     \end{align*}
     as desired.
 \end{proof}

 \begin{corollary}[Uniform Symmetric Label Noise]\label{cor:uniform_symmetric}
    Given uniform, symmetric label noise at rate $\eta$, the risk associated with any probability estimator can be bounded as follows:
\begin{align}
     R^{\eta}_{L_f}(\bm{q}) \geq \mathcal{H}\Big(1-\eta, \frac{\eta}{c-1},\frac{\eta}{c-1}, \ldots, \frac{\eta}{c-1} \Big).
 \end{align}
 This can be written equivalently as 
\begin{align*}
    R^{\eta}_{L_f}(\bm{q}) \geq  \bm{u}_{\text{sym}}(\eta,c)\cdot \bm{L}(\bm{u}_{\text{sym}}(\eta,c)).
\end{align*}
where
\begin{align}\label{Equation:u_notation}
    \bm{u}_{\text{sym}}(\eta, c) := \left(1-\eta, \frac{\eta}{c-1},\frac{\eta}{c-1}, \ldots, \frac{\eta}{c-1}\right).
\end{align}
\end{corollary}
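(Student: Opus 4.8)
The plan is to derive this as a direct specialisation of the preceding Class-Conditional Label Noise lemma. That lemma already establishes, for balanced classes under asymmetric noise with transition matrix $T$, that $R^{\eta}_{L_f}(\bm{q}) \geq \frac{1}{c}\sum_{i=1}^c \mathcal{H}(\bm{T}_{\cdot,i})$. Uniform symmetric label noise at rate $\eta$ is simply the case where $T$ carries $1-\eta$ on the diagonal and $\frac{\eta}{c-1}$ in every off-diagonal entry, so the task reduces to evaluating this averaged sum of column entropies for that particular $T$.

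First I would observe that, for this symmetric $T$, every column $\bm{T}_{\cdot,k}$ is a coordinate permutation of the single vector $\bm{u}_{\text{sym}}(\eta,c) = (1-\eta, \frac{\eta}{c-1}, \ldots, \frac{\eta}{c-1})$: the $k$-th column places the entry $1-\eta$ in slot $k$ and $\frac{\eta}{c-1}$ everywhere else. The crucial ingredient is then the permutation invariance of the entropy function $\mathcal{H}$, which follows from the standing assumption that the base loss carries no inherent class bias. Concretely, if $L(\sigma\bm{p}, \sigma(k)) = L(\bm{p}, k)$ for every permutation $\sigma$, then re-indexing the defining sum $\mathcal{H}(\sigma\bm{p}) = \sum_i (\sigma\bm{p})_i L(\sigma\bm{p}, i)$ by $j = \sigma^{-1}(i)$ collapses it back to $\sum_j p_j L(\bm{p}, j) = \mathcal{H}(\bm{p})$. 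Hence $\mathcal{H}(\bm{T}_{\cdot,k}) = \mathcal{H}(\bm{u}_{\text{sym}}(\eta,c))$ for every $k$.

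Substituting this into the averaged bound gives $\frac{1}{c}\sum_{i=1}^c \mathcal{H}(\bm{T}_{\cdot,i}) = \frac{1}{c}\cdot c\cdot \mathcal{H}(\bm{u}_{\text{sym}}(\eta,c)) = \mathcal{H}(\bm{u}_{\text{sym}}(\eta,c))$, which is exactly the claimed bound; the equivalent dot-product form $\bm{u}_{\text{sym}}(\eta,c)\cdot\bm{L}(\bm{u}_{\text{sym}}(\eta,c))$ is then just the definition of $\mathcal{H}$ unwound. An alternative, slightly cleaner route avoids the balanced-class hypothesis altogether: applying the general entropy lemma directly and invoking separability, each clean posterior is one-hot $\bm{e}_{k}$, so the noisy posterior at $x$ is $T\bm{e}_{k}= \bm{T}_{\cdot, k}$, again a permutation of $\bm{u}_{\text{sym}}(\eta,c)$; permutation invariance makes the integrand constant at $\mathcal{H}(\bm{u}_{\text{sym}}(\eta,c))$ pointwise, so the expectation over $p(x)$ returns the same value regardless of the class proportions.

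The only genuine obstacle is establishing the permutation invariance of $\mathcal{H}$ rigorously from the no-class-bias assumption; everything else is bookkeeping. I would make sure to state precisely what ``unaffected by a random permutation of the label set'' means as an identity on $L$, and verify that this symmetry transfers to $\mathcal{H}$ via the index-relabelling computation above. Once that is in place, both the columns-of-$T$ argument and the separable pointwise argument close immediately.
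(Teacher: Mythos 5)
Your proposal is correct and takes essentially the same route as the paper: the paper's proof likewise specialises the class-conditional lemma by observing that every column of the symmetric $T$ is a permutation of $\bm{u}_{\text{sym}}(\eta,c)$ and then invoking the symmetry of $\mathcal{H}$, which you additionally justify from the no-class-bias assumption via the index-relabelling computation (a detail the paper leaves implicit). Your alternative argument via the general entropy lemma plus separability, which dispenses with the balanced-class hypothesis, is also sound but goes beyond what the paper records.
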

\begin{proof}
    When label is symmetric every column of the matrix $T$ is a permutation of $\left(1-\eta, \frac{\eta}{c-1}, \frac{\eta}{c-1}, \ldots, \frac{\eta}{c-1}\right)$. The result follows immediately from the symmetry assumption on the entropy. 
\end{proof}

 \begin{lemma}[Non-Uniform Symmetric Label Noise]\label{prop:det_proper}
      Let $p(x,y)$ be a separable distribution, and let $\widetilde{p}(x,\widetilde{y})$ be a noisy distribution obtained by applying non-uniform symmetric label noise to $p(x,y)$. Assume that $L$ is a generalised forward-correction losses loss and let $\mathcal{H}$ denote the (symmetric) entropy function of its base loss. For any probability estimator $\bm{q}$, we have the following lower bound on its noisy risk,
  \begin{align*}
     R^{\eta}_{L_f}(\bm{q}) \geq \mathbb{E}_{x\sim p(x)}\left[\mathcal{H}\Big(1-\eta(x), \frac{\eta(x)}{c-1},\frac{\eta(x)}{c-1}, \ldots, \frac{\eta(x)}{c-1} \Big)\right],
 \end{align*}
    where $\eta(x)$ denotes the noise rate at $x$. This inequality is strict and may be obtained by setting $\bm{q}(x) = f^{-1}(\widetilde{\bm{p}}(y\mid x))$, if $\widetilde{\bm{p}}(y\mid x)\in f(\Delta)$. 
\end{lemma}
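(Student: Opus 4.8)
The plan is to obtain this statement as a direct specialisation of the general entropy bound in Lemma~\ref{lemma:general}, so that the only genuine work is an explicit computation of the noisy class posterior $\widetilde{\bm{p}}(\widetilde{y}\mid x)$ under the combined assumptions of separability and non-uniform symmetric noise. Lemma~\ref{lemma:general} already supplies
\[
R^{\eta}_{L_f}(\bm{q}) \geq \mathbb{E}_{x\sim p(x)}[\mathcal{H}(\widetilde{\bm{p}}(\widetilde{y}\mid x))],
\]
together with the tightness statement that equality holds for $\bm{q}(x) = f^{-1}(\widetilde{\bm{p}}(\widetilde{y}\mid x))$ whenever $\widetilde{\bm{p}}(\widetilde{y}\mid x)\in f(\Delta)$. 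Hence it suffices to identify the integrand $\mathcal{H}(\widetilde{\bm{p}}(\widetilde{y}\mid x))$ and show it equals $\mathcal{H}(\bm{u}_{\text{sym}}(\eta(x),c))$; the claimed inequality and its equality condition then follow by taking expectations and quoting Lemma~\ref{lemma:general} verbatim.

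First I would invoke separability to write the clean posterior as a point mass, $\bm{p}(y\mid x) = \bm{e}_{k(x)}$ for some label $k(x)\in\mathcal{Y}$. Next I would apply the non-uniform symmetric noise model $p(\widetilde{y}\mid y,x)$, which by the definition of symmetry keeps a label fixed with probability $1-\eta(x)$ and sends it to each of the remaining $c-1$ classes with equal probability $\tfrac{\eta(x)}{c-1}$. Marginalising over the degenerate clean posterior collapses the sum to a single term,
\[
\widetilde{p}(\widetilde{y}=i\mid x) = \sum_{y} p(\widetilde{y}=i\mid y,x)\,p(y\mid x) = p(\widetilde{y}=i\mid y=k(x),x),
\]
which equals $1-\eta(x)$ when $i=k(x)$ and $\tfrac{\eta(x)}{c-1}$ otherwise. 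This is exactly the per-$x$ analogue of the computation in Corollary~\ref{cor:uniform_symmetric}, with the global rate $\eta$ replaced by the local rate $\eta(x)$: the vector $\widetilde{\bm{p}}(\widetilde{y}\mid x)$ is $\bm{u}_{\text{sym}}(\eta(x),c)$ up to the permutation moving the large entry into coordinate $k(x)$.

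The final step is to use the standing assumption that the loss carries no inherent class bias, so its entropy function $\mathcal{H}$ is symmetric (permutation-invariant). This lets me discard the permutation and conclude $\mathcal{H}(\widetilde{\bm{p}}(\widetilde{y}\mid x)) = \mathcal{H}(\bm{u}_{\text{sym}}(\eta(x),c))$ for $p(x)$-almost every $x$. Substituting into the bound from Lemma~\ref{lemma:general} and taking $\mathbb{E}_{x\sim p(x)}$ yields the stated inequality, while the equality condition $\bm{q}(x)=f^{-1}(\widetilde{\bm{p}}(\widetilde{y}\mid x))$ (valid when $\widetilde{\bm{p}}(\widetilde{y}\mid x)\in f(\Delta)$) is inherited directly from the tightness clause of Lemma~\ref{lemma:general}.

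I do not expect a serious obstacle here, since the heavy lifting — the properness inequality underlying the bound — was already done in Lemma~\ref{lemma:general}; this result is essentially a corollary. The only point requiring mild care is the separability step: one must ensure $k(x)$ is well-defined almost everywhere and that the marginalisation argument is unaffected on any measure-zero set where the clean posterior fails to be a pure point mass, or simply treat separability as exact in keeping with the lemma's hypothesis. Everything downstream is bookkeeping combined with the permutation-invariance of $\mathcal{H}$.
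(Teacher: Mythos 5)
Your proposal is correct and takes essentially the same route as the paper's proof: both specialise Lemma~\ref{lemma:general}, use separability to reduce the noisy posterior at each $x$ to a permutation of $\bm{u}_{\text{sym}}(\eta(x),c)$, and then invoke the permutation-invariance of $\mathcal{H}$ before taking expectations. The only cosmetic difference is that the paper phrases the per-point computation as the transition matrix $T(x)$ acting on $\bm{e}_k$, whereas you marginalise the noise model over the degenerate clean posterior directly.
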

\begin{proof}
     The right-hand side of Inequality~\ref{Equation:general_entropy_bound} can be written
     \begin{align*}
         \mathbb{E}_{x\sim p(x)}\left[\mathcal{H}(\widetilde{\bm{p}}(\tilde{y}\vert x))\right] &= \mathbb{E}_{x\sim p(x)}\left[\sum_{i=1}^c \widetilde{p}(\widetilde{y}=i\vert x)L(\widetilde{\bm{p}}(\widetilde{y}\vert x), i)\right]\\
          &= \mathbb{E}_{x\sim p(x)}\left[T(x)\bm{p}(y\vert x)\cdot \bm{L}(T(x)\bm{p}(y\vert x))\right].
     \end{align*}
Our separability assumption means that $\bm{p}(y\vert x) = \bm{e}_k$ for some $k$. For each $x$ it follows that $T(x)\bm{p}(y\mid x)$ is some rearrangement of the vector $(1-\eta(x), \frac{\eta(x)}{c-1}, \frac{\eta(x)}{c-1}, \ldots, \frac{\eta(x)}{c-1})$. By the assumption that the entropy function is symmetric we may conclude that 
\begin{align*}
    \mathbb{E}_{x\sim p(x)}\left[\mathcal{H}(\widetilde{\bm{p}}(\tilde{y}\vert x))\right] =  \mathbb{E}_{x\sim p(x)}\left[\mathcal{H}\left(1-\eta(x), \frac{\eta(x)}{c-1},\frac{\eta(x)}{c-1}, \ldots, \frac{\eta(x)}{c-1}\right) \right].
\end{align*}
\end{proof}

\subsubsection{The General Case}

 \begin{lemma}
       Lemma~\ref{lemma:general} establishes that we can lower bound the noisy risk of an estimator by the average entropy of the noisy class posteriors
  \begin{align*}
     R^{\eta}_{L_f}(\bm{q}) \geq \mathbb{E}_{x\sim p(x)}[\mathcal{H}(\widetilde{p}(\widetilde{y}\mid x)].
 \end{align*}
 Given that the noise rate at $x$ is $\eta(x)$, this Lemma establishes that $\mathcal{H}(\widetilde{p}(\widetilde{y}\mid x)$ must lie in the following interval:
 \begin{align*}
          \mathcal{H}(\widetilde{p}(\widetilde{y}\mid x) \in \big[\mathcal{H}(1-\eta(x), \eta(x), 0, 0, \ldots, 0 ), \mathcal{H}\Big(1-\eta(x), \frac{\eta(x)}{c-1},\frac{\eta(x)}{c-1}, \ldots, \frac{\eta(x)}{c-1} \Big)\big].
 \end{align*}
 In particular, given a fixed noise rate $\eta(x)$ at $x$, the highest possible entropy occurs when label noise is symmetric at $x$. The lowest entropy occurs when the label noise is pairwise.
  \end{lemma}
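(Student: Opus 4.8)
The plan is to treat this as an optimisation of the entropy function $\mathcal{H}$ over the set of admissible noisy posteriors at $x$, exploiting two structural facts: $\mathcal{H}$ is concave (being the entropy function of a strictly proper loss, it is strictly concave by Savage's Theorem~\ref{thm:savages}) and it is invariant under permutations of its arguments (by the assumption that the loss carries no class bias). First I would use separability to determine the shape of $\widetilde{\bm{p}}(\widetilde{y}\mid x)$. Since $\bm{p}(y\mid x)=\bm{e}_k$ for some class $k$ and the noise rate is $\eta(x)=p(\widetilde{y}\neq y\mid x)$, the noisy posterior assigns mass exactly $1-\eta(x)$ to class $k$ and spreads the remaining mass $\eta(x)$ non-negatively across the other $c-1$ classes. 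Permutation invariance lets me take $k=1$ without loss of generality, so the feasible set is
\[
S \coloneqq \Big\{\bm{v}\in\Delta : v_1 = 1-\eta(x),\ v_i\geq 0,\ \textstyle\sum_{i=2}^c v_i = \eta(x)\Big\},
\]
a compact convex polytope affinely isomorphic to a scaled $(c-2)$-simplex. The task reduces to showing $\max_{\bm{v}\in S}\mathcal{H}(\bm{v}) = \mathcal{H}(\bm{u}_{\text{sym}}(\eta(x),c))$ and $\min_{\bm{v}\in S}\mathcal{H}(\bm{v}) = \mathcal{H}(\bm{u}_{\text{pair}}(\eta(x),c))$.

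For the upper endpoint I would symmetrise. Writing $G$ for the group of permutations of the coordinates $2,\ldots,c$, we have $\frac{1}{|G|}\sum_{\sigma\in G}\sigma\bm{v}=\bm{u}_{\text{sym}}(\eta(x),c)$ for every $\bm{v}\in S$. Jensen's inequality applied to the concave $\mathcal{H}$ gives $\mathcal{H}\big(\frac{1}{|G|}\sum_{\sigma\in G}\sigma\bm{v}\big)\geq\frac{1}{|G|}\sum_{\sigma\in G}\mathcal{H}(\sigma\bm{v})$, and permutation invariance gives $\mathcal{H}(\sigma\bm{v})=\mathcal{H}(\bm{v})$ for all $\sigma$; combining these yields $\mathcal{H}(\bm{u}_{\text{sym}})\geq\mathcal{H}(\bm{v})$, so symmetric noise maximises the entropy.

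For the lower endpoint I would use the standard fact that a concave function on a compact convex polytope attains its minimum at an extreme point: expressing any $\bm{v}\in S$ as a convex combination $\sum_j\lambda_j\bm{w}_j$ of the vertices $\bm{w}_j$ of $S$, concavity gives $\mathcal{H}(\bm{v})\geq\sum_j\lambda_j\mathcal{H}(\bm{w}_j)\geq\min_j\mathcal{H}(\bm{w}_j)$. The vertices of $S$ are exactly the points that concentrate all of the mass $\eta(x)$ on a single class among $2,\ldots,c$, i.e. the coordinate permutations of $\bm{u}_{\text{pair}}(\eta(x),c)$; permutation invariance collapses these to the common value $\mathcal{H}(\bm{u}_{\text{pair}})$, which is therefore the minimum.

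The main obstacle is the lower-bound direction. The maximisation is a one-line Jensen-plus-symmetry argument, but the minimisation requires correctly identifying the extreme points of $S$ (all off-diagonal mass placed on one class) and invoking the vertex-minimisation property of concave functions, then using symmetry to see that the several candidate vertices give a single entropy value. Once both endpoints are established, the claimed interval for $\mathcal{H}(\widetilde{\bm{p}}(\widetilde{y}\mid x))$ follows immediately for each fixed $x$.
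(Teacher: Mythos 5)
Your proposal is correct and follows essentially the same route as the paper's proof: both reduce, via separability and permutation invariance, to optimising the concave entropy function over the polytope of posteriors with first coordinate $1-\eta(x)$, obtain the upper endpoint by symmetrisation plus Jensen's inequality (the paper averages over cyclic shifts of a maximiser rather than the full permutation group, an immaterial difference), and obtain the lower endpoint by expressing a feasible point as a convex combination of the vertices $\eta(x)\bm{e}_i$ and invoking concavity, which is precisely your vertex-minimisation argument written out explicitly.
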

  \begin{proof}
Let $\bm{q}(x)$ be a probability estimator and let $x$ be some point in the support of $p(x)$. We established in the proof of Lemma~\ref{lemma:general} that $R^{\eta}_{L}(\bm{q})(x) \geq \mathcal{H}(\widetilde{\bm{p}}(\widetilde{y}\vert x))$. We have equality (uniquely) when $\bm{q}(x) = \widetilde{\bm{p}}(\widetilde{y}\vert x)$. Let $T(x)$ denote the noising transition matrix at $x$. By the separability assumption, we have some $k$ such that $p(y=k \vert x)=1$ and $p(y=i \vert x)=0$ otherwise. Thus $\Tilde{p}(\Tilde{y}\vert x) = \sum_{y=1}^c\Tilde{p}(\tilde{y}\vert y,x)p(y \vert x) = \Tilde{p}(\tilde{y}\vert y=k,x) = (T_{1k}(x), T_{2k}(x), \ldots, T_{ck}(x))$. Let $A(\eta(x), c) := \mathcal{H}(T_{1k}(x), T_{2k}(x), \ldots, T_{ck}(x))$ where $\eta(x):= 1- T_{kk}$ is the noise rate at $x$. The symmetry of $\mathcal{H}$ means that, without loss of generality, we may let $k=1$. It remains to show that $A(\eta(x), c) \in \big[\mathcal{H}(1-\eta(x), \eta(x), 0, 0, \ldots, 0 ), \mathcal{H}\Big(1-\eta(x), \frac{\eta(x)}{c-1},\frac{\eta(x)}{c-1}, \ldots, \frac{\eta(x)}{c-1} \Big)\big]$.  
      
\textbf{Upper Limit:} We begin by demonstrating that $A(\eta(x), c)$ is upper bounded by $\mathcal{H}(1-\eta(x), \frac{\eta(x)}{c-1},\frac{\eta(x)}{c-1}, \ldots, \frac{\eta(x)}{c-1})$. Let $\Delta(\eta(x))$ denote the set of non-negative vectors $(a_1, a_2, \ldots, a_{c-1})$ such that $a_i \leq 1$ and $\sum_{i=1}^{c-1}a_i = \eta(x)$. We wish to show the supremum of $\mathcal{H}(1-\eta(x), a_1, a_2, \ldots, a_{c-1})$ is attained on $\Delta(\eta(x))$ by setting $a_i = \frac{\eta(x)}{c-1}$ for all $i$. This corresponds to the label noise being symmetric at $x$. By Theorem~\ref{thm:savages} $\mathcal{H}$ is a (strictly) concave function. Moreover, the symmetry assumption implies that $\mathcal{H}$ is a symmetric function of its variables. Define the function $g(a_1, a_2, \ldots, a_{c-1}) := \mathcal{H}(1-\eta(x), a_1, a_2, \ldots,  a_{c-1})$. We wish to show that $g$ attains its maximum on $\Delta(\eta(x))$ when $a_i=a_j$ for all $i,j$. We begin by noting that the (strict) concavity of $\mathcal{H}$ implies the (strict) concavity of $g$. To see this consider two arbitrary vectors $\bm{x}=(x_1, x_2, \ldots x_{c-1}), \bm{y}=(y_1, y_2,\ldots y_{c-1})$. Now $g(\lambda \bm{x} + (1-\lambda) \bm{y}) = \mathcal{H}(\lambda \bm{x}^{\prime} + (1-\lambda) \bm{y}^{\prime})$ where $\bm{x}^{\prime} := (1-\eta(x), x_1, x_2, \ldots ,x_{c-1})$ and $\bm{y}^{\prime} := (1-\eta(x), y_1, y_2, \ldots ,y_{c-1})$. Thus the concavity of $\mathcal{H}$ implies  $g(\lambda \bm{x} + (1-\lambda) \bm{y}) := \mathcal{H}(\lambda \bm{x}^{\prime} + (1-\lambda) \bm{y}^{\prime})  \geq \lambda \mathcal{H}(\bm{x}^{\prime}) + (1-\lambda)\mathcal{H}(\bm{y}^{\prime}) = \lambda g(\bm{x}) + (1-\lambda)g(\bm{y}) $ as desired. Thus $g$ is a symmetric (strictly) concave function of its variables. 

      Let $\bm{a}^*$ denote a maxima of $g$ on $\Delta(\eta(x))$. Let $\sigma$ denote the cyclic permutation of the components of $\bm{a}$. That is $\sigma(a_1, a_2, \ldots, a_{c-1}) := (a_{c-1}, a_1, a_2, \ldots, a_{c-2})$. By the symmetry of $g$, we know that if $\bm{a}^*$ is a maxima then so is $\sigma^i(\bm{a}^*)$ for all $i$: $g(\bm{a}^*) = g(\sigma^i(\bm{a}^*))$ for all $i\in\mathbb{N}$. The defining property of a concave function is that
      \begin{align*}
          g(\lambda_1 \bm{v}_1 + \lambda_2 \bm{v}_2 + \ldots + \lambda_d, \bm{v}_d) &\geq \sum_{i=1}^d \lambda_i g(\bm{x}_i)\\
          \text{where } \sum_i \lambda_i &= 1.
      \end{align*}
      Hence by the (strict) concavity of $g$, setting $\lambda_i \coloneq \frac{1}{c-1}$;
      \begin{align*}
         g\left(\frac{\eta(x)}{c-1}, \frac{\eta(x)}{c-1}, \ldots, \frac{\eta(x)}{c-1}\right) &= g\left(\frac{1}{c-1}(\bm{a}^* + \sigma(\bm{a}^*) + \sigma^2(\bm{a}^*) + \ldots + \sigma^{c-2}(\bm{a}^*))\right) \\
         &\geq \frac{1}{c-1}g\left(\bm{a}^*\right) + \frac{1}{c-1}g\left(\sigma(\bm{a}^*)\right) + \ldots + \frac{1}{c-1}g\left(\sigma^{c-2}(\bm{a}^*)\right) \\
         &= g(\bm{a}^*)
      \end{align*}
      Hence $g$ is maximised by setting $a_i = \frac{\eta(x)}{c-1}$ for all $i$ as desired. This is the unique maxima when the base loss strictly proper.

      \textbf{Lower Limit:} It now remains to show that the lower bound on $A(\eta(x),c)$ holds. The (strict) concavity means that $g$ attains it minima on the vertices of $\Delta(\eta(x))$ (eg $(\eta(x),0,\ldots, 0)$. To see this let $\bm{a}^* = (a^*_1, a^*_2, \ldots, a^*_{c-1})$ denote a minima of $g$ on $\Delta(\eta(x))$. Then we have, 
      \begin{align}
          g(a^*_1, a^*_2, \ldots, a^*_{c-1}) &= g(a^*_1 \bm{e_1} + a^*_2 \bm{e_2} + \ldots + a^*_{c-1} \bm{e}_{c-1}) \notag \\
          &\geq \sum_{i=1}^{c-1}\frac{a^*_i}{\eta(x)} g(\eta(x)\bm{e_i})\notag \\
          &= g(\eta(x), 0, \ldots, 0)\label{Equation:gsym}\\ &= \mathcal{H}(1-\eta(x), \eta(x), 0, 0, \ldots, 0 ) \notag
      \end{align}
      $\bm{e}_i$ denotes the coordinate vector with 1 in the $i$th position and zeros elsewhere. Equation~\ref{Equation:gsym} holds by the symmetry of $g$ and since $\sum a_i^* = \eta(x)$. Thus we have shown that $g$ is lower bounded by $\mathcal{H}(1-\eta(x), \eta(x), 0, 0, \ldots, 0 )$ as desired. Moreover, this infimum is obtained on the vertices of $\Delta(\eta(x))$.
  \end{proof}

    \begin{corollary}
       Given an average noise rate $\eta\coloneq \mathbb{E}_{x\sim p(x)}[\eta(x)]$, the greatest possible value of $\mathbb{E}_{x\sim p(x)}[\mathcal{H}(\widetilde{p}(\widetilde{y}\mid x)]$ occurs when $\eta(x)$ is constant:
       \begin{align*}
           \sup_{p(\widetilde{y}\mid x,y)}\left( \mathbb{E}_{x\sim p(x)}[\mathcal{H}(\widetilde{p}(\widetilde{y}\mid x))]\right) = \mathcal{H}\left(1-\eta, \frac{\eta}{c-1},  \frac{\eta}{c-1}, \ldots,  \frac{\eta}{c-1}\right),
       \end{align*}
    where the supremum is taken over all noise models such that $\mathbb{E}_{x\sim p(x)}[\eta(x)] = \eta$.
  \end{corollary}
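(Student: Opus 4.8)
The plan is to combine the pointwise entropy bound from the preceding entropy-interval lemma with Jensen's inequality applied to a scalar function of the noise rate. First I would invoke Lemma~\ref{lemma:entropy_interval}, which states that for a fixed pointwise noise rate $\eta(x)$ the noisy entropy is maximised by symmetric noise, giving
\[
\mathcal{H}(\widetilde{\bm{p}}(\widetilde{y}\mid x)) \leq \mathcal{H}(\bm{u}_{\text{sym}}(\eta(x), c)).
\]
Taking expectations over $x\sim p(x)$ and using monotonicity of expectation, the problem reduces to bounding $\mathbb{E}_{x}[\mathcal{H}(\bm{u}_{\text{sym}}(\eta(x), c))]$. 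This collapses the optimisation over the entire noise model $p(\widetilde{y}\mid x,y)$ into an optimisation over the scalar noise-rate field $\eta(x)$ alone.

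Next I would define the single-variable function $\phi(t) \coloneq \mathcal{H}(\bm{u}_{\text{sym}}(t, c))$ on $t\in[0,1]$ and show it is concave. The key observation is that $t\mapsto \bm{u}_{\text{sym}}(t,c) = \bm{e}_1 + t\,\bm{d}$, where $\bm{d} = (-1, \tfrac{1}{c-1}, \ldots, \tfrac{1}{c-1})$, is an affine map into the simplex, so $\phi$ is the composition of the concave entropy function $\mathcal{H}$ (concave by Savage's Theorem~\ref{thm:savages}, since the base loss is proper) with an affine map, and is therefore itself concave. With concavity established, Jensen's inequality yields
\[
\mathbb{E}_{x}[\phi(\eta(x))] \leq \phi\big(\mathbb{E}_{x}[\eta(x)]\big) = \phi(\eta) = \mathcal{H}(\bm{u}_{\text{sym}}(\eta, c)),
\]
which is precisely the asserted upper bound on $\mathbb{E}_{x}[\mathcal{H}(\widetilde{\bm{p}}(\widetilde{y}\mid x))]$.

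Finally I would verify that the bound is attained, so that the supremum genuinely equals this value rather than merely being bounded by it. Uniform symmetric label noise at rate $\eta$ is an admissible noise model with average rate $\eta$; for it, $\eta(x)\equiv\eta$ is constant and the noise is symmetric, so both the pointwise inequality and Jensen's inequality hold with equality. Hence the supremum is exactly $\mathcal{H}(\bm{u}_{\text{sym}}(\eta, c))$, attained when $\eta(x)$ is constant.

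I expect the only genuinely delicate step to be the concavity argument, namely recognising $\bm{u}_{\text{sym}}(\cdot,c)$ as an affine function of the scalar $t$ and invoking the affine-composition rule for concave functions; the pointwise reduction and the application of Jensen's inequality are routine once the earlier lemmas and Savage's theorem are granted. A minor point worth checking is that $\bm{u}_{\text{sym}}(t,c)$ remains in $\Delta$ for every $t\in[0,1]$ so that $\mathcal{H}$ is well defined along the whole segment, which holds since its components are non-negative and sum to one.
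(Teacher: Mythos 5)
Your proposal is correct and takes essentially the same approach as the paper: both reduce pointwise to symmetric noise via Lemma~\ref{lemma:entropy_interval} and then apply Jensen's inequality using the concavity of $\mathcal{H}$ guaranteed by Savage's theorem. The only cosmetic difference is that you apply scalar Jensen to $\phi(t) \coloneq \mathcal{H}(\bm{u}_{\text{sym}}(t,c))$ via the affine-composition rule, whereas the paper applies multivariate Jensen directly to $\mathcal{H}$ at the random vector $\bm{u}_{\text{sym}}(\eta(x),c)$ (implicitly using the same affine structure when computing its expectation); your explicit check that the supremum is attained by uniform symmetric noise is a small point the paper leaves implicit.
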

  \begin{proof}
      We established in the proof of Lemma~\ref{lemma:entropy_interval} that, given that the noise rate at $x$ is $\eta(x)$,
       \begin{align*}
          \mathcal{H}(\widetilde{p}(\widetilde{y}\mid x) \in \big[\mathcal{H}(1-\eta(x), \eta(x), 0, 0, \ldots, 0 ), \mathcal{H}\Big(1-\eta(x), \frac{\eta(x)}{c-1},\frac{\eta(x)}{c-1}, \ldots, \frac{\eta(x)}{c-1} \Big)\big].
 \end{align*}
 Thus, given a fixed average noise rate $\eta$ we maximise the expected entropy when the noise model describes symmetric label noise at each point in dataspace. We now wish to demonstrate that
 \begin{align*}
     \mathbb{E}_{x\sim p(x)}\left[\mathcal{H}\Big(1-\eta(x), \frac{\eta(x)}{c-1},\frac{\eta(x)}{c-1}, \ldots, \frac{\eta(x)}{c-1} \Big)\right] \leq \mathcal{H}\left(1-\eta, \frac{\eta}{c-1},  \frac{\eta}{c-1}, \ldots,  \frac{\eta}{c-1}\right),
 \end{align*}
 which is to say that we maximise the entropy of symmetric noise by setting $\eta(x)=const$. $\mathcal{H}$ is concave (strictly concave if the base loss is strictly proper) thus we can use Jensen's Inequality which tells us that
 \begin{align*}
     f(\mathbb{E}[X]) \geq \mathbb{E}[f(X)],
 \end{align*}
 if $f$ is concave. Hence, by setting our random variable $X\coloneq (1-\eta(x), \eta(x)/(c-1), \ldots, \eta(x)/(c-1))$, and $f=\mathcal{H}$ yields the desired result.
  \end{proof}

\section{Additional Theory and Discussion }\label{sec:additional_theory}

\subsection{Example: Forward-Corrected CE}\label{ch6:example:fce}
This section gives a breakdown of the example from Table~\ref{ch6:tab:bound_comparison}. We demonstrate how a neural network classifier can achieve a training loss lower than any model that has not accessed the training labels, indicating that it must have overfit.

Consider a separable distribution \( p(x, y) \) corrupted by uniform, symmetric label noise at a rate of 40\%. Assume this noise model is known and corrected for in the loss calculation. We utilise a cross-entropy loss, hence the forward-corrected loss is defined:
\begin{align*}
    L_F(q, 1) &= -\log(0.6q + 0.4(1 - q)) = -\log(0.4 + 0.2q),\\
    L_F(q, 0) &= -\log(0.4q + 0.6(1 - q)) = -\log(0.6 - 0.2q).
\end{align*}

We generate a large noisy dataset from \( p(x, y) \), denoted \( \widetilde{D} = \{(x_i, \widetilde{y}_i)\}_{i=1}^N \). Our objective is to construct a model that attains minimal loss on \( \widetilde{D} \).

\paragraph*{Scenario One: No Label Peeking}
Without peeking at the dataset labels, we minimise the expected loss on the noisy dataset by setting our model predictions \( q(x) \) such that, for every \( x \sim p(x) \), \( q(x) \) minimises the noisy expected loss with respect to \( L_F \). For example, consider a point \( x_0 \sim p(x) \) with a clean label \( y_0=1 \), implying \( p(y_0 = 1 \mid x_0) = 1 \). The noisy class posteriors at \( x_0 \) is \( \widetilde{p}(\widetilde{y}_0 \mid x_0) = 0.6 \). The noisy expected loss at \( x_0 \) is:
\begin{align*}
    H_{L_F}(\widetilde{p}, q) &= 0.6 L_F(q, 1) + 0.4 L_F(q, 0)\\
    &= -0.6\log(0.4 + 0.2q) - 0.4\log(0.6 - 0.2q).
\end{align*}
This loss is minimised by setting \( q = 1 \), yielding an expected noisy loss of \( 0.673 \) at \( x_0 \). Similarly, for \( y = 0 \), setting \( q(x) = 0 \) achieves the same loss.

\paragraph*{Scenario Two: Label Peeking}
Conversely, if a model can peek at the noisy dataset labels before forecasting, it minimises the empirical risk by setting \( q(x) = 1 \) if \( \widetilde{y}_0 = 1 \) and \( q(x) = 0 \) if \( \widetilde{y}_0 = 0 \), attaining a noisy risk of:
\begin{align*}
  L_F(q = 1, 1) &= L_F(q = 0, 0) = -\log(0.6) = 0.511.
\end{align*}

Therefore, while an over-parameterised neural network model trained for sufficient epochs may achieve a training loss of \( 0.511 \), the lowest possible training loss for an optimal model without label access is \( 0.673 \).

\begin{table}[ht]
\centering
\begin{tabular}{|c|c|c|}
\hline
\multicolumn{1}{|c|}{\textbf{Loss Function}} & \multicolumn{2}{c|}{\textbf{Optimal Training Loss}} \\ \cline{2-3} 
\multicolumn{1}{|c|}{}                       & \textbf{Label Peeking} & \textbf{No Peeking} \\ \hline
CE                                           & 0 & 0.673 \\ \hline
FCE                                          & 0.511 & 0.673 \\ \hline
FCE$+$B                                        & 0.673 & 0.673 \\ \hline
\end{tabular}
\caption{Comparison of the minimum loss which may be achieved by a model on a (large) dataset when using different loss function both \emph{allowing} and \emph{not-allowing} peeking at the dataset labels assuming $40\%$ symmetric label noise on a separable binary label dataset. The results highlight a significant overfitting potential in cross-entropy, as indicated by the large gap between the peeking and non-peeking scenarios. While FCE introduces an inherent loss bound, improving robustness, it may still permit overfitting. Our bounded variant, FCE$+$B, is designed to better align with the dataset and mitigate overfitting.}
\end{table}

\subsection{Sensitivity of Bounds}\label{sec:sensitivity_of_bounds}
The noise-bound is equal to the average entropy of the noisy label distribution when label noise is uniform and symmetric. When we deviate from these noise conditions, this bound is too high in that an optimal probability estimator could achieve a (noisy) risk lower than this value without overfitting. Since we use this bound in all noise conditions, it is essential to get an idea of the size of the gap between our bound and the minimum achievable risk. Ideally we want this gap to be small. In this section, we look briefly at this topic, noting that this gap is usually smaller for GCE and SCE than CE. This implies that the noise-bound is more suitably used with SCE and GCE than with CE when noise deviates from idealised assumptions. Given a noise rate $\eta$, the following Lemma gives the worst-case gap between the true average entropy of the noisy distribution and the noise-bound, asusming uniform label noise.

\begin{corollary}\label{cor:uniform_interval}
        Suppose we have some uniform label noise at noise rate $\eta$. Let $\mathcal{H}$ denote the average entropy of the noisy label distribution, that is 
        \begin{align*}
            \mathcal{H} \coloneq \mathbb{E}_{x\sim p(x)}[\mathcal{H}(\widetilde{\bm{p}}(y\mid x))].
        \end{align*}
        Let $B(\eta,c)$ denote the noise-bound defined in Definition~\ref{def:noise-bound}. Then
        \begin{align*}
            \vert B(\eta,c) - \mathcal{H}\vert \leq  \mathcal{H}\Big(1-\eta, \frac{\eta}{c-1},\frac{\eta}{c-1}, \ldots, \frac{\eta}{c-1} \Big) - \mathcal{H}(1-\eta, \eta, 0, 0, \ldots, 0 )
        \end{align*}
\end{corollary}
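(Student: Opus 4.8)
The plan is to sandwich the average noisy entropy $\mathcal{H}$ between the two endpoints of the interval supplied by Lemma~\ref{lemma:entropy_interval} and then simply read off the gap. The key point to fix at the outset is the meaning of ``uniform label noise at noise rate $\eta$'': I read this as the noise rate being constant across the dataspace, i.e.\ $\eta(x) = \eta$ for $p(x)$-almost every $x$. This is precisely the hypothesis that lets the \emph{pointwise} interval of Lemma~\ref{lemma:entropy_interval} be integrated without distortion, since every point then carries the same rate.

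First I would invoke Lemma~\ref{lemma:entropy_interval} (which already assumes separability) at a generic $x$ in the support of $p(x)$. With $\eta(x) = \eta$ this reads
\begin{align*}
\mathcal{H}(\bm{u}_{\text{pair}}(\eta,c)) \;\leq\; \mathcal{H}(\widetilde{\bm{p}}(\widetilde{y}\mid x)) \;\leq\; \mathcal{H}(\bm{u}_{\text{sym}}(\eta,c)),
\end{align*}
where both endpoints are now independent of $x$. Taking $\mathbb{E}_{x\sim p(x)}$ through the inequalities (the endpoints pass through untouched, being constants) yields the global sandwich
\begin{align*}
\mathcal{H}(\bm{u}_{\text{pair}}(\eta,c)) \;\leq\; \mathcal{H} \;\leq\; \mathcal{H}(\bm{u}_{\text{sym}}(\eta,c)) \;=\; B(\eta,c),
\end{align*}
the last equality being the definition of the noise-bound (Definition~\ref{def:noise-bound}); the same upper bound also follows from Corollary~\ref{cor:worst_case_entropy}.

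From the upper half of the sandwich, $B(\eta,c) - \mathcal{H} \geq 0$, so the absolute value resolves as $|B(\eta,c) - \mathcal{H}| = B(\eta,c) - \mathcal{H}$. Subtracting the lower endpoint from $B(\eta,c)$ then bounds this nonnegative quantity:
\begin{align*}
B(\eta,c) - \mathcal{H} \;\leq\; \mathcal{H}(\bm{u}_{\text{sym}}(\eta,c)) - \mathcal{H}(\bm{u}_{\text{pair}}(\eta,c)),
\end{align*}
which is exactly the claimed bound once $\bm{u}_{\text{sym}}$ and $\bm{u}_{\text{pair}}$ are written out. The difference on the right is itself nonnegative by the ordering of endpoints in Lemma~\ref{lemma:entropy_interval}, so the reverse case $\mathcal{H}-B(\eta,c)\leq \mathcal{H}(\bm{u}_{\text{sym}}(\eta,c)) - \mathcal{H}(\bm{u}_{\text{pair}}(\eta,c))$ holds vacuously, completing the bound on $|B(\eta,c)-\mathcal{H}|$.

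The one genuinely delicate point—and the step I would be most careful to state explicitly—is the constancy of the noise rate. If the rate were instead allowed to vary while keeping only the \emph{average} equal to $\eta$, then integrating the lower endpoint $\mathcal{H}(\bm{u}_{\text{pair}}(\eta(x),c))$ could, by the concavity of the entropy function (Theorem~\ref{thm:savages}) together with Jensen's inequality, fall \emph{below} $\mathcal{H}(\bm{u}_{\text{pair}}(\eta,c))$, so the lower half of the sandwich would no longer follow from the pointwise interval alone. This is why the argument leans on $\eta(x)\equiv\eta$ rather than merely on $\mathbb{E}_{x}[\eta(x)]=\eta$: the upper bound is robust to averaging, but the lower bound that controls the size of the gap is the part that requires the uniform-rate reading of the hypothesis.
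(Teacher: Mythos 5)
Your proof is correct and takes essentially the same approach as the paper: the paper's proof is the one-line observation that the result follows immediately from Lemma~\ref{lemma:entropy_interval} when $\eta(x)$ has no dependence on $x$ (i.e.\ $\eta(x)=\eta$), which is precisely the sandwich argument you spell out, including the resolution of the absolute value via $B(\eta,c)\geq \mathcal{H}$. Your closing remark on why the constant-rate reading of ``uniform'' is essential (Jensen's inequality could push the averaged pairwise endpoint below $\mathcal{H}(\bm{u}_{\text{pair}}(\eta,c))$ if only the average rate were fixed) is a correct elaboration that the paper leaves implicit.
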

\begin{proof}
    This follows immediately from Lemma~\ref{lemma:entropy_interval} when $\eta(x)$ has no dependence on $x$ ($\eta(x)=\eta$).
\end{proof}
  As discussed previously, when noise is uniform but not symmetric, our noise-bound (Definition~\ref{def:noise-bound}) of $\mathcal{H}(1-\eta, \frac{\eta}{c-1},\frac{\eta}{c-1}, \ldots, \frac{\eta}{c-1})$ is too high since the true minimum achievable risk is lower than this bound. In other words, there exists a probability estimator which attains a risk lower than our bound. This non-optimality is the cost we incur as a result of requiring a simple, easily computable bound depending on only on the noise rate. Importantly, Corollary~\ref{cor:uniform_interval} gives us a rough way to quantify this non-optimality, using the difference between the upper and lower entropy limits
    \begin{align}\label{Equation:limits}
       \left[ \mathcal{H}\Big(1-\eta, \frac{\eta}{c-1},\frac{\eta}{c-1}, \ldots, \frac{\eta}{c-1} \Big), \mathcal{H}(1-\eta, \eta, 0, 0, \ldots, 0 )\right]
    \end{align}
  When this difference is large, one can construct two types of label noise with the same rate $\eta$, such that the difference in the minimum achievable risks between these noise types is significant. Conversely, when this gap is small, the minimum achievable risk for any type of label noise at a fixed rate $\eta$ is similar. This is a desirable property and suggests that simply setting our bound to our noise-bound is probably suitable regardless of the specifics of the noising process.

   \begin{figure}[h!]
\centering
\includegraphics[width=14cm]{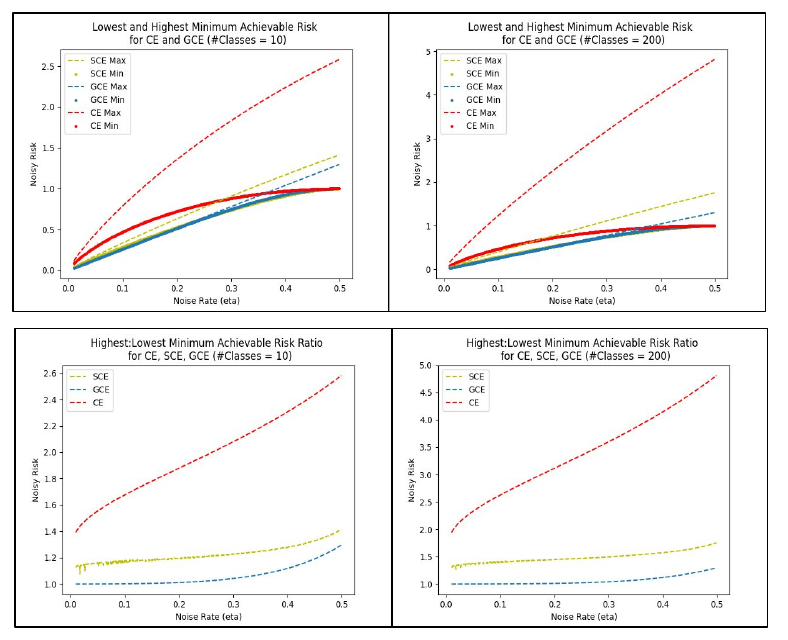}
    \caption{On the top row, we plot the upper and lower limits of $A(\eta, c)$ for $\eta\in (0, 0.5]$ from Corollary~\ref{cor:uniform_interval} for the CE (red), SCE (yellow) and GCE (blue) losses for 10 classes (left) and 200 classes (right). On the bottom row, we plot a ratio of these upper and lower limits instead. We observe that the difference between these upper and lower limits is far greater for CE than the other losses. This is more pronounced for more classes. }
\label{fig:sensitivity}
\end{figure}
  
  On the top row of Figure~\ref{fig:sensitivity}, we give a plot of the upper and lower limits of Equation~\ref{Equation:limits} for $\eta\in (0, 0.5]$ for $c=10$ (left) and $c=200$ (right) for GCE, SCE and CE. The upper limit is given by a dotted line, while the lower limit is given by a filled line in the same colour. Each loss is scaled so they may be more easily compared. Similarly, in the row below, we plot the ratios of the upper and lower limits of Equation~\ref{Equation:limits} for each loss. These graphs show that the difference between the upper and lower limits is much greater for CE than for SCE and GCE. This difference is more pronounced when the number of classes is greater. The result is that on non-symmetric noise, our noise-bound (Definition~\ref{def:noise-bound}) will generally be less suitable when used in conjunction with CE than when used with GCE or SCE.

\subsection{Noise Model Plots}
In Lemma~\ref{lemma:semiproper_gce} we showed that the SCE, GCE and FCE losses are generalised forward-correction losses and derived the corresponding functions $f$. (In fact we derived $f^{-1}$ as this turned out to be easier.) As discussed, these functions can be interpreted as noise models; $f(\bm{p}(y\vert x)) \approx \widetilde{\bm{p}}(\widetilde{y}\vert x)$. In section we provide some plots of these noise models. 

\paragraph*{Properness} While Definition~\ref{def:gen_corr} does not require the so-called base loss to be proper, Lemma~\ref{lemma:semiproper_gce} shows that GCE and SCE can be obtained by applying a non-linear correction to a proper loss. The defining characteristic of a proper loss is that the expected loss is minimised by setting $\bm{p}=\bm{q}$. Therefore,
\begin{align*}
    H_{L_f}(\bm{\tilde{p}},\bm{q}) \coloneq \bm{\tilde{p}}\cdot L_f (\bm{q}) = \bm{\tilde{p}}\cdot L(f(\bm{q})).
\end{align*}
 is minimised by setting $\bm{\tilde{p}} = f(\bm{q}) \iff \bm{q} = f^{-1}(\bm{\tilde{p}})$. We make this point because plotting $f^{-1}$ as a function of $\bm{p}$ (which we do below) is the same as plotting $\argmin_q H(\bm{p},\bm{q})$ - this allows us to include the MAE loss on this plot even though it isn't a generalised forward-correction loss. 
 


\paragraph*{Plots} In Figure~\ref{fig:semiProper}, we present plots of $f^{-1}$ for the SCE, GCE and FCE loss functions in the binary setting.  The $x-$axis gives probability of a noisy label being equal to one $\widetilde{p}(\widetilde{y}=1\mid x)$. On the $y-$axis we plot $p(y=1\mid x)$ where $\bm{p} = f^{-1}(\widetilde{\bm{p}})$. For proper losses, $f=id$, reflecting the fact that they contain encode no noise model. The graphs for GCE and SCE are remarkably similar. Their graphs portray a noise model where labels noise occurs more frequently at points where $\bm{p}$ contains higher intrinsic uncertainty. Conversely no label noise occurs at anchor points at all. FCE requires a noise model in order to be fully specified; we assume symmetric label noise at $\eta=0.4$. Varying $\eta$ will change the steepness of the respective $f^{-1}$. Finally, we plot MAE The graphs of SCE and GCE lie between those of MAE and CE. By varying the parameters of these losses, we can interpolate between them.
\begin{figure}[h!]
\centering
\includegraphics[width=9.5cm]{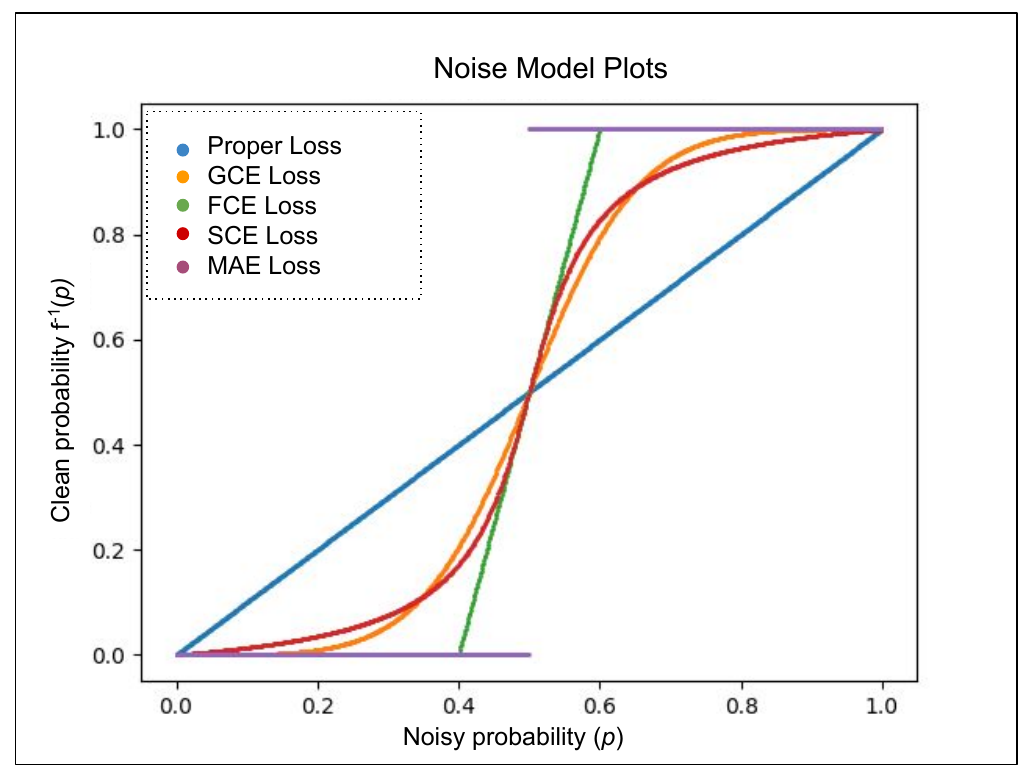}
    \caption{Plot of $f^{-1}(p)$ for SCE ($A=8$), GCE ($a=0.7$), FCE ($\eta = 0.4$), CE and MAE in the binary case. We have the true probability $p$ on the x-axis and the choice of $q$, which minimises the expected loss on the $y$-axis.}
\label{fig:semiProper}
\end{figure}

\subsection{Explicit Bounds}\label{sec:explicit_bounds}
Using Lemma~\ref{lemma:semiproper_gce} we can produce the noise-bounds (Definition~\ref{def:noise-bound}) for GCE and SCE. The bound for GCE is given below.
\begin{align*}
    B_{GCE}(\eta,c) :=\frac{(1-\eta)}{a}\left(1-\left(\frac{(1-\eta)^{\frac{1}{1-a}}}{(1-\eta)^{\frac{1}{1-a}} + (c-1)(\frac{\eta}{c-1})^{\frac{1}{1-a}}}\right)^a\right) +\\ \frac{\eta}{a}\left(1-\left(\frac{\frac{\eta}{c-1}^{\frac{1}{1-a}}}{(1-\eta)^{\frac{1}{1-a}} + (c-1)(\frac{\eta}{c-1})^{\frac{1}{1-a}}}\right)^a\right).
\end{align*}
The noise-bound for SCE is
\begin{align*}
    B_{SCE}(\eta,c) := (1-\eta)\left( -log\left(\frac{1-\eta}{\lambda - A(1-\eta)}\right) + A \left(1-  \frac{1-\eta}{\lambda - A(1-\eta)}\right)\right) \\+ \eta \left(-log\left(\frac{\eta}{\lambda(c-1) - A\eta}\right) + A \left(1-  \frac{\eta}{\lambda(c-1) - A\eta}\right) \right).
\end{align*}
Recollect that $\lambda$ is chosen so that the resulting distribution normalises: $\frac{1-\eta}{\lambda - A(1-\eta)} + \frac{\eta(c-1)}{\lambda(c-1) - A\eta} = 1$ and may be computed numerically or by solving the resulting quadratic.

\section{Further Experiments}\label{sec:further_experiments}

\subsection{Experiment Details}
The number of training epochs was the same for each loss. For MNIST, FashionMNIST, TinyImageNet and Animals10N, we used 100 epochs; for all other datasets, we used 120 epochs. Each experiment in Tables 1 and 2 (Section 6) was run three times, and the mean and unbiased estimate of the standard deviation is given. We used a ResNet18 architecture for all experiments except TinyImageNet and Animals10N, where a ResNet34 was used. Each experiment is carried out on a single GeForce GTX Titan X. We used a batch size of 300 in all experiments except TinyImageNet and Animals10N, where this is reduced to 200. A learning rate of 0.0001 was used for all losses except MAE (lr = 0.001) and ELR where we used their recommended learning rate of 0.01. We use a learning rate scheduler which scales our learning rate by 0.6 at epoch 60. Our implementation of the \textit{Truncated Loss} comes from the official github implementation of GCE. Likewise, we use the official codebase for our implementation of ELR. Other losses are re-implementations based on details given in the respective papers. Our SCE loss used the recommended hyperparameter of $A=8$. Our GCE loss used $a=0.4$. FCE requires one to define a noise model. In each case, we assume noise is symmetric at the relevant rate. For Animals10N, this rate is set to 11\%, which is the estimated noise rate.

\subsubsection{CE with Prior}
One of the losses used in our experiments is cross-entropy with a `prior' term (CEP). We give an explanation of the motivation for this additional loss term and details of how it's implemented. 

 In Section~\ref{sec:estimating_entropy} we assumed that the un-noised distribution $p(x,y)$ is separable (i.e. for each $x$, $p(y=k\mid x) = 1, p(i\neq k\mid x)=0)$ for some $k\in\mathcal{Y}$. Thus, in the case of symmetric noise with a known noise rate $\eta$, the noisy label distribution $\widetilde{\bm{p}}(\widetilde{y}\vert x)$ is of the form for each $x$:
 \begin{align}\label{Equation:prior_dist}
     \widetilde{\bm{p}}(\widetilde{y}\mid x) = \left(\frac{\eta}{c-1}, \frac{\eta}{c-1}, \ldots, \underbrace{1-\eta}_{\text{k\textsuperscript{th} position}}, \ldots, \frac{\eta}{c-1}\right)
 \end{align}
We argue, therefore, that it is reasonable to introduce a term to penalise our model when its outputs deviate from this distribution. This is achieved through a regularisation term which measures the KL-divergence between our model probabilities and the desired distribution (Equation~\ref{Equation:prior_dist}). Let $\bm{p}_{\eta} := (p_1, p_2, \ldots, p_c) := (1-\eta, \frac{\eta}{c-1}, \ldots, \frac{\eta}{c-1})$ and let $q_1,q_2, \ldots, q_c$ denote the probabilities output by our model. We sort the $q_i$ into descending order (which we denote as $q_{\sigma(i)}$) and define our prior term as:
\begin{align}
    L_{prior}(\bm{q}, \bm{p}_{\eta}) := -\sum_{i=1}^c p_{i}\log(q_{\sigma(i)}) \label{Equation:prior}
\end{align}
 Thus, overall we have $L_{CEP}(\bm{q}, i) := L_{CE}(\bm{q}, i) + L_{prior}(\bm{q}, \bm{p}_{\eta})$. Tables 1 and 2 in Section 6 show that this additional term generally results in additional improvement over using the noise-bound alone. This prior acts as a method of feasible set reduction: There are many different probability estimators which achieve a training error equal to our noise-bound. Therefore, by introducing a prior term (Equation~\ref{Equation:prior}) we can further restrict the set of admissible models.

\subsection{Varying The Bound}\label{sec:vary_bounds}
 We explore treating the bound `$B$' as a hyperparameter to assess the proximity of the noise-bound to optimality. This consists of doing a grid search in a small vicinity of the noise-bound for each loss function and recording how this impacts clean test performance. Tables~\ref{tab:full_table1},~\ref{tab:full_table2} include the result of these experiments, indicated with a star (e.g. CE$+$B*) together with the results of the other loss functions.  When varying the bound from the noise-bound doesn't yield an improvement, the starred and unstarred accuracy values are the same. In slightly over half of our experiments, we find that we may achieve an improvement by perturbing the bound. This improvement is generally minor. Our assumption that the underlying clean dataset is separable means one should be able to improve performance by raising the bound to account for the additional randomness in the label distributions. Generally, we find this to be so. An exception to this pattern are the non-uniform and asymmetric datasets. In these cases, one typically benefits from marginally \textit{lowering} the bound. This observation is consistent with our expectation; the noise-bound is a `worst-case' entropy, attained only by uniform symmetric label noise. For other noise models the noise-bound will be higher than strictly necessary to prevent overfitting and may benefit from being slightly decreased. The values of the optimal bounds may be found in a table in Appendix~\ref{sec:further_experiments}.

\begin{table}[h!]
\begin{adjustbox}{max width=1.00\textwidth}
\begin{tabular}{l|cc|cc|cc|cc|cc|cc|c}
 & \multicolumn{2}{c}{MNIST} & \multicolumn{2}{c}{Fashion} & \multicolumn{2}{c}{EMNIST} & \multicolumn{2}{c}{CIFAR10} & \multicolumn{2}{c}{CIFAR100} & \multicolumn{2}{c}{ACIFAR100} & NU-EMNIST \\
 & 0.4 & 0.6 & 0.2 & 0.4 & 0.2 & 0.4 & 0.2 & 0.4 & 0.2 & 0.4 & 0.2 & 0.4 & 0.6 \\
FCE & \cellcolor[HTML]{FFCCC9}-0.05 & \cellcolor[HTML]{FFCCC9}-0.05 & \cellcolor[HTML]{EFEFEF}0.0 & \cellcolor[HTML]{FFCCC9}-0.05 & \cellcolor[HTML]{CBCEFB}0.05 & \cellcolor[HTML]{CBCEFB}0.05 & \cellcolor[HTML]{CBCEFB}0.05 & \cellcolor[HTML]{CBCEFB}0.1 & \cellcolor[HTML]{CBCEFB}0.03 & \cellcolor[HTML]{EFEFEF}0.0 & \cellcolor[HTML]{FFCCC9}-0.1 & \cellcolor[HTML]{FFCCC9}-0.35 & \cellcolor[HTML]{FFCCC9}-0.2 \\
GCE & \cellcolor[HTML]{EFEFEF}0.0 & \cellcolor[HTML]{EFEFEF}0.0 & \cellcolor[HTML]{CBCEFB}0.05 & \cellcolor[HTML]{EFEFEF}0.0 & \cellcolor[HTML]{CBCEFB}0.03 & \cellcolor[HTML]{CBCEFB}0.05 & \cellcolor[HTML]{CBCEFB}0.05 & \cellcolor[HTML]{CBCEFB}0.05 & \cellcolor[HTML]{CBCEFB}0.05 & \cellcolor[HTML]{EFEFEF}0.0 & \cellcolor[HTML]{CBCEFB}0.05 & \cellcolor[HTML]{CBCEFB}0.05 & \cellcolor[HTML]{EFEFEF}0.0 \\
SCE & \cellcolor[HTML]{EFEFEF}0.0 & \cellcolor[HTML]{CBCEFB}0.05 & \cellcolor[HTML]{EFEFEF}0.0 & \cellcolor[HTML]{CBCEFB}0.2 & \cellcolor[HTML]{CBCEFB}0.2 & \cellcolor[HTML]{CBCEFB}0.1 & \cellcolor[HTML]{EFEFEF}0.0 & \cellcolor[HTML]{CBCEFB}0.2 & \cellcolor[HTML]{CBCEFB}0.2 & \cellcolor[HTML]{CBCEFB}0.2 & \cellcolor[HTML]{EFEFEF}0.0 & \cellcolor[HTML]{EFEFEF}0.0 & \cellcolor[HTML]{EFEFEF}0.0 \\
CEB & \cellcolor[HTML]{EFEFEF}0.0 & \cellcolor[HTML]{EFEFEF}0.0 & \cellcolor[HTML]{EFEFEF}0.0 & \cellcolor[HTML]{EFEFEF}0.0 & \cellcolor[HTML]{CBCEFB}0.05 & \cellcolor[HTML]{CBCEFB}0.05 & \cellcolor[HTML]{CBCEFB}0.05 & \cellcolor[HTML]{CBCEFB}0.05 & \cellcolor[HTML]{CBCEFB}0.1 & \cellcolor[HTML]{CBCEFB}0.1 & \cellcolor[HTML]{CBCEFB}0.2 & \cellcolor[HTML]{EFEFEF}0.0 & \cellcolor[HTML]{FFCCC9}-0.6 \\
CEP & \cellcolor[HTML]{FFCCC9}-0.15 & \cellcolor[HTML]{FFCCC9}-0.15 & \cellcolor[HTML]{FFCCC9}-0.15 & \cellcolor[HTML]{FFCCC9}-0.15 & \cellcolor[HTML]{EFEFEF}0.0 & \cellcolor[HTML]{CBCEFB}0.02 & \cellcolor[HTML]{EFEFEF}0.0 & \cellcolor[HTML]{EFEFEF}0.0 & \cellcolor[HTML]{FFCCC9}-0.08 & \cellcolor[HTML]{FFCCC9}-0.08 & \cellcolor[HTML]{FFCCC9}-0.08 & \cellcolor[HTML]{FFCCC9}-0.08 & \cellcolor[HTML]{FFCCC9}-0.1
\end{tabular}
\end{adjustbox}
\caption{table giving the offset of the `optimal' bound from the noise-bound. Here a negative (blue) number means that the bound is greater than the noise-bound. Positive (red) means the optimal bound is lower. Grey means that the optimal bound is zero, i.e. no offset.}\label{table:offsets}
\end{table}

\begin{table*}[!htb]
    \centering
    \begin{adjustbox}{max width=1.00\textwidth}
    \begin{tabular}{l|cc|cc|cc|cc|cc}
    \textbf{} & \multicolumn{2}{c|}{MNIST} & \multicolumn{2}{c|}{FashionMNIST} & \multicolumn{4}{c|}{EMNIST} & \multicolumn{2}{c}{CIFAR10} \\
\multirow{2}{*}{Losses} & \multicolumn{1}{c}{\multirow{2}{*}{0.4}} & \multicolumn{1}{c|}{\multirow{2}{*}{0.6}} & \multicolumn{1}{c}{\multirow{2}{*}{0.2}} & \multicolumn{1}{c|}{\multirow{2}{*}{0.4}} & \multicolumn{2}{c}{0.2} & \multicolumn{2}{c|}{0.4} & \multicolumn{1}{c}{\multirow{2}{*}{0.2}} & \multicolumn{1}{c}{\multirow{2}{*}{0.4}} \\
 & \multicolumn{1}{c|}{} & \multicolumn{1}{c|}{} & \multicolumn{1}{c|}{} & \multicolumn{1}{c|}{} & \multicolumn{1}{c}{Top 1} & \multicolumn{1}{c|}{Top 5} & \multicolumn{1}{c}{Top 1} & \multicolumn{1}{c|}{Top 5} & \multicolumn{1}{c|}{} & \multicolumn{1}{c}{}  \\ \hline
MSE  & $93.3_{\pm 0.47 }$& $85.8_{\pm 0.95 }$& $84.8_{\pm 0.22} $& $80.6_{\pm 0.84 }$& $82.9_{\pm 0.29}$ & $98.1_{\pm 0.04}$ & $80.2_{\pm 0.19}$ & $97.1_{\pm 0.07}$ & $78.7_{\pm 1.51}$ &   $56.4_{\pm 0.11}$ \\
MAE & $97.9_{\pm 0.08} $& $96.4_{\pm 0.08} $& $83.2_{\pm 0.10} $& $82.2_{\pm 0.37} $& $49.8_{\pm 2.83}$ & $52.2_{\pm 0.10}$ & $50.4_{\pm 1.14}$ & $51.4_{\pm 0.96}$ & $88.6_{\pm 1.34}$ &   $78.9_{\pm 5.95}$\\
NCE & $97.8_{\pm 0.06} $ & $96.0_{\pm 0.25} $ & $87.7_{\pm 0.26} $ & $\bm{86.3}_{\pm 0.14} $ & $84.5_{\pm 0.25}$ & $97.9_{\pm 0.05}$ & $82.6_{\pm 0.81}$ & $96.7_{\pm 0.03}$ &  $\textcolor{black}{\bm{89.3}_{\pm 0.40}}$ &   $\textcolor{black}{\bm{86.0}_{\pm 0.81}}$ \\
MixUp & $95.8_{\pm 1.24} $& $86.8_{\pm 0.85} $& $86.9_{\pm 0.10} $& $82.3_{\pm 0.54} $& $84.3_{\pm 0.08}$ & $98.1_{\pm 0.04}$ & $81.6_{\pm 0.48}$ & $97.1_{\pm 0.08}$ & $86.0_{\pm 0.46}$ &  $77.9_{\pm 0.49}$ \\
Sph. & $95.0_{\pm 0.41}$ & $88.1_{\pm 0.82}$ & $87.2_{\pm 0.04}$ & $84.1_{\pm 0.75}$ & $84.6_{\pm 0.12}$ & $98.3_{\pm 0.05}$ & $83.2_{\pm 0.29}$ & $98.1_{\pm 0.58}$ & $86.6_{\pm 0.01}$  & $72.1_{\pm 0.80}$\\
Boot. & $86.6_{\pm 0.56} $& $71.2_{\pm 1.17} $& $82.0_{\pm 0.61}$ & $73.4_{\pm 1.06} $ & $80.5_{\pm 0.24}$ & $96.7_{\pm 0.06}$ & $77.3_{\pm 0.98}$ & $95.0_{\pm 0.25}$ & $77.0_{\pm 1.57}$ &  $58.2_{\pm 2.99}$\\
Trunc. & $97.1_{\pm 0.12} $& $94.2_{\pm 0.39} $& $87.8_{\pm 0.29} $& $85.3_{\pm 0.77} $& $84.1_{\pm 0.53}$ & $97.4_{\pm 1.03}$ & $83.1_{\pm 0.55}$ & $97.2_{\pm 1.00}$ & $88.3_{\pm 0.56}$ &   $84.2_{\pm 0.69}$ \\ 
CL  & $82.7_{\pm 0.57}$ & $67.5_{\pm 1.83} $& $81.2_{\pm 0.34}$ & $73.1_{\pm 0.66} $ & $79.6_{\pm 0.17}$ & $96.4_{\pm 0.05}$ & $75.1_{\pm 0.67}$ & $94.2_{\pm 0.24}$ & $76.0_{\pm 2.16}$ &   $59.4_{\pm 4.20}$\\ 
ELR & $98.1_{\pm 0.04}$ & $97.8_{\pm 0.07} $& $85.3_{\pm 0.23}$ & $83.4_{\pm 0.02} $ & $81.8_{\pm 0.26}$ & $97.5_{\pm 0.21}$ & $76.6_{\pm 0.10}$ & $96.5_{\pm 0.11}$ & $88.1_{\pm 0.82}$ &   $85.7_{\pm 0.06}$\\ \hline
FCE. & $95.4_{\pm 0.25} $ & $92.3_{\pm 0.13} $ & $83.6_{\pm 0.11} $ & $79.9_{\pm 0.78} $ & $83.1_{\pm 0.12}$ & $98.4_{\pm 0.20}$ & $80.6_{\pm 0.12}$ & $98.0_{\pm 0.03}$ & $84.7_{\pm 0.40}$ &  $75.1_{\pm 0.04}$ \\
\rowcolor{green}
FCE$+$B & $\boxed{95.7_{\pm{0.18}}}$ & $\boxed{92.7_{\pm{0.74}}}$ & $\boxed{84.8_{\pm{0.26}}}$ & $\boxed{81.7_{\pm{0.27}}}$  & $\boxed{83.4_{\pm{0.09}}}$ & $\boxed{98.5_{\pm{0.03}}}$ & $\boxed{81.6_{\pm{0.51}}}$ & $\boxed{\bm{98.1}_{\pm{0.15}}}$ & $\boxed{86.7_{\pm_{0.21}}}$ & $\boxed{82.2_{\pm_{0.06}}}$\\ 
FCE$+$B* & $96.7_{\pm 0.17}$ & $94.3_{\pm 0.50}$ & $84.8_{\pm{0.26}}$ & $83.3_{\pm 0.22}$ & $84.4_{\pm 0.06}$ & $\bm{98.6}_{\pm{0.13}}$ & $83.1_{\pm 0.42}$ & $98.1_{\pm{0.10}}$ & $87.2_{\pm{0.20}}$ & $82.2_{\pm_{0.06}}$\\\hline
GCE  & $94.4_{\pm 0.36}$ & $83.8_{\pm 1.14} $ &  $86.4_{\pm 0.24}$ & $81.6_{\pm 0.37}$ & $84.3_{\pm 0.13}$ & $98.4_{\pm 0.08}$ & $82.7_{\pm 0.07}$ &   $97.9_{\pm 0.02}$ & $81.1_{\pm 0.72}$ & $60.0_{\pm 1.31} $ \\
\rowcolor{green}
GCE$+$B & $\boxed{96.6_{\pm 0.22}}$ & $\boxed{94.0_{\pm 0.13}} $ & $\boxed{86.5_{\pm 0.56}}$ & $\boxed{85.5_{\pm 0.13}}$ & $84.1_{\pm 0.29}$ & $\boxed{98.4_{\pm 0.04}}$ & $\boxed{82.8_{\pm 0.28}}$ &  $\boxed{98.0_{\pm 0.06}}$ & $\boxed{86.1_{\pm 0.22}}$ & $\boxed{79.0_{\pm 1.17}}$ \\
GCE$+$B* & $96.6_{\pm 0.22}$ & $94.0_{\pm 0.13} $ & $87.0_{\pm 0.04}$ & $85.5_{\pm 0.13}$ & $84.3_{\pm 0.09}$ & $98.4_{\pm 0.06}$ & $83.6_{\pm 0.25}$ &    $\bm{98.2}_{\pm 0.03}$ & $86.7_{\pm 0.07}$ & $80.2_{\pm 0.83}$ \\ \hline
SCE & $89.5_{\pm 5.29} $& $70.2_{\pm 0.69} $& $82.7_{\pm 0.64} $& $74.4_{\pm 0.37} $& $82.1_{\pm 0.33}$ & $96.8_{\pm 0.10}$ & $79.6_{\pm 0.61}$ & $95.4_{\pm 0.15}$ & $78.2_{\pm 0.42}$ &   $59.0_{\pm 4.43}$\\
\rowcolor{green}
SCE$+$B & $\boxed{97.0_{\pm 0.16}}$ & $\boxed{93.4_{\pm 0.29}} $& $\boxed{87.5_{\pm 0.22}}$ & $\boxed{85.2_{\pm 0.98}} $ & $\boxed{83.5_{\pm 0.29}}$ & $\boxed{97.3_{\pm 0.14}}$ & $\boxed{81.8_{\pm 0.52}}$ & $\boxed{96.4_{\pm 0.20}}$ & $\boxed{88.9_{\pm 0.44}}$ &   $\boxed{84.7_{\pm 0.37}}$\\
SCE$+$B* & $97.0_{\pm 0.16}$ & $93.7_{\pm 0.52} $& $87.5_{\pm 0.22}$ & $85.8_{\pm 0.67} $ & $83.6_{\pm 0.03}$ & $97.4_{\pm 0.02}$ & $81.8_{\pm 0.52}$ & $96.5_{\pm 0.26}$ & $88.9_{\pm 0.44}$ &   $84.9_{\pm 0.20}$\\ \hline
CE & $80.8_{\pm 2.31}$ & $67.3_{\pm 0.80} $& $80.9_{\pm 1.11} $& $72.1_{\pm 2.16} $& $79.9_{\pm 0.28}$ & $96.4_{\pm 0.08}$ & $75.6_{\pm 0.20}$ & $94.2_{\pm 0.24}$ & $76.9_{\pm 1.22}$  &   $59.9_{\pm 2.15}$ \\
\rowcolor{green}
CE$+$B & $\boxed{96.2_{\pm 0.32}}$ & $\boxed{93.0_{\pm 0.09}}$& $\boxed{87.9_{\pm 0.10}}$ & $\boxed{84.7_{\pm 0.37}} $ & $\boxed{80.8_{\pm 0.08}}$ & $\boxed{97.0_{\pm 0.04}}$ & $\boxed{78.9_{\pm 0.12}}$ & $\boxed{96.1_{\pm 0.26}}$ & $\boxed{84.5_{\pm 0.73}}$ &   $\boxed{76.0_{\pm 1.13}}$\\ 
CE$+$B* & $96.2_{\pm 0.32}$ & $93.0_{\pm 0.09}$ & $87.9_{\pm 0.10}$ & $84.7_{\pm 0.37} $ & $81.5_{\pm 0.11}$ & $97.3_{\pm 0.02}$ & $79.0_{\pm 0.09}$ &  $96.2_{\pm 0.01}$ & $84.8_{\pm 0.55}$ &   $78.6_{\pm 1.28}$\\ \hline
CEP & $97.5_{\pm 0.08 }$ & $92.1_{\pm 0.44} $ & $87.8_{\pm 0.12 }$ & $84.8_{\pm 0.23}$ & $85.5_{\pm 0.10}$ & $98.1_{\pm 0.07}$ & $84.3_{\pm 0.22}$ & $97.6_{\pm 0.14}$ & $84.2_{\pm 0.51}$ &  $58.2_{\pm 2.94}$\\
\rowcolor{green}
CEP$+$B & $95.6_{\pm 0.32}$ & $85.5_{\pm 0.77} $& $\boxed{88.1_{\pm 0.31}}$ & $84.2_{\pm 0.33} $ &  $\boxed{\bm{85.8}_{\pm 0.12}}$ & $\boxed{98.3_{\pm 0.02}}$ &  $\boxed{\bm{84.8}_{\pm 0.10}}$ & $\boxed{98.0_{\pm 0.04}}$ & $\boxed{88.5_{\pm 0.32}}$ &   $\boxed{85.1_{\pm 0.20}}$\\ 
CEP$+$B* &  $\bm{98.5}_{\pm 0.05}$ &  $\bm{97.9}_{\pm 0.11}$ &  $\bm{88.4}_{\pm 0.04}$ &  $\bm{87.2}_{\pm 0.21} $ & $85.8_{\pm 0.12}$ & $98.3_{\pm 0.02}$ & $84.8_{\pm 0.10}$ & $98.0_{\pm 0.16}$ & $88.5_{\pm 0.32}$ &   $85.1_{\pm 0.20}$\\
\end{tabular}
\end{adjustbox}
    \caption{Test accuracies obtained by using different losses on the noisy MNIST/ FashionMNIST/EMNIST/CIFAR10 datasets. Losses implementing the noise-bound shaded in blue. When using this bound provides benefit, the corresponding value is \boxed{boxed}. Overall top values in \textbf{bold}.}\label{tab:full_table1}
    \centering
\end{table*}

\begin{table*}[!htb]
    \centering
    \begin{adjustbox}{max width=0.98\textwidth}
    \begin{tabular}{l|cccc|cccc|cc}
    \textbf{}  & \multicolumn{4}{c|}{CIFAR100} & \multicolumn{4}{c|}{ASYM-CIFAR100} & \multicolumn{2}{c}{Non-Uniform-EMNIST} \\
\multirow{2}{*}{Losses} & \multicolumn{2}{c|}{0.2} & \multicolumn{2}{c|}{0.4} & \multicolumn{2}{c|}{0.2} & \multicolumn{2}{c|}{0.4} & \multicolumn{2}{c}{0.6} \\
  \multicolumn{1}{c|}{} & Top1 & \multicolumn{1}{c|}{Top5} & Top1 & Top5 & Top1 & \multicolumn{1}{c|}{Top5} & Top1 & Top5 & \multicolumn{1}{c}{Top 1} & \multicolumn{1}{c}{Top 5} \\ \hline
 MSE & $57.2_{\pm 0.93}$ & $78.6_{\pm 0.25}$ & $40.6_{\pm 0.38}$ & $63.0_{\pm 0.24}$ & $56.3_{\pm 0.11}$  & $82.6_{\pm 0.22}$  & $40.7_{\pm 0.12}$ & $74.4_{\pm 0.25}$ & $44.7_{\pm 2.66}$ & $86.7_{\pm 3.10}$ \\
MAE & $10.0_{\pm 0.11}$  & $13.8_{\pm 0.28}$ & $7.6_{\pm 1.89}$  & $11.6_{\pm 1.25}$ & $7.1_{\pm 6.02}$  & $11.1_{\pm 6.6}$  & $11.1_{\pm 5.43}$  & $25.1_{\pm 5.76}$ &  $9.8_{\pm 1.74}$ & $23.1_{\pm 1.80}$\\
NCE  & $38.7_{\pm 3.13}$ & $51.8_{\pm 3.77}$ & $19.1_{\pm 0.20}$ & $28.8_{\pm 0.15}$ & $16.3_{\pm 1.24}$ & $25.4_{\pm 1.80}$  & $21.8_{\pm 1.24}$ & $37.2_{\pm 1.80}$  & $18.0_{\pm 1.17}$ &  $38.8_{\pm 1.93}$ \\
MixUp  & $59.6_{\pm 0.31}$ & $81.5_{\pm 0.39}$ & $51.3_{\pm 8.63}$ & $75.8_{\pm 8.09}$ & $61.2_{\pm 0.88}$ & $86.0_{\pm 1.12}$  & $47.2_{\pm 0.60}$ & $81.3_{\pm 0.23}$ &   $\bm{52.4}_{\pm 0.80}$ & $95.5_{\pm 0.08}$ \\
Sph.   & $57.7_{\pm 0.18}$ & $82.9_{\pm 0.54}$ & $48.8_{\pm 0.51}$ & $74.3_{\pm 0.73}$ & $54.2_{\pm 0.32}$ & $81.2_{\pm 0.29}$  & $39.2_{\pm 0.31}$ & $72.1_{\pm 0.15}$ & $41.9_{\pm 0.10}$ &  $94.4_{\pm 0.04}$ \\
Boot.  & $54.0_{\pm 0.37}$ & $76.4_{\pm 0.39}$ & $37.7_{\pm 0.89}$ & $60.9_{\pm 1.52}$ & $56.0_{\pm 0.34}$ & $83.8_{\pm 0.03}$  & $43.2_{\pm 0.35}$ & $78.3_{\pm 0.20}$ &  $49.1_{\pm 0.29}$ & $95.3_{\pm 0.42}$ \\
Trunc. & $58.1_{\pm 0.36}$ & $82.7_{\pm 0.37}$ & $50.9_{\pm 1.17}$ & $77.2_{\pm 0.59}$ & $56.3_{\pm 0.62}$ & $82.3_{\pm 0.61}$  & $45.2_{\pm 0.81}$ & $75.6_{\pm 0.29}$ &  $23.7_{\pm 0.98}$ & $40.1_{\pm 1.24}$ \\
CL & $53.0_{\pm 0.21}$ & $76.3_{\pm 0.19}$ & $36.3_{\pm 0.77}$ & $60.1_{\pm 0.66}$ & $55.3_{\pm 0.48}$ & $83.5_{\pm 0.28}$  & $42.4_{\pm 0.45}$ & $78.1_{\pm 0.14}$ &  $48.2_{\pm 0.45}$ & $95.0_{\pm 0.04}$ \\
ELR & $10.4_{\pm 0.24}$ & $31.7_{\pm 0.44}$ & $10.0_{\pm 0.64}$ & $30.1_{\pm 0.88}$ & $10.8_{\pm 0.21}$ & $32.7_{\pm 0.53}$  & $10.3_{\pm 0.39}$ & $30.8_{\pm 0.35}$ &  $40.3_{\pm 0.39}$ & $93.0_{\pm 0.24}$ \\\hline
FCE & $56.9_{\pm 0.58}$ & $79.2_{\pm 0.14}$ & $43.7_{\pm 0.15}$ & $66.2_{\pm{0.19}}$ & $55.3_{\pm 0.54}$ & $83.5_{\pm 0.24}$  & $41.4_{\pm 0.55}$ & $77.3_{\pm 0.75}$ &  $39.0_{\pm 0.05}$ & $67.8_{\pm 0.47}$ \\
\rowcolor{green}
FCE$+$B & $56.1_{\pm 2.22}$ & $\boxed{81.8_{\pm 1.37}}$ & $\boxed{50.2_{\pm 0.02}}$ & $\boxed{77.2_{\pm 0.19}}$ & $54.2_{\pm 0.44}$ & $83.3_{\pm 0.43}$  & $\boxed{43.8_{\pm 0.02}}$ & $\boxed{77.5_{\pm 0.13}}$ &  $\boxed{40.0_{\pm 0.35}}$ & $\boxed{73.2_{\pm 0.08}}$ \\
FCE$+$B* & $56.1_{\pm 2.22}$ & $82.2_{\pm 0.39}$ & $50.2_{\pm 0.02}$ & $77.2_{\pm 0.19}$ & $54.2_{\pm 0.44}$ & $83.4_{\pm 0.24}$  & $45.1_{\pm 0.37}$ & $79.9_{\pm 0.24}$ &  $43.1_{\pm 0.40}$ & $79.4_{\pm 0.12}$ \\\hline
GCE  & $60.0_{\pm 0.13}$ & $82.6_{\pm 0.63}$ & $44.9_{\pm 0.07}$ & $67.2_{\pm 0.34}$ & $53.8_{\pm 0.55}$ & $81.6_{\pm 0.14}$  & $39.4_{\pm 0.44}$ & $74.0_{\pm 0.36}$ &  $44.8_{\pm 0.62}$ & $91.2_{\pm 0.70}$ \\
\rowcolor{green}
 GCE$+$B & $59.4_{\pm 0.02}$ & $\boxed{83.5_{\pm 0.24}}$  & $\boxed{50.3_{\pm 0.11}}$ & $\boxed{75.3_{\pm 0.64}}$ & $\boxed{55.4_{\pm 0.55}}$ & $\boxed{83.0_{\pm 0.35}}$  & $\boxed{46.5_{\pm 1.44}}$ & $\boxed{77.7_{\pm 0.35}}$ &  $\boxed{47.1_{\pm 0.20}}$ & $\boxed{93.5_{\pm 0.43}}$ \\
GCE$+$B* & $61.0_{\pm 1.33}$ & $83.9_{\pm 0.74}$  & $50.3_{\pm 0.11}$ & $75.3_{\pm 0.64}$ & $56.6_{\pm 0.10}$ & $83.8_{\pm 0.88}$  & $47.7_{\pm 0.35}$ & $77.9_{\pm 0.03}$ &   $47.1_{\pm 0.20}$ & $93.5_{\pm 0.43}$ \\\hline
SCE  & $55.9_{\pm 0.53}$ & $76.5_{\pm 0.15}$ & $38.7_{\pm 0.60}$ & $60.9_{\pm 0.41}$ & $57.5_{\pm 0.19}$ & $83.7_{\pm 0.17}$  & $43.3_{\pm 0.87}$ & $77.5_{\pm 0.75}$ &  $47.2_{\pm 0.33}$ & $92.5_{\pm 0.01}$\\
\rowcolor{green}
SCE$+$B & $55.5_{\pm 0.90}$ & $\boxed{77.4_{\pm 0.84}}$ & $\boxed{47.1_{\pm 1.32}}$ & $\boxed{69.2_{\pm 1.18}}$ & $\boxed{57.9_{\pm 0.83}}$ & $83.7_{\pm 0.41}$  & $\boxed{50.0_{\pm 1.62}}$ & $\boxed{80.4_{\pm 0.65}}$ &  $\boxed{47.9_{\pm 0.80}}$ & $\boxed{93.8_{\pm 0.05}}$ \\
SCE$+$B* & $56.6_{\pm 1.07}$ & $78.5_{\pm 0.88}$ & $47.3_{\pm 1.16}$ & $69.6_{\pm 0.90}$ & $57.9_{\pm 0.83}$ & $83.7_{\pm 0.41}$  & $50.0_{\pm 1.62}$ & $80.4_{\pm 0.65}$ &  $47.9_{\pm 0.80}$ & $93.8_{\pm 0.05}$ \\\hline
CE & $52.3_{\pm 1.35}$ & $75.6_{\pm 0.93}$ & $35.3_{\pm 1.14}$ & $59.3_{\pm 0.81}$ & $54.9_{\pm 0.12}$ & $83.3_{\pm 0.25}$ & $42.4_{\pm 0.16}$ & $78.9_{\pm 0.56}$ &  $48.6_{\pm 0.11}$ & $95.3_{\pm 0.10}$ \\
\rowcolor{green}
CE$+$B & $\boxed{50.9_{\pm 1.01}}$ & $\boxed{76.5_{\pm 0.86}}$ & $\boxed{39.9_{\pm 1.02}}$ & $\boxed{65.8_{\pm 1.19}}$ & $52.9_{\pm 1.86}$ & $83.2_{\pm 0.88}$  & $34.7_{\pm 2.51}$ & $73.4_{\pm 1.50}$  &  $45.5_{\pm 5.11}$ & $93.0_{\pm 0.16}$ \\
CE$+$B* & $50.9_{\pm 1.01}$ & $78.2_{\pm 1.16}$ & $39.9_{\pm 1.02}$ & $68.1_{\pm 0.63}$ & $53.3_{\pm 0.89}$ & $83.2_{\pm 0.88}$  & $45.9_{\pm 0.40}$ & $79.7_{\pm 0.29}$  & $50.2_{\pm 0.35}$ &  $\bm{95.9}_{\pm 0.14}$ \\ \hline
CEP  & $58.8_{\pm 0.87}$ & $78.6_{\pm 0.38}$ & $43.5_{\pm 0.24}$ & $65.1_{\pm 1.27}$ & $59.4_{\pm 0.08}$ & $82.2_{\pm 0.03}$  & $46.5_{\pm 0.17}$ & $76.4_{\pm 0.25}$ &  $48.2_{\pm 0.05}$ & $95.4_{\pm 0.07}$ \\
\rowcolor{green}
CEP$+$B &  $\boxed{\bm{62.3}_{\pm 0.87}}$ &  $\boxed{\bm{85.1}_{\pm 0.46}}$ & $\boxed{54.3_{\pm 0.86}}$ & $\boxed{79.2_{\pm 0.93}}$ & $\boxed{63.0_{\pm 0.92}}$ &  $\boxed{\bm{87.5}_{\pm 0.32}}$  & $\boxed{53.0_{\pm 0.28}}$ & $\boxed{82.8_{\pm 0.13}}$ &  $45.0_{\pm 0.48}$ & $95.0_{\pm 0.08}$\\
CEP$+$B* & $62.9_{\pm 0.79}$ & $85.1_{\pm 0.46}$ &  $\bm{55.3}_{\pm 0.37}$ &  $\bm{79.8}_{\pm 0.08}$ &  $\bm{63.0}_{\pm 0.14}$ & $87.5_{\pm 0.32}$  &  $\bm{55.6}_{\pm 0.66}$ &  $\bm{83.8}_{\pm 0.11}$ &   $47.7_{\pm 0.19}$ & $95.9_{\pm 0.23}$
    \end{tabular}
    \end{adjustbox}
    \caption{Test accuracies for different losses on the noisy CIFAR100/Asym-CIFAR100/Non-Uniform EMNIST datasets. Losses implementing the noise-bound shaded in blue. When using this bound provides benefit, the corresponding value is \boxed{boxed}. Overall top values in \textbf{bold}. }\label{tab:full_table2}
\end{table*}

\begin{table}[h!]
\centering
\begin{adjustbox}{max width=0.8\textwidth}
\begin{tabular}{c|lllll}
\multicolumn{1}{l}{} & \multicolumn{2}{c}{TinyImageNet (0.2)}  & \multicolumn{2}{c}{TinyImageNet (0.4)} & Animals \\\cmidrule(lr){2-3} \cmidrule(lr){4-5}
Losses & \multicolumn{1}{l}{Top 1} & \multicolumn{1}{l}{Top 5} & \multicolumn{1}{l}{Top 1} & \multicolumn{1}{l}{Top 5} & \multicolumn{1}{l}{} \\\hline
L2 (MSE) & $42.91$ & $67.02$ & $29.42$ & $53.13$ & $80.97$ \\
MAE & $3.86$ & $5.58$ & $3.94$ & $5.54$ & $54.67$ \\
NCE-MAE & $7.63$ & $10.24$ & $6.29$ & $10.70$ & $80.85$ \\
Mix-Up & $47.13$ & $70.08$ & $31.05$ & $58.96$ & $83.76$ \\
Bootstrap & $40.04$ & $61.94$ & $25.69$ & $46.65$ & $82.11$ \\
Truncated & $43.35$ & $63.67$ & $38.14$ & $59.99$ & $81.69$ \\
Mix-Up & $47.13$ & $70.08$ & $31.05$ & $58.96$ & $83.10$ \\ 
Curriculum & $41.81$ & $64.53$ & $27.57$ & $48.84$ & $81.68$  \\
ELR & $44.95$ & $66.65$ & $34.66$ & $55.72$ & $82.62$\\\hline
FCE & $43.81$ & $64.97$ & $48.85$ & $29.92$ & $81.82$ \\\rowcolor{green}
FCE$+$B &  $\boxed{51.18}$ &  $\boxed{73.79}$ & $46.34$ & $\boxed{69.92}$  &  $\boxed{82.40}$ \\\hline
GCE & $39.81$ & $60.51$ & $26.93$ & $45.17$ & $81.13$  \\\rowcolor{green}
GCE$+$B & $\boxed{47.40}$ & $\boxed{71.37}$ & $\boxed{39.13}$ & $\boxed{63.75}$ &  $\boxed{81.37}$ \\\hline
SCE & $39.81$ & $60.51$ & $26.93$ & $45.17$ & $82.59$  \\\rowcolor{green}
CE & $39.34$ & $61.82$ & $25.84$ & $46.08$ & $81.45$ \\\rowcolor{green}
CE$+$B & $38.47$ & $\boxed{61.85}$ & $\boxed{30.00}$ & $\boxed{52.61}$ & $80.72$ \\ \hline
CEP & $44.39$ & $64.56$ & $33.33$ & $51.45$ & $82.06$  \\\rowcolor{green}
CEP$+$B & $\boxed{47.85}$ & $\boxed{71.00}$ & $\boxed{40.56}$ & $\boxed{65.15}$  & $81.79$ \\ \hline
\end{tabular}
\end{adjustbox}
\vspace{1mm}
\caption{Test accuracies obtained by using different losses on the noisy TinyImageNet and Animals10N datasets. Losses implementing the noise-bound are shaded in blue. When using this bound provides benefit, the corresponding value is \boxed{boxed}. Overall top values are in \textbf{bold}.}
\label{table:imagenet}
\end{table}

\subsubsection{Optimal Bounds}
In our experiment tables in Section~\ref{sec:vary_bounds}, we give results using our noise-bounds. We additionally give results where the bound is treated as a hyperparameter. We do not search over the entire space; rather, we do a grid search near the noise-bound. For MNIST, FashionMNIST, EMNIST, CIFAR10 and CIFAR100, we search over $\{-0.2, -0.15, -0.1, \ldots,  0.15, 0.2\}$ where e.g. $0.2$ means that we add $0.2$ onto our noise-bound ($B(\eta,c)\mapsto B(\eta,c) + 0.2$). For Asymmetric CIFAR100 (ACIFAR100) and Non-uniform EMNIST (NU-EMNIST), this range is broadened to $\{-0.6, -0.55, \ldots  0.55, 0.6\}$. The bounds which give the best results are given in Table~\ref{table:offsets}. When the optimal bound is higher than the noise-bound, this is highlighted in blue. Otherwise, the cell is indicated in red. In our original table, we have columns for Top1 and Top5 accuracy which often have slightly different optimal bounds. For brevity, we combine these by taking a mean of these values.

\end{document}